\theoremstyle{plain}
\newtheorem{theorem}{Theorem}[section]
\newtheorem{proposition}[theorem]{Proposition}
\newtheorem{lemma}[theorem]{Lemma}
\newtheorem{corollary}[theorem]{Corollary}
\theoremstyle{definition}
\newtheorem{definition}[theorem]{Definition}
\theoremstyle{remark}
\definecolor{zaffre}{RGB}{0, 20, 168}
\begin{document}

%

%

\twocolumn[

\aistatstitle{Tighter Confidence Bounds for Sequential Kernel Regression}

\aistatsauthor{Hamish Flynn \And David Reeb}

\aistatsaddress{Universitat Pompeu Fabra\\ Barcelona, Spain \And  Bosch Center for Artificial Intelligence\\ Renningen, Germany} ]

\begin{abstract}
Confidence bounds are an essential tool for rigorously quantifying the uncertainty of predictions. They are a core component in many sequential learning and decision-making algorithms, with tighter confidence bounds giving rise to algorithms with better empirical performance and better performance guarantees. In this work, we use martingale tail inequalities to establish new confidence bounds for sequential kernel regression. Our confidence bounds can be computed by solving a conic program, although this bare version quickly becomes impractical, because the number of variables grows with the sample size. However, we show that the dual of this conic program allows us to efficiently compute tight confidence bounds. We prove that our new confidence bounds are always tighter than existing ones in this setting. We apply our confidence bounds to kernel bandit problems, and we find that when our confidence bounds replace existing ones, the KernelUCB (GP-UCB) algorithm has better empirical performance, a matching worst-case performance guarantee and comparable computational cost.
\end{abstract}

\section{INTRODUCTION}

A confidence sequence for an unknown function $f^{\star}$ is a sequence of sets $\mathcal{F}_1, \mathcal{F}_2, \dots$, where each set $\mathcal{F}_t$ contains all functions that could plausibly be $f^{\star}$, given all data available at time step $t$. For any point $x$, the corresponding upper confidence bound, $\max_{f \in \mathcal{F}_t}\{f(x)\}$, is thus the largest value that $f^{\star}(x)$ could plausibly take, given all data available at time step $t$.

The uncertainty estimates provided by confidence bounds are exceptionally useful in sequential learning and decision-making problems, such as bandits \citep{lattimore2020bandit} and reinforcement learning \citep{sutton2018reinforcement}, where they can be used to design exploration strategies \citep{auer2002using}, establish stopping criteria \citep{jamieson2014lil} and guarantee safety \citep{sui2015safe, berkenkamp2017safe}. In general, tighter confidence bounds give rise to sequential learning and decision-making algorithms with better empirical performance and better performance guarantees. As many modern algorithms use uncertainty estimates for flexible nonparametric function approximators, such as kernel methods, it is vital to establish tight confidence bounds for infinite-dimensional function classes.

In this work, we develop a new confidence sequence for sequential kernel regression. The corresponding upper confidence bound at any point $x$ is the solution of an infinite-dimensional convex program in the kernel Hilbert space. This can be reformulated using the representer theorem, yielding a finite-dimensional convex program, and our Kernel CMM-UCB method. However, computing the solution becomes expensive, because the number of variables grows with the sample size. One solution is to use an analytic upper bound on the exact confidence bound, as we do in our Kernel AMM-UCB method. Unfortunately, this analytic confidence bound can be noticeably looser. A better solution is to compute confidence bounds by solving the Lagrangian dual of the original convex program, which can be done cheaply. This results in our Kernel DMM-UCB method, which achieves the best of both worlds: cheap and tight confidence bounds (see Fig.\ \ref{fig:ucbs}).

\begin{figure*}
\centering
\includegraphics[width=0.91\textwidth]{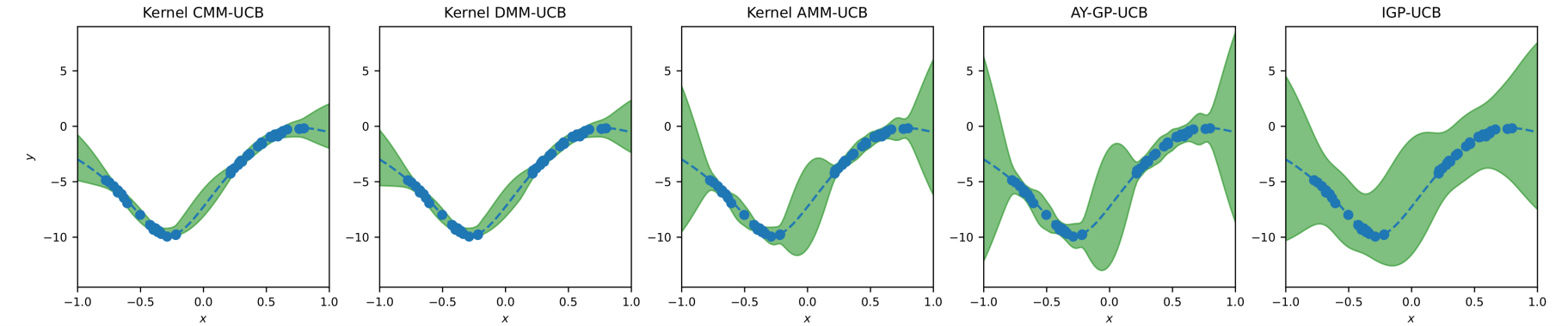}
\vspace{-2mm}
\caption{The upper and lower confidence bounds used by our Kernel CMM-UCB method (left), the confidence bounds from \citep{abbasi2012online} (AY-GP-UCB) (middle-right), and the confidence bounds used by Improved GP-UCB (IGP-UCB) \citep{chowdhury2017kernelized} (right) for a Mat\'{e}rn kernel test function with smoothness $\nu = 3/2$ and lengthscale $\ell = 0.5$. Our Kernel CMM-UCB method -- and its relaxed versions DMM-UCB and AMM-UCB -- produces confidence bounds that are visibly closer to the ground-truth function (dashed line) than those of AY-GP-UCB and IGP-UCB (cf.\ also Sec.\ \ref{sec:tighter_conf}).}
\label{fig:ucbs}
\end{figure*}

Our new confidence bounds can be used in kernel bandit algorithms, such as KernelUCB/GP-UCB. We show that using our confidence bounds preserves the worst-case performance guarantees of KernelUCB. In doing so, we refine the standard KernelUCB regret analysis by using new bounds on the number of large elliptical potentials. With these new tools, we show that in the noiseless setting, KernelUCB with our confidence bounds enjoys a $\mathrm{polylog}(T)$ cumulative regret bound when the RBF kernel is used, and a sub-$\sqrt{T}$ regret bound when the Mat\'{e}rn kernel with smoothness $\nu \geq d/\sqrt{2}$ is used. Finally, we demonstrate that the empirical performance of KernelUCB can be vastly improved by using our new confidence bounds.

\section{PROBLEM STATEMENT}
\label{sec:problem}

We consider a sequential kernel regression problem with covariates $x_t \in \mathcal{X}$ and responses $y_t \in \mathbb{R}$, where
\begin{equation*}
y_t = f^{\star}(x_t) + \epsilon_t.
\end{equation*}
$f^{\star} \in \mathcal{H}$ is an unknown function in a reproducing kernel Hilbert space (RKHS) $\mathcal{H}$ and $\epsilon_1, \epsilon_2, \dots$ are noise variables. We allow the covariates to be generated in an arbitrary sequential fashion, meaning each $x_t$ can depend on $x_1, y_1, \dots, x_{t-1}, y_{t-1}$. We assume that the RKHS norm of $f^{\star}$ is bounded by $B$, i.e.\ $\norm{f^{\star}}_{\mathcal{H}} \leq B$. For most of the paper, we assume that, conditioned on $x_1, y_1, \dots, x_{t-1}, y_{t-1}, x_t$, each $\epsilon_t$ is a $\sigma$-sub-Gaussian random variable. In Sec. \ref{sec:noiseless}, we consider the noiseless setting, where $\epsilon_t \equiv 0$, and we observe $y_t = f^{\star}(x_t)$.

Our first aim is to construct a \emph{confidence sequence} for the function $f^{\star}$, which we now define formally. For any level $\delta \in (0, 1]$, a $(1 - \delta)$-confidence sequence is a sequence $\mathcal{F}_1, \mathcal{F}_2, \dots$ of subsets of $\mathcal{H}$, such that each $\mathcal{F}_t$ can be calculated using the data $x_1, y_1, \dots, x_t, y_t$ and the sequence satisfies the coverage condition
\begin{equation*}
\mathbb{P}_{\substack{x_1, x_2, \dots \\ y_1, y_2, \dots}}\left[\,\forall t \geq 1:\, f^{\star} \in \mathcal{F}_t\,\right]\, \geq \,1 - \delta.
\end{equation*}
In words, this coverage condition says that with high probability (over the random draw of $x_1, y_1, x_2, y_2, \dots$), $f^{\star}$ lies in $\mathcal{F}_t$ for all times $t \geq 1$ simultaneously. We call each set in a confidence sequence a confidence set. Our second aim is to construct \emph{confidence bounds} for the value of $f^{\star}(x)$ at every $x \in \mathcal{X}$. For any level $\delta \in (0, 1]$, an anytime-valid $(1 - \delta)$-upper confidence bound (UCB) is a sequence $u_1, u_2, \dots$ of functions on $\mathcal{X}$, such that each $u_t$ can be calculated using the data $x_1, y_1, \dots, x_t, y_t$, and the sequence satisfies
\begin{equation*}
\mathbb{P}_{\substack{x_1, x_2, \dots \\ y_1, y_2, \dots}}\left[\,\forall t \geq 1, \forall x \in \mathcal{X}: f^{\star}(x) \leq u_t(x)\,\right]\, \geq \,1 - \delta.
\end{equation*}
This condition says that with high probability, $u_t(x)$ is an upper bound on $f^{\star}(x)$ for all $t \geq 1$ and all $x \in \mathcal{X}$ simultaneously. The definition of an anytime-valid lower confidence bound (LCB) is analogous. If we already have a confidence sequence, we can use it to construct upper and lower confidence bounds. In particular, if $\mathcal{F}_1, \mathcal{F}_2, \dots$ is a $(1-\delta)$-confidence sequence, then with probability at least $1 - \delta$, for all $t \geq 1$ and all $x \in \mathcal{X}$, we have\vspace{-1mm}
\begin{equation}\label{eqn:confidence_bounds}
\min_{f \in \mathcal{F}_t}\left\{f(x)\right\}\, \leq\, f^{\star}(x) \,\leq\, \max_{f \in \mathcal{F}_t}\left\{f(x)\right\}.
\end{equation}
Therefore, as long as we can solve the maximisation/minimisation problems in (\ref{eqn:confidence_bounds}), we can compute anytime-valid confidence bounds from a confidence sequence. We use $\mathrm{UCB}_{\mathcal{F}_t}(x)$ to denote the RHS of (\ref{eqn:confidence_bounds}), i.e.\ $\mathrm{UCB}_{\mathcal{F}_t}(x) := \max_{f \in \mathcal{F}_t}\left\{f(x)\right\}$, and $\mathrm{LCB}_{\mathcal{F}_t}(x)$ to denote the LHS of (\ref{eqn:confidence_bounds}).

\paragraph{Notation.} For any $n \in \mathbb{N}$, $[n] := \{1, \dots, n\}$. We use matrix notation to denote the inner product $f_1^{\top}f_2 := \inner{f_1}{f_2}_{\mathcal{H}}$ and outer product $f_1f_2^{\top} := f_1\inner{f_2}{\cdot}_{\mathcal{H}}$ between two functions $f_1, f_2 \in \mathcal{H}$. We denote the kernel function by $k(\cdot,\cdot)$. For a sequence of observations $x_1, \dots, x_t$, we define $\bs{k}_t(x) := [k(x, x_1), \dots, k(x, x_t)]^{\top}$ and $\Phi_t := [k(x_1, \cdot), \dots, k(x_t, \cdot)]^{\top}$. We let $\bs{K}_t := \{k(x_i, x_j)\}_{i \in [t], j \in [t]} = \Phi_t\Phi_t^{\top}$ denote the kernel matrix and $\bs{V}_t := \sum_{s=1}^{t}k(\cdot, x_s)k(\cdot, x_s)^{\top} = \Phi_t^{\top}\Phi_t$ the empirical covariance operator. For $\alpha \geq 0$, we define
\begin{align}
\mu_{\alpha, t}(x) \,&:=\, \bs{k}_t(x)^{\top}(\bs{K}_t + \alpha\bs{I})^{-1}\bs{y}_t,\label{eqn:kernel_ridge}\\
\rho_{\alpha, t}^2(x) \,&:=\, k(x,x) - \bs{k}_t(x)^{\top}(\bs{K}_t + \alpha\bs{I})^{-1}\bs{k}_t(x).\label{eqn:kernel_ridge_var}
\end{align}

$\mu_{\alpha, t}(x)$ is the kernel ridge estimate (evaluated at $x$) or, equivalently, the predictive mean of a standard Gaussian process (GP) posterior with noise $\alpha$. $\rho_{\alpha, t}^2(x)$ is the predictive variance of a standard GP posterior.

\section{RELATED WORK}
Several confidence sequences/bounds have been proposed for the sequential kernel regression problem that we consider. The confidence bounds in \citep{srinivas2010gaussian} are perhaps the most well-known. The two most widely used were developed by \citet{abbasi2011improved, abbasi2012online} and \citet{chowdhury2017kernelized}, using concentration inequalities for self-normalised processes \citep{pena2004self, pena2009self}. Subsequently, \citet{durand2018streaming} derived empirical Bernstein-type versions of these confidence bounds, and \citet{whitehouse2023sublinear} showed that for certain kernels, these confidence bounds can be tightened by regularising in proportion to the smoothness of the kernel. Confidence sequences for this setting have also been derived using online-to-confidence-set conversions \citep{abbasi2012onlinetoconc, abbasi2012online}.

With applications Bayesian optimisation and kernel bandits in mind, \citet{neiswanger2021uncertainty} and \citet{emmenegger2023likelihood} developed confidence sequences using martingale tail bounds for sequential likelihood ratios. However, cumulative regret bounds are not established in \citep{neiswanger2021uncertainty}, and the regret bound in \citep{emmenegger2023likelihood} is worse than the regret bounds in \citep{abbasi2012online} and in this paper.

In the setting where $\epsilon_1, \epsilon_2, \dots$ are bounded deterministic errors, \citet{scharnhorst2022robust} derived confidence bounds using convex programming and Lagrangian duality. However, these confidence bounds are not comparable to ours. In our setting, the random errors $\epsilon_1, \epsilon_2, \dots$ average out to 0, and our confidence bounds contract around $f^{\star}$. This is not the case for the confidence bounds in \citep{scharnhorst2022robust}.

The work most closely related to ours is \citep{flynn2023improved}, in which an improved confidence sequence for linear bandits is developed. While our Eq. (\ref{eqn:radius_def}) is a simple kernel generalisation of their Eq.\ (5), deriving confidence bounds is more involved in our kernel setting. In particular, $\max_{f \in \mathcal{F}_t}\{f(x)\}$ is now an infinite-dimensional optimisation problem, which makes our DMM-UCB method vital for balancing tightness and efficiency. Moreover, even in the simpler linear setting, the regret analysis in \citep{flynn2023improved} fails to provide a sub-$\sqrt{T}$ rate as $\sigma$ goes to zero.


\section{CONFIDENCE BOUNDS FOR KERNEL REGRESSION}
\label{sec:confidence_bounds}

In this section, we describe our new confidence sequences and bounds for sequential kernel regression.

\subsection{Martingale Mixture Tail Bounds}\label{sec:mm_tail_bounds}

We use a martingale tail bound from \citep{flynn2023improved} to derive a data-dependent constraint for $f^{\star}$. We begin by describing the general setting in which it holds. There is a stochastic process $(Z_t(g)| t \in \mathbb{N}, g \in \mathbb{R})$, indexed by a time $t$ and a real number $g$, which is adapted to a filtration $(\mathcal{D}_t|t \in \mathbb{N})$. In this paper, $(\mathcal{D}_t|t \in \mathbb{N})$ is any filtration such that $y_t$ and $x_{t+1}$ are both $\mathcal{D}_t$-measurable, e.g.\ $\mathcal{D}_t = \sigma(x_1, y_1, \dots, x_{t}, y_{t}, x_{t+1})$. In addition, there is a sequence of predictable ``guesses'' $(g_t\in\mathbb{R}|t \in \mathbb{N})$, and a sequence of predictable random variables $(\lambda_t\in\mathbb{R}|t \in \mathbb{N})$. We use the shorthand $\bs{g}_t = [g_1, \dots, g_t]^{\top}$ and $\bs{\lambda}_t = [\lambda_1, \dots, \lambda_t]^{\top}$. We define\vspace{-1mm}
\begin{align}
M_t(\bs{g}_t, \bs{\lambda}_t) &:= \mathrm{exp}\left(\sum_{s=1}^{t}\lambda_s Z_s(g_s) - \psi_s(g_s, \lambda_s)\right),
\label{eqn:def_Mt}\\
\psi_t(g_t, \lambda_t) &:= \ln\left(\mathbb{E}\left[\mathrm{exp}(\lambda_t Z_t(g_t)) | \mathcal{D}_{t-1}\right]\right),\label{eqn:def_psi_t}
\end{align}

\citet{flynn2023improved} show that $M_t(\bs{g}_t, \bs{\lambda}_t)$ is a non-negative martingale, which means Ville's inequality \citep{ville1939etude} can be applied, giving $M_t(\bs{g}_t, \bs{\lambda}_t) \leq 1/\delta$ with high probability. To obtain the tightest possible inequality, we employ the method of mixtures. That is, we place a distribution $P_t$ on each vector of guesses $\bs{g}_t$. \citet{flynn2023improved} show that if the sequence of distributions $(P_t| t \in \mathbb{N})$ satisfies: (a) $P_t$ is $\mathcal{D}_{t-1}$-measurable; (b) $\int P_t(\bs{g}_t)dg_t=P_{t-1}(\bs{g}_{t-1})$ for all $t$ and $\bs{g}_{t-1}$, then the mixture $\mathbb{E}_{\bs{g}_t \sim P_t}\left[M_t(\bs{g}_t, \bs{\lambda}_t)\right]$ is also a non-negative martingale, and Ville's inequality can still be used.


\begin{theorem}[Theorem 5.1 of \citep{flynn2023improved}]\label{thm:adaptive_tail_bound}
For any $\delta \in (0, 1)$, any sequence of distributions $(P_t| t \in \mathbb{N})$ satisfying (a) and (b), and any sequence of predictable random variables $(\lambda_t| t \in \mathbb{N})$, it holds with probability at least $1 - \delta$ that
\begin{equation}
\forall t \geq 1: \quad \mathrm{ln}\left(\mathop{\mathbb{E}}_{\bs{g}_t \sim P_t}\left[M_t(\bs{g}_t, \bs{\lambda}_t)\right]\right) \,\leq\, \mathrm{ln}(1/\delta).\label{eqn:adaptive_tail_bound}
\end{equation}
\end{theorem}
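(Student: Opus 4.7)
The plan is to verify that the mixture process $N_t := \mathbb{E}_{\bs{g}_t \sim P_t}[M_t(\bs{g}_t, \bs{\lambda}_t)]$ is a non-negative $(\mathcal{D}_t)$-martingale with $N_0 = 1$, and then invoke Ville's maximal inequality for non-negative supermartingales.

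First I would show that for any fixed, deterministic sequence $\bs{g}_t$, the frozen-guess process $M_t(\bs{g}_t, \bs{\lambda}_t)$ defined by (\ref{eqn:def_Mt}) is a non-negative $(\mathcal{D}_t)$-martingale. Non-negativity is immediate from the exponential form. The cumulant-type normalisation in (\ref{eqn:def_psi_t}) is designed precisely so that
\begin{equation*}
\mathbb{E}\left[\exp\bigl(\lambda_t Z_t(g_t) - \psi_t(g_t, \lambda_t)\bigr) \,\big|\, \mathcal{D}_{t-1}\right] = 1,
\end{equation*}
using that $\lambda_t$ is predictable and $g_t$ is treated as a deterministic constant at this stage. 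Multiplying by the $\mathcal{D}_{t-1}$-measurable factor $M_{t-1}(\bs{g}_{t-1}, \bs{\lambda}_{t-1})$ delivers the one-step identity $\mathbb{E}[M_t(\bs{g}_t, \bs{\lambda}_t) \mid \mathcal{D}_{t-1}] = M_{t-1}(\bs{g}_{t-1}, \bs{\lambda}_{t-1})$ for each fixed $\bs{g}_t$.

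Next I would lift this to the mixture. Writing
\begin{equation*}
\mathbb{E}[N_t \mid \mathcal{D}_{t-1}] = \mathbb{E}\left[\int M_t(\bs{g}_t, \bs{\lambda}_t)\, dP_t(\bs{g}_t) \,\Big|\, \mathcal{D}_{t-1}\right],
\end{equation*}
condition (b) makes $P_t$ itself $\mathcal{D}_{t-1}$-measurable, so by non-negativity the conditional Fubini--Tonelli theorem permits swapping the conditional expectation with the $P_t$-integral. Substituting the frozen-guess identity from the previous step, the integrand becomes $M_{t-1}(\bs{g}_{t-1}, \bs{\lambda}_{t-1})$, which depends on $\bs{g}_t$ only through its first $t-1$ coordinates. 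The marginal-consistency condition (c) then collapses the integral over $dP_t$ into one over $dP_{t-1}$, yielding $\mathbb{E}[N_t \mid \mathcal{D}_{t-1}] = N_{t-1}$. Since the empty sum in (\ref{eqn:def_Mt}) gives $M_0 = 1$, and hence $N_0 = 1$, Ville's inequality applied to the non-negative martingale $(N_t)_{t \geq 0}$ produces $\mathbb{P}[\exists t \geq 1 : N_t \geq 1/\delta] \leq \delta$; taking logarithms on the complementary event yields (\ref{eqn:adaptive_tail_bound}).

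The hard part will be step 2: justifying the Fubini interchange and then genuinely using consistency (c) to marginalise out the extra coordinate $g_t$. Both amount to careful bookkeeping given non-negativity and the $\mathcal{D}_{t-1}$-measurability of $P_t$, but they are precisely where conditions (b) and (c) on the adaptive mixture distributions are needed; without them, $N_t$ would fail to be a martingale and Ville's inequality would not apply.
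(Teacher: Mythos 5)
Your proposal is correct and follows essentially the same route as the paper, which does not reprove this result but cites Theorem 5.1 of \citet{flynn2023improved} and sketches exactly your argument: $M_t$ is a non-negative martingale for frozen guesses, conditions (a)--(c) make the mixture $\mathbb{E}_{\bs{g}_t \sim P_t}[M_t(\bs{g}_t,\bs{\lambda}_t)]$ a non-negative martingale as well, and Ville's inequality then gives the time-uniform bound. The Fubini interchange and the use of marginal consistency that you flag as the delicate steps are precisely the bookkeeping carried out in the cited source.
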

To turn the inequality in Thm. \ref{thm:adaptive_tail_bound} into a constraint for $f^{\star}$, all that remains is to specify the process $Z_t(g)$, the random variables $\lambda_t$ and the distributions $P_t$. Intuitively, each $g_t$ is a guess for the value $y_t$, given the previous data $x_1, y_1, \dots, x_{t-1}, y_{t-1}, x_t$. We should therefore choose each $P_t$ such that (a) and (b) are satisfied and $P_t$ assigns high probability to the vector $[y_1, \dots, y_t]^{\top}$. The choices of $Z_t(g)$ and $\lambda_t$ determine the shape of the resulting confidence sequence. We choose $Z_t(g) = (g - f^{\star}(x_t))\epsilon_t$, which leads to a convex quadratic constraint for $f^{\star}$. As $Z_t(g)$ is linear in the noise variable $\epsilon_t$, $\psi_t(g, \lambda_t)$ can be upper bounded using the $\sigma$-sub-Gaussian property of $\epsilon_t$ (cf. Eq. (\ref{eqn:subgaussian})).
\begin{align*}
\psi_t(g_t, \lambda_t) \,\leq\, \tfrac{1}{2}\sigma^2\lambda_t^2(g_t-f^{\star}(x_t))^2.
\end{align*}

Using this upper bound, choosing $\lambda_t \equiv 1/\sigma^2$, and rearranging Eq.\ (\ref{eqn:adaptive_tail_bound}) (see App. \ref{app:pre_radius_derivation}), we obtain an upper bound on the squared Euclidean distance between the vector of function values $\bs{f}_{t}^{\star} := [f^{\star}(x_1), \dots, f^{\star}(x_t)]^{\top}$ and the response vector $\bs{y}_{t} := [y_1, \dots, y_t]^{\top}$.
\begin{equation}\label{eqn:pre_radius_def}
\begin{split}
&\Vert\bs{f}_{t}^{*}-\bs{y}_{t}\Vert_2^2 ~\leq ~ 2\sigma^2\ln(1/\delta)\\
&\quad- 2\sigma^2\ln\left(\mathop{\mathbb{E}}_{\bs{g}_{t} \sim P_t}\left[\mathrm{exp}\left(-\tfrac{1}{2\sigma^2}\Vert\bs{g}_{t}-\bs{y}_{t}\Vert_2^2\right)\right]\right).
\end{split}
\end{equation}
For the mixture distributions, we choose $P_t = \mathcal{N}(\bs{0}, c\bs{K}_t)$, resembling a zero-mean Gaussian process with covariance scaled by $c > 0$. There are several reasons for this choice of $P_t$. First, for an appropriate covariance scale $c > 0$, the confidence bounds obtained from (\ref{eqn:pre_radius_def}) with $P_t = \mathcal{N}(\bs{0}, c\bs{K}_t)$ are always tighter than those in \citep{abbasi2012online} and \citep{chowdhury2017kernelized} (see Sec. \ref{sec:tighter_conf}). Second, any Gaussian $P_t$ yields a convenient closed-form expression for the expected value in (\ref{eqn:pre_radius_def}) (see App. \ref{app:radius_derivation}). Substituting this expression into (\ref{eqn:pre_radius_def}), we obtain
\begin{align}
\Vert\bs{f}_{t}^{\star}-\bs{y}_{t}\Vert_2^2 ~ \leq &  ~\,\bs{y}_{t}^{\top}\left(\bs{I}+\tfrac{c}{\sigma^2}\bs{K}_{t}\right)^{-1}\bs{y}_{t}\label{eqn:radius_def}\\
&+ \sigma^2\ln\det\left(\bs{I}+\tfrac{c}{\sigma^2}\bs{K}_{t}\right)+2\sigma^2\ln\frac{1}{\delta}\nonumber\\
&=: R_{t}^2.\nonumber
\end{align}

This inequality states that with probability at least $1-\delta$, at every step $t$, the vector of ground-truth function values $\bs{f}_{t}^{\star}$ lies within a sphere of radius $R_{t}$ around the observed response vector $\bs{y}_t$.

\subsection{Confidence Sequences}
We now describe how the inequality in (\ref{eqn:radius_def}) can be used to construct confidence sequences for $f^{\star}$. Let $\bs{f}_{t} = [f(x_1), \dots, f(x_t)]^{\top}$ denote the values of an arbitrary function $f \in \mathcal{H}$ at the points $x_1, \dots, x_t$. Since, with probability at least $1 - \delta$, (\ref{eqn:radius_def}) holds for all $t \geq 1$, the sets of functions that (at each $t$) satisfy both (\ref{eqn:radius_def}) and $\norm{f}_{\mathcal{H}} \leq B$ form a $1 - \delta$ confidence sequence for $f^{\star}$.

\begin{lemma}\label{lem:rkhs_conf_set}
For any $c > 0$ and $\delta \in (0, 1]$, it holds with probability at least $1 - \delta$ that for all $t \geq 1$ simultaneously, the function $f^{\star}$ lies in the set
\begin{align*}
\mathcal{F}_t = \{f \in \mathcal{H}\,:\, &\Vert\bs{f}_{t} - \bs{y}_{t}\Vert_2 \leq R_{t}, ~\norm{f}_{\mathcal{H}} \leq B\}.
\end{align*}
\end{lemma}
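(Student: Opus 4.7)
The plan is to recognise that Lemma 4.1 repackages the tail bound (\ref{eqn:radius_def}), together with the deterministic prior bound $\norm{f^*}_{\mathcal{H}} \leq B$, into the statement that $f^*$ lies in every $\mathcal{F}_t$ simultaneously. The substantive work --- invoking Theorem 4.1 and converting it into a quadratic deviation bound on $\bs{f}_t^* - \bs{y}_t$ --- has already been carried out in Section 4.1, so the proof essentially amounts to assembly.

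First, I would verify that the ingredients feeding into (\ref{eqn:radius_def}) are legitimate under the hypotheses of Theorem 4.1. With the choice $Z_t(g_t) = (g_t - f^*(x_t))\epsilon_t$, the map $Z_t$ is $\mathcal{D}_t$-measurable because $f^*(x_t)$ and $\epsilon_t = y_t - f^*(x_t)$ are both $\mathcal{D}_t$-measurable under the filtration $\mathcal{D}_t = \sigma(x_1, y_1, \dots, x_t, y_t, x_{t+1})$. The choice $\lambda_t \equiv 1/\sigma^2$ is trivially predictable. The only nontrivial check is that the Gaussian mixtures $P_t = \mathcal{N}(\bs{0}, c\bs{K}_t)$ form an \emph{adaptive sequence of mixture distributions}: condition (a) is immediate; condition (b) is the $\mathcal{D}_{t-1}$-measurability of $\bs{K}_t$, which holds since $x_1, \dots, x_t$ are all $\mathcal{D}_{t-1}$-measurable; condition (c) is that marginalising $\mathcal{N}(\bs{0}, c\bs{K}_t)$ over its last coordinate returns $\mathcal{N}(\bs{0}, c\bs{K}_{t-1})$, which holds because $\bs{K}_{t-1}$ is the leading principal submatrix of $\bs{K}_t$.

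Next, I would invoke Theorem 4.1, substitute the sub-Gaussian upper bound on $\psi_t(g_t, \lambda_t)$, and evaluate the resulting Gaussian mixture integral in closed form (as sketched in App.\ \ref{app:radius_derivation}) to conclude that with probability at least $1 - \delta$, the bound $\Vert\bs{f}_t^* - \bs{y}_t\Vert_2^2 \leq R_t^2$ holds for every $t \geq 1$ simultaneously. Combining this with the deterministic assumption $\norm{f^*}_{\mathcal{H}} \leq B$, we see that on the same $(1-\delta)$-probability event, $f^*$ satisfies both defining constraints of $\mathcal{F}_t$ for every $t$, and hence $f^* \in \mathcal{F}_t$ for all $t \geq 1$ simultaneously.

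Since (\ref{eqn:radius_def}) is already derived in the main text, there is really no hard step --- the proof is more an exercise in bookkeeping than analysis. The two points that warrant care are (i) that the high-probability event is the \emph{same} event for all $t$ (which is exactly what the time-uniformity of Ville's inequality, invoked through Theorem 4.1, provides), and (ii) that each set $\mathcal{F}_t$ is a measurable function of the data $x_1, y_1, \dots, x_t, y_t$, which holds because $R_t$ depends only on $\bs{y}_t$ and $\bs{K}_t$, both computable from the data available at time $t$.
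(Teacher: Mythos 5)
Your proposal is correct and follows essentially the same route as the paper, which likewise treats the lemma as an immediate consequence of the time-uniform bound (\ref{eqn:radius_def}) combined with the deterministic assumption $\norm{f^*}_{\mathcal{H}} \leq B$. Your explicit verification that $P_t = \mathcal{N}(\bs{0}, c\bs{K}_t)$ satisfies the adaptive-mixture conditions (in particular the marginal-consistency via the leading principal submatrix) is a worthwhile check that the paper leaves implicit, but it does not change the argument.
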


The two constraints that define $\mathcal{F}_t$ are both ellipsoids in $\mathcal{H}$, meaning that $\mathcal{F}_t$ is the intersection of two ellipsoids. In contrast, most existing confidence sets for functions in RKHS's (e.g.\ \citep{abbasi2012online, chowdhury2017kernelized}) are single ellipsoids centred at the kernel ridge estimate $\mu_{\alpha, t}$ (see Eq.\ (\ref{eqn:kernel_ridge})), for some regularisation parameter $\alpha > 0$. It turns out that by taking a weighted sum of the constraints in $\mathcal{F}_t$, we can obtain a single-ellipsoid confidence set $\widetilde{\mathcal{F}}_t \supseteq \mathcal{F}_t$, which is also centred at the kernel ridge estimate.

\begin{corollary}\label{cor:rkhs_ridge_conf_set}
For any $c > 0$, any $\alpha > 0$ and any $\delta \in (0, 1]$, it holds with probability at least $1 - \delta$ that for all $t \geq 1$ simultaneously, $f^{\star}$ lies in the set
\begin{align*}
&\widetilde{\mathcal{F}}_t = \{f \in \mathcal{H}\,: \, \|(\bs{V}_t + \alpha\bs{I}_{\mathcal{H}})^{1/2}(f - \mu_{\alpha, t})\|_{\mathcal{H}} \leq \widetilde{R}_{\alpha, t}\},
\end{align*}
where
\begin{align*}
\widetilde{R}_{\alpha, t}^2 := R_{t}^2 + \alpha B^2 - \bs{y}_t^{\top}\bs{y}_t + \bs{y}_t^{\top}\Phi_t(\bs{V}_t + \alpha\bs{I}_{\mathcal{H}})^{-1}\Phi_t^{\top}\bs{y}_t.
\end{align*}
\end{corollary}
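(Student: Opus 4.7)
The plan is to derive the single-ellipsoid set $\widetilde{\mathcal{F}}_t$ directly as a relaxation of $\mathcal{F}_t$: on the high-probability event of Lemma \ref{lem:rkhs_conf_set}, the two defining inequalities $\|\bs{f}_{t} - \bs{y}_t\|_2^2 \leq R_t^2$ and $\|f\|_{\mathcal{H}}^2 \leq B^2$ are both satisfied by $f^*$ for every $t$. Taking a weighted sum of these two inequalities with coefficients $1$ and $\alpha$ respectively gives
\begin{equation*}
\|\Phi_t f - \bs{y}_t\|_2^2 + \alpha \|f\|_{\mathcal{H}}^2 \;\leq\; R_t^2 + \alpha B^2,
\end{equation*}
where I have used the reproducing property $\bs{f}_t = \Phi_t f$. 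Any $f \in \mathcal{F}_t$ satisfies this inequality, so it suffices to show that the left-hand side equals $\|(\bs{V}_t + \alpha\bs{I}_{\mathcal{H}})^{1/2}(f - \mu_{\alpha,t})\|_{\mathcal{H}}^2 + C_t$ for a constant $C_t$ that, after moving to the right-hand side, produces exactly $\widetilde{R}_{\alpha,t}^2$.

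Concretely, I will expand the left-hand side using the inner-product/outer-product matrix notation of the paper,
\begin{equation*}
\|\Phi_t f - \bs{y}_t\|_2^2 + \alpha \|f\|_{\mathcal{H}}^2 = f^\top(\bs{V}_t + \alpha \bs{I}_{\mathcal{H}}) f - 2(\Phi_t^\top \bs{y}_t)^\top f + \bs{y}_t^\top \bs{y}_t,
\end{equation*}
using $\Phi_t^\top \Phi_t = \bs{V}_t$. This quadratic in $f$ is minimised at $\widehat{f} = (\bs{V}_t + \alpha \bs{I}_{\mathcal{H}})^{-1}\Phi_t^\top \bs{y}_t$, and completing the square yields
\begin{equation*}
\|(\bs{V}_t + \alpha\bs{I}_{\mathcal{H}})^{1/2}(f - \widehat{f})\|_{\mathcal{H}}^2 + \bs{y}_t^\top \bs{y}_t - \widehat{f}^\top(\bs{V}_t + \alpha \bs{I}_{\mathcal{H}})\widehat{f}.
\end{equation*}
The minimiser constant simplifies to $\widehat{f}^\top(\bs{V}_t + \alpha \bs{I}_{\mathcal{H}})\widehat{f} = \bs{y}_t^\top\Phi_t(\bs{V}_t + \alpha\bs{I}_{\mathcal{H}})^{-1}\Phi_t^\top \bs{y}_t$. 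Rearranging the weighted-sum inequality then gives precisely the ellipsoidal bound claimed in the corollary.

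The one nontrivial identification is to show $\widehat{f} = \mu_{\alpha,t}$ as RKHS functions, i.e.\ that the operator-form ridge regressor coincides with the kernel-matrix form in Eq.\ (\ref{eqn:kernel_ridge}). This follows from the standard push-through identity
\begin{equation*}
(\bs{V}_t + \alpha\bs{I}_{\mathcal{H}})^{-1}\Phi_t^\top \,=\, \Phi_t^\top(\bs{K}_t + \alpha \bs{I})^{-1},
\end{equation*}
which holds because $\bs{V}_t \Phi_t^\top = \Phi_t^\top \bs{K}_t$: multiplying $\Phi_t^\top(\bs{K}_t + \alpha\bs{I}) = (\bs{V}_t + \alpha\bs{I}_{\mathcal{H}})\Phi_t^\top$ on both sides by the respective inverses gives the identity. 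Evaluating $\widehat{f}$ at $x$ via $\widehat{f}(x) = \langle k(x,\cdot), \widehat{f}\rangle_{\mathcal{H}} = \bs{k}_t(x)^\top(\bs{K}_t + \alpha\bs{I})^{-1}\bs{y}_t = \mu_{\alpha,t}(x)$ then matches (\ref{eqn:kernel_ridge}).

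The only mild obstacle is making the algebraic manipulations fully rigorous in the possibly infinite-dimensional RKHS, in particular justifying the operator inverse $(\bs{V}_t + \alpha\bs{I}_{\mathcal{H}})^{-1}$ (which is well-defined for $\alpha > 0$ since $\bs{V}_t$ is positive semi-definite) and the square root $(\bs{V}_t + \alpha\bs{I}_{\mathcal{H}})^{1/2}$. Since $\bs{V}_t = \Phi_t^\top \Phi_t$ has range contained in the finite-dimensional span of $\{k(x_s, \cdot)\}_{s=1}^t$, all computations reduce to finite-dimensional linear algebra on this subspace plus the $\alpha\bs{I}_{\mathcal{H}}$ term on its orthogonal complement, and every manipulation above is valid. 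This yields $\mathcal{F}_t \subseteq \widetilde{\mathcal{F}}_t$, so the high-probability event of Lemma \ref{lem:rkhs_conf_set} implies $f^* \in \widetilde{\mathcal{F}}_t$ for all $t \geq 1$.
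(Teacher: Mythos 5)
Your proposal is correct and follows essentially the same route as the paper: form the weighted sum of the two constraints defining $\mathcal{F}_t$, complete the square in $f$ (the paper isolates this as Lemma \ref{lem:complete_square}), and identify the minimiser with $\mu_{\alpha,t}$ via the push-through identity $(\bs{V}_t + \alpha\bs{I}_{\mathcal{H}})^{-1}\Phi_t^{\top} = \Phi_t^{\top}(\bs{K}_t + \alpha\bs{I})^{-1}$. Your added remark on why the operator manipulations are legitimate in the infinite-dimensional RKHS is a welcome bit of extra care, but the substance of the argument is identical.
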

The proof (in App. \ref{app:ridge_set}) involves rearranging
\begin{equation*}
\Vert\bs{f}_{t} - \bs{y}_{t}\Vert_2^2 + \alpha\norm{f}_{\mathcal{H}}^2 ~\leq~ R_{t}^2 + \alpha B^2,
\end{equation*}
into the constraint that defines $\widetilde{\mathcal{F}}_t$. This confidence sequence can be more easily compared to those in \citep{abbasi2012online, chowdhury2017kernelized}.


\subsection{Implicit Confidence Bounds}\label{sec:implicit_conf}
We now turn our attention to the confidence bounds that can be obtained from our confidence sequences, and how we can compute them. We start by considering the exact upper confidence bound $\mathrm{UCB}_{\mathcal{F}_t}(x) = \max_{f \in \mathcal{F}_t}\left\{f(x)\right\}$, which can be written as
\begin{equation}\label{eqn:ucb_opt}
\max_{f \in \mathcal{H}}\; f(x) ~~\text{s.t.} ~~\Vert\bs{f}_t - \bs{y}_t\Vert_2 \leq R_{t}, ~~\norm{f}_{\mathcal{H}} \leq B.
\end{equation}
The optimisation variable in (\ref{eqn:ucb_opt}) is a function in the possibly infinite-dimensional RKHS $\mathcal{H}$. However, as (\ref{eqn:ucb_opt}) contains a constraint on the RKHS norm of $f$, the (generalised) representer theorem \citep{kimeldorf1971some, scholkopf2001generalized} can be applied. This means that the optimising function $f$ in  (\ref{eqn:ucb_opt}) can be expressed as a finite linear combination of the form $\bs{k}_{t+1}(x)^{\top}\bs{w}$, where $\bs{k}_{t+1}(x) := [k(x, x_1), \dots, k(x, x_t), k(x, x)]^{\top}$. When we substitute the functional form $f(\cdot) = \bs{k}_{t+1}(\cdot)^{\top}\bs{w}$ into the objective and both constraints, we obtain the finite-dimensional convex program in Thm. \ref{thm:ucb_representer}. (see App. \ref{app:representer_proof}).

\begin{theorem}
$\mathrm{UCB}_{\mathcal{F}_t}(x)$ equals the solution of the convex program
\begin{align}
\max_{\bs{w} \in \mathbb{R}^{t+1}} &~\bs{k}_{t+1}(x)^{\top}\bs{w}\label{eqn:ucb_cone_prog}\\
\text{s.t.} \quad &\Vert\bs{K}_{t, t+1}\bs{w} - \bs{y}_t\Vert_2 \leq R_{t},\nonumber\\
&\Vert\bs{L}_{t+1}\bs{w}\Vert_2 \leq B\nonumber.
\end{align}
where $\bs{K}_{t, t+1}:=(\bs{K}_t,\bs{k}_{t}(x))\in\mathbb{R}^{t\times(t+1)}$ denotes kernel matrix with added last column $\bs{k}_{t}(x)$, 
and $\bs{L}_{t+1}$ is any matrix satisfying $\bs{L}_{t+1}^{\top}\bs{L}_{t+1} = \bs{K}_{t+1}$ (e.g.\ the right Cholesky factor of $\bs{K}_{t+1}$).
\label{thm:ucb_representer}
\end{theorem}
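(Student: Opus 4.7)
The plan is to apply the generalised representer theorem, which reduces the infinite-dimensional problem (\ref{eqn:ucb_opt}) to a finite-dimensional optimisation over the coefficient vector of an expansion in a particular $(t+1)$-dimensional subspace, and then to rewrite the objective and both constraints in terms of this vector.

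First I would show that the maximum in (\ref{eqn:ucb_opt}) is attained in $\mathcal{S} := \mathrm{span}\{k(\cdot, x_1), \dots, k(\cdot, x_t), k(\cdot, x)\}$. Given any feasible $f \in \mathcal{H}$, orthogonally decompose $f = f_{\parallel} + f_{\perp}$ with $f_{\parallel}$ the projection of $f$ onto $\mathcal{S}$. By the reproducing property, $f(x) = \inner{f}{k(\cdot, x)}_{\mathcal{H}} = f_{\parallel}(x)$ and $f(x_s) = f_{\parallel}(x_s)$ for all $s \in [t]$, since $k(\cdot, x), k(\cdot, x_s) \in \mathcal{S}$ are orthogonal to $f_{\perp}$. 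Thus the objective $f(x)$ and the first constraint $\Vert\bs{f}_t - \bs{y}_t\Vert_2 \leq R_t$ are preserved under $f \mapsto f_{\parallel}$, and Pythagoras gives $\Vert f_{\parallel}\Vert_{\mathcal{H}}^2 \leq \Vert f\Vert_{\mathcal{H}}^2 \leq B^2$, so the norm constraint also remains satisfied. Hence restricting the maximisation to $f \in \mathcal{S}$ does not change its value.

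Next, every $f \in \mathcal{S}$ can be written as $f(\cdot) = \sum_{s=1}^{t+1} w_s k(\cdot, z_s)$ with $z_s = x_s$ for $s \leq t$ and $z_{t+1} = x$, and is therefore indexed by $\bs{w} \in \mathbb{R}^{t+1}$. Using the reproducing property three times: (i) $f(x) = \bs{k}_{t+1}(x)^{\top}\bs{w}$, which becomes the new objective; (ii) the vector of function values at the observed points is $\bs{f}_t = \bs{K}_{t, t+1}\bs{w}$, so the first constraint rewrites as $\Vert\bs{K}_{t, t+1}\bs{w} - \bs{y}_t\Vert_2 \leq R_t$; (iii) $\Vert f\Vert_{\mathcal{H}}^2 = \bs{w}^{\top}\bs{K}_{t+1}\bs{w} = \Vert\bs{L}_{t+1}\bs{w}\Vert_2^2$, where the last equality uses any factorisation $\bs{L}_{t+1}^{\top}\bs{L}_{t+1} = \bs{K}_{t+1}$, which exists because $\bs{K}_{t+1}$ is PSD. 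Substituting (i)--(iii) into (\ref{eqn:ucb_opt}) yields precisely (\ref{eqn:ucb_cone_prog}).

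Finally, the resulting program has a linear objective and two Euclidean-norm constraints on affine images of $\bs{w}$, so it is a second-order cone program and in particular convex. None of the steps poses a real technical obstacle; the only subtlety worth flagging is that $\mathcal{S}$ must include $k(\cdot, x)$ in addition to the training-point kernel sections $k(\cdot, x_s)$, as otherwise the objective $f(x)$ would not in general be preserved by the projection onto $\mathcal{S}$, and the reduction to $t+1$ rather than $t$ coefficients would fail.
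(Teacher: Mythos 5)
Your proposal is correct and follows essentially the same route as the paper's proof: an orthogonal decomposition of $f$ with respect to $\mathrm{span}\{k(\cdot,x_1),\dots,k(\cdot,x_t),k(\cdot,x)\}$, the reproducing property to show the objective and the data-fit constraint depend only on the projected part, Pythagoras for the norm constraint, and then substitution of $f = \bs{k}_{t+1}(\cdot)^{\top}\bs{w}$ to obtain the second-order cone program. Your remark that $k(\cdot,x)$ must be included in the spanning set is exactly the point the paper's construction of $\bs{k}_{t+1}(x)$ addresses.
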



Eq.\ (\ref{eqn:ucb_cone_prog}) is a $(t+1)$-dimensional second-order cone program. Using interior point methods, the solution can be computed to $\epsilon$ accuracy in $\mathcal{O}(t^3\mathrm{polylog}(1/\epsilon))$ steps (see Sec. 10.2 of \citealp{nemirovski2004interior}). To obtain cheaper confidence bounds, we consider the Lagrangian dual problem associated with (\ref{eqn:ucb_opt}). We derive the dual in App.\ \ref{app:dual_ucb_proof} and give a short sketch here. The primal in (\ref{eqn:ucb_opt}) can be written as $\max_{f \in \mathcal{H}}\min_{\eta_1, \eta_2 > 0}L(f, \eta_1, \eta_2)$, where
\begin{equation*}
L(f, \eta_1, \eta_2) = f(x) + \eta_1(R_t^2 - \|\bs{f}_t - \bs{y}_t\|_2^2) + \eta_2(B^2 - \|f\|_{\mathcal{H}}^2),
\end{equation*}

is the Lagrangian and $\eta_1, \eta_2$ are the Lagrange multipliers. The dual problem $\min_{\eta_1, \eta_2 > 0}\max_{f \in \mathcal{H}}L(f, \eta_1, \eta_2)$ is obtained by swapping the $\min$ and the $\max$. Since the Lagrangian is concave and quadratic in $f$, the inner maximisation in the dual can be solved analytically, which gives the following expression for the dual.

\begin{theorem}
The dual of (\ref{eqn:ucb_opt}) is
\begin{equation}
\min_{\substack{\eta_1 > 0\\ \eta_2 > 0}}\left\{\mu_{\eta_2/\eta_1, t}(x) - \frac{\rho_{\eta_2/\eta_1, t}^2(x)}{4\eta_2} + \eta_1\widetilde{R}_{\eta_2/\eta_1, t}^2\right\}.\label{eqn:dual_ucb}
\end{equation}
\label{thm:dual_ucb}
\end{theorem}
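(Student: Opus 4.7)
The plan is to form the Lagrangian of (\ref{eqn:ucb_opt}) using squared versions of both constraints, maximise it in closed form over $f \in \mathcal{H}$, and then simplify the resulting dual function via standard kernel ridge identities together with the definition of $\widetilde{R}_{\alpha, t}^2$ from Corollary \ref{cor:rkhs_ridge_conf_set}. Strong duality (via Slater's condition) will then equate the dual value with $\mathrm{UCB}_{\mathcal{F}_t}(x)$.

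Introducing multipliers $\eta_1, \eta_2 \geq 0$ for the two squared constraints and rewriting $f(x) = \langle f, k(\cdot, x)\rangle_{\mathcal{H}}$, $\|\Phi_t f\|_2^2 = \langle f, \bs{V}_t f\rangle_{\mathcal{H}}$, and $\langle \Phi_t f, \bs{y}_t\rangle = \langle f, \Phi_t^{\top}\bs{y}_t\rangle_{\mathcal{H}}$, I would express the Lagrangian as the strongly concave quadratic
\[
L(f, \eta_1, \eta_2) \,=\, \langle f, b\rangle_{\mathcal{H}} \,-\, \langle f, A f\rangle_{\mathcal{H}} \,+\, c,
\]
with $b := k(\cdot, x) + 2\eta_1 \Phi_t^{\top}\bs{y}_t$, $A := \eta_1(\bs{V}_t + \alpha \bs{I}_{\mathcal{H}})$ where $\alpha := \eta_2/\eta_1$, and $c := \eta_1(R_t^2 - \|\bs{y}_t\|_2^2) + \eta_2 B^2$. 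Strict concavity follows from $A \succ 0$ when $\eta_2 > 0$, so the unique maximiser is $f^{\star} = \tfrac{1}{2} A^{-1}b$ and the dual function equals $g(\eta_1, \eta_2) = \tfrac{1}{4}\langle b, A^{-1}b\rangle_{\mathcal{H}} + c$.

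Expanding $\langle b, A^{-1}b\rangle_{\mathcal{H}}$ yields three pieces that I would identify as follows. The cross term $\langle k(\cdot, x), (\bs{V}_t + \alpha \bs{I}_{\mathcal{H}})^{-1}\Phi_t^{\top}\bs{y}_t\rangle_{\mathcal{H}}$ reduces to $\mu_{\alpha, t}(x)$ by the push-through identity $(\bs{V}_t + \alpha \bs{I}_{\mathcal{H}})^{-1}\Phi_t^{\top} = \Phi_t^{\top}(\bs{K}_t + \alpha \bs{I})^{-1}$ combined with the reproducing property. The pure $\bs{y}_t$-quadratic term combines with $c$ to produce $\eta_1 \widetilde{R}_{\alpha, t}^2$ directly from the definition of $\widetilde{R}_{\alpha, t}^2$ in Corollary \ref{cor:rkhs_ridge_conf_set}. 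The remaining self-term $\langle k(\cdot, x), (\bs{V}_t + \alpha \bs{I}_{\mathcal{H}})^{-1} k(\cdot, x)\rangle_{\mathcal{H}}$ simplifies to $\rho_{\alpha, t}^2(x)/\alpha$ via the Woodbury-type identity $(\bs{V}_t + \alpha \bs{I}_{\mathcal{H}})^{-1} = \tfrac{1}{\alpha}\bs{I}_{\mathcal{H}} - \tfrac{1}{\alpha}\Phi_t^{\top}(\bs{K}_t + \alpha \bs{I})^{-1}\Phi_t$ together with $\Phi_t k(\cdot, x) = \bs{k}_t(x)$. Collecting these pieces yields $g(\eta_1, \eta_2)$ in the claimed form.

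Finally, strong duality will follow from Slater's condition: the primal is convex, bounded in value (by Cauchy-Schwarz, $|f(x)| \leq B\sqrt{k(x,x)}$), and has nonempty interior whenever one can produce an $f$ strictly satisfying both squared constraints (e.g.\ the kernel ridge estimate rescaled to have small RKHS norm). The restriction to $\eta_1, \eta_2 > 0$ is harmless since the dual degenerates at the boundary ($A$ becomes singular as $\eta_2 \to 0$, and $\eta_1 \to 0$ simply discards the data-fit constraint). The main obstacle I anticipate is verifying the Woodbury identity rigorously for the operator $\bs{V}_t + \alpha \bs{I}_{\mathcal{H}}$ on the potentially infinite-dimensional RKHS and checking the boundary behaviour of the dual; once those are settled, the remaining steps are routine algebra.
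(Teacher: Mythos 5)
Your proposal is correct and follows essentially the same route as the paper's proof in App.\ \ref{app:dual_ucb_proof}: form the Lagrangian with squared constraints, maximise the resulting strictly concave quadratic over $\mathcal{H}$ in closed form, and simplify via the push-through identity and the Woodbury-type identity for $k(\cdot,x)^{\top}(\bs{V}_t+\alpha\bs{I}_{\mathcal{H}})^{-1}k(\cdot,x)$ (the paper packages these steps as Lemmas \ref{lem:complete_square}, \ref{lem:quad_opt} and \ref{lem:kv_swap2}, completing the square around $\mu_{\alpha,t}$ first rather than expanding $b^{\top}A^{-1}b$ directly, but the algebra is identical). One remark: your (correct) expansion produces $+\,\rho_{\eta_2/\eta_1,t}^2(x)/(4\eta_2)$, which agrees with the appendix derivation and with Eq.\ (\ref{eqn:dual_ucb_alpha}); the minus sign printed in (\ref{eqn:dual_ucb}) is a typo, so do not force your result to match it.
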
\vspace{-3mm}
See (\ref{eqn:kernel_ridge}) and (\ref{eqn:kernel_ridge_var}) for the definitions of $\mu_{\eta_2/\eta_1, t}(x)$ and $\rho_{\eta_2/\eta_1, t}^2(x)$. By weak duality, the solution of (\ref{eqn:dual_ucb}) is always an upper bound on the solution of the primal problem in (\ref{eqn:ucb_opt}). Whenever there exists an $f$ that lies in the interior of the ellipsoids that define $\mathcal{F}_t$, as is usually the case, then strong duality holds and the solutions of (\ref{eqn:ucb_opt}) and (\ref{eqn:dual_ucb}) are actually equal. In App. \ref{app:dual_ucb_proof}, we show that (\ref{eqn:dual_ucb}) can be further reduced to a one-dimensional optimisation problem. Using the substitution $\eta_2 = \alpha\eta_1$ (for $\alpha > 0$), the solution of (\ref{eqn:dual_ucb}) is equal to
\begin{equation}\label{eqn:dual_ucb_alpha}
\min_{\alpha > 0}\left\{\mu_{\alpha, t}(x) + \tfrac{1}{\sqrt{\alpha}}\widetilde{R}_{\alpha,t}\rho_{\alpha, t}(x)\right\}.
\end{equation}
The objective function in (\ref{eqn:dual_ucb_alpha}) is not always convex in $\alpha$, though empirical evidence suggests that it is quasiconvex in $\alpha$, and could therefore be solved using the bisection method.

\subsection{Explicit Confidence Bounds}
\label{sec:explicit_confidence_bounds}

In this section, we focus on explicit confidence bounds. These are mainly useful for deriving worst-case regret bounds with explicit dependence on $T$ (see Sec. \ref{sec:regret}). From Eq. (\ref{eqn:dual_ucb_alpha}), for any fixed $\alpha > 0$, we have
\begin{equation}\label{eqn:dual_ucb_bound}
\mathrm{UCB}_{\mathcal{F}_t}(x) ~\leq~ \mu_{\alpha, t}(x) + \tfrac{1}{\sqrt{\alpha}}\widetilde{R}_{\alpha,t}\rho_{\alpha, t}(x).
\end{equation}
It turns out that this upper bound on $\mathrm{UCB}_{\mathcal{F}_t}(x)$ is equal to $\mathrm{UCB}_{\widetilde{\mathcal{F}}_t}(x)$, where $\widetilde{\mathcal{F}}_t$ is the set from Lem. \ref{cor:rkhs_ridge_conf_set}.
\begin{corollary}\label{cor:analytic_ucb}
For all $t \geq 1$ and $x \in \mathcal{X}$,
\begin{equation}\label{eqn:analytic_ucb}
\mathrm{UCB}_{\widetilde{\mathcal{F}}_t}(x) ~\leq~ \mu_{\alpha, t}(x) + \tfrac{1}{\sqrt{\alpha}}\widetilde{R}_{\alpha,t}\rho_{\alpha, t}(x).
\end{equation}
\end{corollary}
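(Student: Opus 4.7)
The strategy is a textbook ellipsoid-maximisation argument: because $\widetilde{\mathcal{F}}_t$ is defined by a single weighted-norm ellipsoid centred at $\mu_{\alpha,t}$, the quantity $\max_{f \in \widetilde{\mathcal{F}}_t}\{f(x)\}$ has a closed-form Cauchy--Schwarz evaluation, and the remaining task is to identify the resulting terms with $\mu_{\alpha,t}(x)$ and $\rho_{\alpha,t}(x)$ using the Woodbury identity. In fact the argument will yield equality (not just $\leq$), which is consistent with the text preceding the statement.

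First I would use the reproducing property to write $f(x) = \langle k(x,\cdot), f\rangle_{\mathcal{H}}$, and parameterise $f = \mu_{\alpha, t} + h$, so that membership in $\widetilde{\mathcal{F}}_t$ becomes the single constraint $\|(\bs{V}_t + \alpha\bs{I}_{\mathcal{H}})^{1/2} h\|_{\mathcal{H}} \leq \widetilde{R}_{\alpha,t}$. The objective then splits as
\begin{equation*}
f(x) = \mu_{\alpha,t}(x) + \bigl\langle (\bs{V}_t + \alpha\bs{I}_{\mathcal{H}})^{1/2} h,\ (\bs{V}_t + \alpha\bs{I}_{\mathcal{H}})^{-1/2} k(x,\cdot)\bigr\rangle_{\mathcal{H}}.
\end{equation*}
Applying Cauchy--Schwarz to the inner product and invoking the constraint, I obtain
\begin{equation*}
\max_{f \in \widetilde{\mathcal{F}}_t} f(x) \ \leq\ \mu_{\alpha,t}(x) + \widetilde{R}_{\alpha,t}\cdot \bigl\|(\bs{V}_t + \alpha\bs{I}_{\mathcal{H}})^{-1/2} k(x,\cdot)\bigr\|_{\mathcal{H}},
\end{equation*}
with equality achieved by aligning $h$ with $(\bs{V}_t + \alpha\bs{I}_{\mathcal{H}})^{-1} k(x,\cdot)$.

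Next I would simplify the norm on the right. Its square equals $\langle k(x,\cdot), (\bs{V}_t + \alpha\bs{I}_{\mathcal{H}})^{-1} k(x,\cdot)\rangle_{\mathcal{H}}$, and because $\bs{V}_t = \Phi_t^{\top}\Phi_t$, the Woodbury identity gives
\begin{equation*}
(\bs{V}_t + \alpha\bs{I}_{\mathcal{H}})^{-1} \ =\ \tfrac{1}{\alpha}\bs{I}_{\mathcal{H}} - \tfrac{1}{\alpha}\Phi_t^{\top}(\bs{K}_t + \alpha\bs{I})^{-1}\Phi_t.
\end{equation*}
Sandwiching this identity by $k(x,\cdot)$ and using the adjoint relation $\Phi_t\, k(x,\cdot) = \bs{k}_t(x)$ together with $\langle k(x,\cdot), k(x,\cdot)\rangle_{\mathcal{H}} = k(x,x)$ immediately produces $\rho_{\alpha,t}^2(x)/\alpha$, cf.\ (\ref{eqn:kernel_ridge_var}). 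Taking square roots and combining with the Cauchy--Schwarz step yields the claimed bound (\ref{eqn:analytic_ucb}).

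The main obstacle is a minor bookkeeping issue rather than a conceptual one: the Woodbury step must be justified in the possibly infinite-dimensional $\mathcal{H}$. This is routine, since both sides of the identity are bounded self-adjoint operators and agree on $\mathrm{range}(\Phi_t^{\top})$ and on its orthogonal complement (where $\bs{V}_t$ acts as zero). All other steps are the standard ``ellipsoid $\cap$ linear functional'' calculation, so no substantial difficulty is anticipated.
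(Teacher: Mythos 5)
Your proposal is correct and matches the paper's own argument in App.~\ref{app:analytic_ucb_proof}: the reproducing property plus Cauchy--Schwarz in the $(\bs{V}_t+\alpha\bs{I}_{\mathcal{H}})$-weighted inner product, followed by the identity $k(\cdot,x)^{\top}(\bs{V}_t+\alpha\bs{I}_{\mathcal{H}})^{-1}k(\cdot,x)=\rho_{\alpha,t}^2(x)/\alpha$. The only cosmetic difference is that the paper establishes this last identity via a push-through manipulation (Lemma~\ref{lem:kv_swap2}, inserting $\Phi_t^{\top}\Phi_t+\alpha\bs{I}_{\mathcal{H}}-\Phi_t^{\top}\Phi_t$) rather than quoting the operator Woodbury identity, which is the same computation.
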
\vspace{-1mm}
This follows from the proof of Thm.\ \ref{thm:dual_ucb}. In particular, one can show that the RHS of (\ref{eqn:analytic_ucb}) is the solution of the dual associated with $\max_{f \in \widetilde{\mathcal{F}}_t}\{f(x)\}$. Again, when $\widetilde{\mathcal{F}}_t$ has an interior point (as is usually the case), the inequality in (\ref{eqn:analytic_ucb}) is actually an equality.


\subsection{Confidence Bound Maximisation}

We now discuss how our upper confidence bounds can be maximised w.r.t.\ $x \in \mathcal{X}$, which is an important consideration for kernel bandits (see Sec.\ \ref{sec:kernelbandits}). If $\mathcal{X}$ has finite (and not too large) cardinality, then $\mathrm{UCB}_{\mathcal{F}_t}(x)$ (and upper bounds on it) can be maximised exactly.

If $\mathcal{X}$ is a continuous subset of $\mathbb{R}^d$, maximising $\mathrm{UCB}_{\mathcal{F}_t}(x)$ over $\mathcal{X}$ exactly is intractable in general, though other similar confidence bounds have the same limitation. There are at least two options for approximate maximisation. One option is to use gradient-based methods, as in \citep{flynn2023improved}. Gradients (w.r.t.\ $x$) of the confidence bounds in (\ref{eqn:ucb_cone_prog}), (\ref{eqn:dual_ucb}) and (\ref{eqn:dual_ucb_alpha}) can be computed using differentiable convex optimisation methods \citep{agrawal2019differentiable} or automatic implicit differentiation methods \citep{look2020differentiable, blondel2022efficient}. Gradient-based maximisation works well in practice, though there are no rigorous approximation guarantees as $\mathrm{UCB}_{\mathcal{F}_t}(x)$ is (in general) a non-concave function of $x$. A second option is to discretise $\mathcal{X}$ and optimise over the discretisation (see e.g.\ Sec.\ 3.1 of \citep{li2022gaussian}). If the kernel function $k(x, x^{\prime})$ is Lipschitz (as the RBF and Mat\'{e}rn kernels are), then the discretisation error can be controlled. However, the size of a sufficiently fine discretisation grows exponentially in the dimension $d$, so this quickly becomes impractical.

\section{KERNEL BANDITS}\label{sec:kernelbandits}

To demonstrate the utility of our confidence bounds, we apply them to the stochastic kernel bandit problem.

\subsection{Stochastic Kernel Bandits}

A learner plays a game over a sequence of $T$ rounds. In each round $t$, the learner observes an action set $\mathcal{X}_t \subseteq \mathcal{X}$ and must choose an action $x_t \in \mathcal{X}_t$. The learner then receives a reward $y_t = f^{\star}(x_t) + \epsilon_t$. The reward function $f^{\star}: \mathcal{X} \to \mathbb{R}$ is an unknown function in an RKHS. As before, we assume that $f^{\star}$ has bounded norm, i.e. $\norm{f^{\star}}_{\mathcal{H}} \leq B$, and that (in the noisy setting) $\epsilon_1, \epsilon_2, \dots, \epsilon_T$ are conditionally $\sigma$-sub-Gaussian.

The goal of the learner is to choose a sequence of actions that minimises the cumulative regret, which is the difference between the total expected reward of the learner and the optimal strategy. The regret in round $t$ is $r_t = f^{\star}(x_t^{\star}) - f^{\star}(x_t)$, where $x_t^{\star} = \argmax_{x \in \mathcal{X}_t}\{f^{\star}(x)\}$. After $T$ rounds, the cumulative regret is $r_{1:T} = \sum_{t=1}^{T}r_t$.

\subsection{Kernel Bandit Algorithms}

In Algorithm \ref{alg:kernel_ucb}, we present a recipe for a kernel bandit algorithm that uses confidence bounds to select actions.
\begin{algorithm}[H]
\caption{KernelUCB}
\label{alg:kernel_ucb}
\begin{algorithmic}
\STATE {\bfseries Input:} Upper confidence bounds $u_0, u_1, u_2, \dots$
\FOR{$t=1$ {\bfseries to} $T$}
\STATE Play $x_{t} = \argmax_{x \in \mathcal{X}_{t}}\{u_{t-1}(x)\}$
\STATE Observe $y_t = f^{\star}(x_t) + \epsilon_t$
\ENDFOR
\end{algorithmic}
\end{algorithm}
In each round $t$, we use all previous observations $\{(x_s, y_s)\}_{s=1}^{t-1}$ to construct the UCB $u_{t-1}$, and then select the action that maximises $u_{t-1}$. Kernel bandit algorithms of this form have appeared under different names, such as GP-UCB \citep{srinivas2010gaussian} and KernelUCB \citep{valko2013finite}, based on the confidence bounds that they use. We therefore use the names KernelUCB and GP-UCB synonymously. When we use the confidence bounds in Thm.\ 3.11 of \citep{abbasi2012online}, we call Algorithm \ref{alg:kernel_ucb} AY-GP-UCB. If we use the confidence bounds in Thm. 2 of \citep{chowdhury2017kernelized}, then Algorithm \ref{alg:kernel_ucb} is the Improved GP-UCB algorithm (IGP-UCB).

\subsection{Kernel CMM-UCB}
\label{sec:kernel_cmm_ucb}

For our first variant of KernelUCB, we use our exact confidence bound $\mathrm{UCB}_{\mathcal{F}_t}(x)$. In particular, we set $u_t(x)$ to be the numerical solution of (\ref{eqn:ucb_cone_prog}). The resulting algorithm is called Kernel Convex Martingale Mixture UCB (Kernel CMM-UCB).

\subsection{Kernel DMM-UCB}
\label{sec:kernel_dmm_ucb}

For our second variant of KernelUCB, we use the confidence bound in (\ref{eqn:dual_ucb_alpha}) that comes from the dual problem associated with $\max_{f \in \mathcal{F}_t}\{f(x)\}$. In the interest of computational efficiency, we replace $\min_{\alpha > 0}$ in (\ref{eqn:dual_ucb_alpha}) with $\min_{\alpha \in A}$, for a small grid $A$ of values of $\alpha$. In particular, we set
\begin{equation*}
u_t(x) = \min_{\alpha \in A}\left\{\mu_{\alpha, t}(x) + \tfrac{1}{\sqrt{\alpha}}\widetilde{R}_{\alpha,t}\rho_{\alpha, t}(x)\right\}.
\end{equation*}

We call the resulting algorithm Kernel Dual Martingale Mixture UCB (Kernel DMM-UCB). Because we only optimise $\alpha$ over the grid $A$, this upper confidence bound is looser than the exact confidence bound used by Kernel CMM-UCB. In return, the cost of computing this UCB is only a factor of $|A|$ more than the cost of computing the analytic UCB in Cor. \ref{cor:analytic_ucb}.

\subsection{Kernel AMM-UCB}
\label{sec:kernel_amm_ucb}

For our third variant of KernelUCB, we set $u_t(x)$ to be the analytic UCB from Cor. \ref{cor:analytic_ucb} with a fixed value of $\alpha$. The resulting algorithm is called Kernel Analytic Martingale Mixture UCB (Kernel AMM-UCB).

\section{THEORETICAL ANALYSIS}
\label{sec:theory}

In this section, we show that our analytic confidence bound from Cor. \ref{cor:analytic_ucb} is always tighter than the confidence bounds in Thm. 3.11 of \citep{abbasi2012online} and Thm. 2 of \citep{chowdhury2017kernelized}, which hold under the same conditions and assumptions. In addition, we establish cumulative regret bounds for our KernelUCB algorithms. We begin by formally stating the assumptions under which our analysis holds. We assume that $\mathcal{X}$ is a compact subset of $\mathbb{R}^d$. In the noisy setting, we assume that the noise variables are conditionally $\sigma$-sub-Gaussian, which means
\begin{equation}
\forall \lambda \in \mathbb{R}, \;\ \mathbb{E}\left[\mathrm{exp}(\lambda \epsilon_t)|\mathcal{D}_{t-1}\right] \leq \mathrm{exp}(\lambda^2\sigma^2/2).\label{eqn:subgaussian}
\end{equation}
For some $B, C > 0$, we assume that $\norm{f^{\star}}_{\mathcal{H}} \leq B$ and $\sup_{x}|f^{\star}(x)| \leq C$. If the kernel function is uniformly bounded by some $\kappa > 0$, i.e. $\sup_{x, x^{\prime}}|k(x, x^{\prime})| \leq \kappa$, then $\norm{f^{\star}}_{\mathcal{H}} \leq B$ implies that $C \leq \kappa B$.






\subsection{Tighter Confidence Bounds}
\label{sec:tighter_conf}

The confidence bounds in \citep{abbasi2012online} and \citep{chowdhury2017kernelized} are the same as our analytic UCB in (\ref{eqn:analytic_ucb}), except that the (scaled) radius $\widetilde{R}_{\alpha,t}/\sqrt{\alpha}$ is replaced with a different quantity. For any values of the regularisation parameters $\lambda > 0$ and $\eta > 0$ used in \citep{abbasi2012online} and \citep{chowdhury2017kernelized} respectively, we show that we can set $c$ and $\alpha$ such that $\widetilde{R}_{\alpha,t}/\sqrt{\alpha}$ is strictly less than these other radius quantities. This result is stated in Thm. \ref{thm:confidence_tightness} and proved in App. \ref{app:confidence_tightness}.

\begin{theorem}\label{thm:confidence_tightness}
For every $\delta \in (0, 1]$ and every value of the regularisation parameter $\lambda > 0$, the RHS of (\ref{eqn:analytic_ucb}) with $c=\sigma^2/\lambda$ and $\alpha = \lambda$ is strictly less than the UCB in Thm.\ 3.11 of \citep{abbasi2012online}.

For every $\delta \in (0, 1]$ and every value of the regularisation parameter $\eta > 0$, the RHS of (\ref{eqn:analytic_ucb}) with $c=\sigma^2/(1+\eta)$ and $\alpha = 1 + \eta$ is strictly less than the UCB in Thm. 2 of \citep{chowdhury2017kernelized}.
\end{theorem}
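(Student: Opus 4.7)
The plan is to observe that all three compared UCBs---ours, AY's and CG's---have the form $\mu_{\alpha,t}(x) + (\textrm{scalar radius})\cdot\rho_{\alpha,t}(x)$, once AY's and CG's Mahalanobis-norm bounds on $\|f-f^*\|_{\bs{V}_t+\alpha\bs{I}_{\mathcal{H}}}$ are combined with the operator identity $\|(\bs{V}_t+\alpha\bs{I}_{\mathcal{H}})^{-1/2}k(x,\cdot)\|_{\mathcal{H}} = \rho_{\alpha,t}(x)/\sqrt{\alpha}$. Under the matched choices $\alpha = \lambda$ and $\alpha = 1+\eta$, the $\mu_{\alpha,t}(x)$ and $\rho_{\alpha,t}(x)$ factors coincide across all three bounds, so the entire claim reduces to strictly comparing $\widetilde{R}_{\alpha,t}/\sqrt{\alpha}$ against the scalar radii of AY and CG.

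The next step is to simplify $\widetilde{R}_{\alpha,t}^2$ under the dictated choice $c = \sigma^2/\alpha$. Substituting into (\ref{eqn:radius_def}) yields $R_{t}^2 = \bs{y}_{t}^{\top}(\bs{I}+\bs{K}_t/\alpha)^{-1}\bs{y}_{t} + \sigma^2\ln\det(\bs{I}+\bs{K}_t/\alpha) + 2\sigma^2\ln(1/\delta)$. Applying the push-through identity $\Phi_t(\bs{V}_t+\alpha\bs{I}_{\mathcal{H}})^{-1}\Phi_t^{\top} = \bs{K}_t(\bs{K}_t+\alpha\bs{I})^{-1}$ and then $\bs{K}_t(\bs{K}_t+\alpha\bs{I})^{-1} = \bs{I} - \alpha(\bs{K}_t+\alpha\bs{I})^{-1}$ rewrites the correction term in Cor.\ \ref{cor:rkhs_ridge_conf_set} as $-\alpha\bs{y}_t^{\top}(\bs{K}_t+\alpha\bs{I})^{-1}\bs{y}_t$; combined with $(\bs{I}+\bs{K}_t/\alpha)^{-1} = \alpha(\bs{K}_t+\alpha\bs{I})^{-1}$, the two $\bs{y}_t$-dependent contributions in $\widetilde{R}_{\alpha,t}^2 = R_t^2 + \alpha B^2 - \bs{y}_t^{\top}\bs{y}_t + \bs{y}_t^{\top}\Phi_t(\bs{V}_t+\alpha\bs{I}_{\mathcal{H}})^{-1}\Phi_t^{\top}\bs{y}_t$ cancel exactly, producing the clean identity
\begin{equation*}
\widetilde{R}_{\alpha,t}^2 \,=\, \sigma^2\ln\det(\bs{I}+\bs{K}_t/\alpha) + 2\sigma^2\ln(1/\delta) + \alpha B^2.
\end{equation*}

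With this identity, the comparison becomes elementary. Writing $a := (\sigma/\sqrt{\alpha})\sqrt{\ln\det(\bs{I}+\bs{K}_t/\alpha)+2\ln(1/\delta)}$, our scaled radius is $\widetilde{R}_{\alpha,t}/\sqrt{\alpha} = \sqrt{a^2 + B^2}$. The AY radius, once divided by $\sqrt{\lambda}$ to match our normalisation, is exactly $B + a$ (with $\alpha = \lambda$); CG's analogous scalar has the form $B + a'$ with $a' \geq a$, owing to the extra $+1$ inside CG's square root and the substitution of the maximum information gain for $\tfrac{1}{2}\ln\det(\bs{I}+\bs{K}_t/\alpha)$. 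The claim then follows from the elementary strict inequality $\sqrt{a^2+B^2} < a + B$ for $a,B > 0$, which is immediate from $(a+B)^2 - (a^2+B^2) = 2aB$. Positivity of $a$ is ensured by $\ln\det(\bs{I}+\bs{K}_t/\alpha) > 0$ for $t \geq 1$ (and by $\ln(1/\delta) > 0$ for $\delta < 1$), while $B > 0$ is assumed.

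I anticipate the main obstacle to be carefully translating AY's and CG's ellipsoidal radii (stated as Mahalanobis-norm bounds on $f-f^*$) into a scalar multiplying $\rho_{\alpha,t}(x)$ in a form directly comparable to $\widetilde{R}_{\alpha,t}/\sqrt{\alpha}$. Once AY's $\sqrt{\lambda}B + \sigma\sqrt{\ln\det(\bs{I}+\bs{K}_t/\lambda)+2\ln(1/\delta)}$ and CG's $B + \sigma\sqrt{2(\gamma_{t-1}+1+\ln(1/\delta))}$ are put in the additive form $B + a$ (respectively $B + a'$), the algebraic cancellation in $\widetilde{R}_{\alpha,t}^2$ and the one-line inequality $\sqrt{x^2+y^2} < x+y$ close the argument.
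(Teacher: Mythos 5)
Your proposal is correct and follows essentially the same route as the paper: the identity $\widetilde{R}_{\alpha,t}^2 = \sigma^2\ln\det(\bs{I}+\bs{K}_t/\alpha) + 2\sigma^2\ln(1/\delta) + \alpha B^2$ under $c=\sigma^2/\alpha$ is exactly the paper's Lemma~\ref{lem:special_rad_bound} (proved via the same push-through cancellation as Lemma~\ref{lem:quad_stuff}), and your closing inequality $\sqrt{a^2+B^2}<a+B$ is the paper's $\sqrt{a+b}<\sqrt{a}+\sqrt{b}$ in disguise. The only cosmetic difference is that the paper compares directly against the $\ln\det$-plus-$t\eta$ form of the IGP-UCB radius rather than the information-gain form you quote, but the argument is identical.
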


\subsection{Cumulative Regret Bounds}
\label{sec:regret}

We bound the cumulative regret in terms of two kernel-dependent quantities. The first is called the maximum information gain, and is defined as
\begin{equation*}
\gamma_{t}(\alpha) := \max_{x_1, \dots, x_t}\left\{\tfrac{1}{2}\ln\left(\det\left(\tfrac{1}{\alpha}\bs{K}_t + \bs{I}\right)\right)\right\}.
\end{equation*}

The second is defined as $\tau_t(\alpha) := |\mathcal{T}_t(\alpha)|$, where
\begin{equation}
\mathcal{T}_t(\alpha) = \{s \in [t]: \tfrac{1}{\alpha}\rho_{\alpha, s-1}^2(x_s) \geq 1\}.\label{eqn:tau_set}
\end{equation}

$\tau_t(\alpha)$ counts the number of rounds in which the scaled variance $\rho_{\alpha, s-1}^2(x_s)/\alpha$, also known as the elliptical potential, is large. We call $\tau_t(\alpha)$ the elliptical potential count. First, we upper bound our radius quantity $\widetilde{R}_{\alpha, t}$ in terms of the maximum information gain. We show that when $\alpha = \sigma^2/c$, the quadratic terms in $\widetilde{R}_{\alpha,t}$ cancel out (see Lemma \ref{lem:special_rad_bound}), and we are left with
\begin{align}
\widetilde{R}_{\sigma^2/c,t} &\leq \sqrt{2\sigma^2\gamma_t(\sigma^2/c) + \tfrac{\sigma^2B^2}{c} + 2\sigma^2\ln(\tfrac{1}{\delta})}.\label{eqn:rad_bound}
\end{align}

We now proceed to upper bound the cumulative regret of Kernel AMM-UCB with $\alpha = \sigma^2/c$. Using a standard optimism argument (see e.g. Prop. 1 in \citealt{russo2013eluder} and also Lemma \ref{lem:per_regret} in App. \ref{app:generic_regret}), the regret of any UCB algorithm at round $t$ is (with high probability) upper bounded by the difference between the upper and lower confidence bounds (at $x_t$) that it uses. For Kernel AMM-UCB, this bound is
\begin{equation}
r_t \leq \tfrac{2}{\sqrt{\sigma^2/c}}\widetilde{R}_{\sigma^2/c,t-1}\rho_{\sigma^2/c, t-1}(x_t).\label{eqn:per_round_regret}
\end{equation}\vspace{-4mm}

Because the confidence bounds used by Kernel CMM-UCB (with the same $c$) and Kernel DMM-UCB (with the same $c$ and with $\alpha = \sigma^2/c \in A$) are never looser than those used by Kernel AMM-UCB, these algorithms also satisfy Eq. (\ref{eqn:per_round_regret}). Next, as is now somewhat common in analyses of (generalised) linear bandit algorithms \citep{gales2022norm, lee2024improved, janz2024exploration}, we split $r_{1:T}$ into $\sum_{t \in \mathcal{T}_T(\sigma^2/c)}r_t$ and $\sum_{t \notin \mathcal{T}_T(\sigma^2/c)}r_t$. Using $\sup_{x}|f^{\star}(x)| \leq C$, the first part can be bounded by $2C\tau_{T}(\sigma^2/c)$. Eq. (\ref{eqn:rad_bound}), Eq. (\ref{eqn:per_round_regret}) and the Elliptical Potential Lemma (Lemma \ref{lem:elliptical}) are used to bound the second part in terms of $\gamma_{T}(\sigma^2/c)$, which yields the regret bound in Thm \ref{thm:regret_bound} (see App. \ref{app:generic_regret} for a full proof).
\begin{theorem}
For any covariance scale $c > 0$, with probability at least $1 - \delta$, for all $T \geq 1$, the cumulative regret of Kernel CMM-UCB, Kernel DMM-UCB (with a grid $A$ containing $\alpha = \sigma^2/c$) and Kernel AMM-UCB (with $\alpha = \sigma^2/c$) satisfies
\begin{equation*}
r_{1:T} \leq 2C\tau_T(\tfrac{\sigma^2}{c}) + \sigma\sqrt{24T\gamma_T(\tfrac{\sigma^2}{c})(\gamma_T(\tfrac{\sigma^2}{c}) + \tfrac{B^2}{2c} + \ln\tfrac{1}{\delta})}.
\end{equation*}
\label{thm:regret_bound}
\end{theorem}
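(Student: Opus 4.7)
\textbf{Proof proposal for Theorem \ref{thm:regret_bound}.}

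The plan is to reduce all three algorithms to the analytic per-round regret bound (\ref{eqn:per_round_regret}), sum over $t$ via Cauchy--Schwarz, and close the argument with the standard information-gain telescoping. First, I would argue that the per-round bound (\ref{eqn:per_round_regret}) applies uniformly to Kernel CMM-UCB, Kernel DMM-UCB (with $\alpha=\sigma^2/c$ in the grid $A$) and Kernel AMM-UCB. For any of these algorithms, Lemma \ref{lem:per_regret} gives $r_t \leq u_{t-1}(x_t) - \ell_{t-1}(x_t)$ for the UCB/LCB pair actually used, on the event (of probability $\geq 1-\delta$) that $f^* \in \mathcal{F}_{t-1}$ for all $t$. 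Since $\mathcal{F}_{t-1} \subseteq \widetilde{\mathcal{F}}_{t-1}$ and the grid DMM bound is the minimum over $A\ni\sigma^2/c$, each algorithm's UCB-LCB gap at its own $x_t$ is bounded by the analytic gap at that same $x_t$, which by Corollary \ref{cor:analytic_ucb} equals $2\widetilde{R}_{\sigma^2/c,t-1}\,\rho_{\sigma^2/c,t-1}(x_t)/\sqrt{\sigma^2/c}$.

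Next, I would apply Cauchy--Schwarz to $r_{1:T} = \sum_{t=1}^{T} r_t$ and monotonicity of $t \mapsto \widetilde{R}_{\sigma^2/c,t}$:
\begin{equation*}
r_{1:T}^2 \,\leq\, T\sum_{t=1}^{T} r_t^2 \,\leq\, \frac{4\,T\,\widetilde{R}_{\sigma^2/c,T}^2}{\sigma^2/c}\sum_{t=1}^{T}\rho_{\sigma^2/c,t-1}^2(x_t).
\end{equation*}
The sum $\sum_t \rho_{\alpha,t-1}^2(x_t)$ is then controlled by the standard ``elliptical potential'' / information-gain inequality: under the bounded-kernel consequence of Assumption \ref{ass:bounded_reward} (so that $\rho_{\alpha,t-1}^2(x_t)/\alpha$ lies in a bounded interval), one has $\rho_{\alpha,t-1}^2(x_t) \leq C_\kappa\,\alpha\,\ln(1+\rho_{\alpha,t-1}^2(x_t)/\alpha)$ for a constant $C_\kappa$ depending only on $\kappa$, and telescoping via $\det(\bs{K}_T/\alpha+\bs{I})$ yields $\sum_{t=1}^T \rho_{\alpha,t-1}^2(x_t) \leq 2C_\kappa\,\alpha\,\gamma_T(\alpha)$. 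Plugging in $\alpha=\sigma^2/c$ gives $\sum_t \rho_{\sigma^2/c,t-1}^2(x_t) = \mathcal{O}((\sigma^2/c)\gamma_T(\sigma^2/c))$.

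Finally, I would substitute the radius bound stated just above the theorem, $\widetilde{R}_{\sigma^2/c,T}^2 \leq 2\sigma^2\gamma_T(\sigma^2/c) + \sigma^2 B^2/c + 2\sigma^2\ln(1/\delta)$, to obtain
\begin{equation*}
r_{1:T}^2 \,=\, \mathcal{O}\!\left(T\,\gamma_T(\sigma^2/c)\bigl(\gamma_T(\sigma^2/c) + B^2/c + \ln(1/\delta)\bigr)\right),
\end{equation*}
which yields the claimed rate after taking a square root.

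The main obstacle I anticipate is the bookkeeping in the first step: the three algorithms select different action sequences $x_t$, so the reduction to a common per-round bound has to be done carefully, exploiting that (i) the exact CMM-UCB upper and lower bounds are sandwiched by the $\widetilde{\mathcal{F}}_t$ bounds pointwise in $x$, and (ii) the DMM-UCB minimum over the grid is at most the analytic UCB at $\alpha=\sigma^2/c$, while the companion LCB (a maximum over the same grid) is at least the analytic LCB. Once that pointwise sandwich is in place, the remaining steps are essentially the Srinivas et al.\ / Chowdhury--Gopalan regret template, and the $c$-dependence of the final bound is read off directly from the radius bound.
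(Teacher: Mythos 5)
Your proposal follows the same overall template as the paper's proof: reduce all three algorithms to the analytic per-round gap (the paper's Lemma \ref{lem:per_regret} does exactly the sandwich you describe), apply Cauchy--Schwarz, substitute the radius bound from Lemma \ref{lem:special_rad_bound}, and telescope via the log-determinant. The one place where you genuinely diverge is the control of $\sum_t \rho_{\alpha,t-1}^2(x_t)$. The paper first caps the per-round regret by $2C$ using the second half of Assumption \ref{ass:bounded_reward}, writes $r_t \leq 2\max(C,\widetilde{R}_{\alpha,t-1})\min\bigl(1,\rho_{\alpha,t-1}(x_t)/\sqrt{\alpha}\bigr)$, and then applies its Elliptical Potential Lemma \ref{lem:elliptical}, which rests on the inequality $\min(1,x)\leq\tfrac{3}{2}\ln(1+x)$ valid for \emph{all} $x\geq 0$; this yields the universal constant $3/2$ for any $\alpha>0$, at the price of a $\max(C^2,\cdot)$ factor in the final bound. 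You skip the truncation and instead use $x\leq \frac{M}{\ln(1+M)}\ln(1+x)$ on $[0,M]$ with $M=\kappa/\alpha$. That works, but your claim that the resulting constant ``depends only on $\kappa$'' is not right: it is $\frac{\kappa/\alpha}{\ln(1+\kappa/\alpha)}$ with $\alpha=\sigma^2/c$, i.e.\ it depends on $\kappa c/\sigma^2$ and grows without bound as $c\to\infty$. Since the theorem deliberately tracks the $c$-dependence inside the big-O (this is what allows $c$ to be tuned with $T$ in the Mat\'{e}rn discussion), your version smuggles a $c$-dependent factor into the implicit constant; it is harmless for the paper's actual choices ($c\leq 1$, and $c\to 0$ for Mat\'{e}rn), but it does not deliver the stated bound uniformly over $c>0$. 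A second small point: you invoke a ``bounded-kernel consequence of Assumption \ref{ass:bounded_reward},'' but that assumption only posits $\sup_x|f^*(x)|\leq C$; a uniformly bounded kernel is a sufficient condition for that, not a consequence, so your route quietly adds a hypothesis the paper's route does not need at that step (the paper needs $C$ instead). In short: same skeleton, one differently executed step, and that step needs its constant stated honestly as a function of $\kappa c/\sigma^2$ (or replaced by the paper's $\min(1,\cdot)$ device) to match the theorem as written.
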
\vspace{-3mm}
Unless $\sigma$ is close enough to $0$, the dominant term in this regret bound is the second term. Its dependence on $T$ and the maximum information gain matches that of the best existing regret bound for KernelUCB \citep{whitehouse2023sublinear}, although the dependence on $\gamma_T$ is sub-optimal by a factor of $\sqrt{\gamma_T}$. However, due to the lower bound in \citep{lattimore2023lower}, this cannot be improved by using a better confidence sequence for $f^{\star}$.

We use bounds on $\gamma_T(\alpha)$ to derive new upper bounds $\tau_{T}(\alpha)$. Bounds on $\tau_T(\alpha)$ exist for the linear setting (e.g. \citealp{gales2022norm}), where $k(x, x^{\prime}) = x^{\top}x^{\prime}$, and $\gamma_T(\alpha)$ admits a relatively simple upper bound. However, for more general kernels, we must resort to more elaborate bounds on $\gamma_T(\alpha)$, which makes bounding $\tau_{T}(\alpha)$ rather complicated (cf. Lemma \ref{lem:ep_poly} and Lemma \ref{lem:ep_exp}). These bounds allow us to: a) make the dependence of Thm. (\ref{thm:regret_bound}) on $T$ explicit for specific kernels; b) provide sub-$\sqrt{T}$ regret bounds for the noiseless case (Sec. \ref{sec:noiseless}).

For the RBF kernel, and with $c \propto 1$, we show in Prop. \ref{pro:exp_regret} that the regret bound in Thm. \ref{thm:regret_bound} is $\mathcal{O}(\sqrt{T}(\ln(T))^{d+1})$. For the Mat\'{e}rn kernel with smoothness parameter $\nu$ and $c \propto T^{-\frac{d}{2d + 2\nu}}$, we show in Prop. \ref{pro:poly_regret} that the regret bound in Thm. \ref{thm:regret_bound} is $\mathcal{O}(T^{\frac{2d + \nu}{2d + 2\nu}}(\ln(T))^{\frac{2\nu}{d + 2\nu}})$. The same rates are obtained in \citep{whitehouse2023sublinear}, which are the best known rates for (vanilla) KernelUCB.


\subsection{The Noiseless Setting}
\label{sec:noiseless}

In a 2022 COLT open problem, \citet{vakili2022open} discussed the lack of order-optimal regret bounds for noiseless kernel bandits. Though the open problem has since been resolved \citep{salgia2024random}, we believe it is still not known whether KernelUCB can achieve sub-$\sqrt{T}$ regret bounds in the noiseless setting (see Sec. 4.1. in \citealp{vakili2022open}). We show that this is in fact the case. In the noiseless setting, the constraint $\|\bs{f}_t - \bs{y}_t\|_2 \leq R_t$ in Lem. \ref{lem:rkhs_conf_set} can be replaced with $\|\bs{f}_t - \bs{y}_t\|_2  = 0$. Since $R_t = 0$, the radius of the analytic confidence bound in Cor. \ref{cor:analytic_ucb} becomes
\begin{equation*}
\widetilde{R}_{\alpha, t}^2 = \alpha B^2 - \bs{y}_t^{\top}(\tfrac{1}{\alpha}\bs{K}_t + \bs{I})^{-1}\bs{y}_t \leq \alpha B^2
\end{equation*}
Following the proof of Thm. \ref{thm:regret_bound}, one can show that, for all $\alpha > 0$ and $T \geq 1$,
\begin{equation*}
r_{1:T} \leq 2C\tau_T(\alpha) + \sqrt{12\alpha B^2T\gamma_T(\alpha)}.
\end{equation*}

\begin{table*}
\caption{Cumulative regret of our KernelUCB-style algorithms as well as AY-GP-UCB, IGP-UCB and a random baseline after $T=1000$ rounds with $d=3$. We show the mean $\pm$ standard deviation over 10 repetitions.}
\label{tab:regret}
\centering
\resizebox{0.925\textwidth}{!}{\begin{tabular}{lcccccc}
\toprule
& \multicolumn{2}{c}{RBF Kernel} & \multicolumn{2}{c}{Mat\'{e}rn Kernel ($\nu=5/2$)} & \multicolumn{2}{c}{Mat\'{e}rn Kernel ($\nu=3/2$)}\\
\cmidrule(r){2-7}
 & $\ell = 0.5$ & $\ell = 0.2$ & $\ell = 0.5$ & $\ell = 0.2$ & $\ell = 0.5$ & $\ell = 0.2$\\
\midrule
Kernel DMM-UCB (Ours) & \textbf{32.2} $\pm$ 20.9 & \textbf{491.4} $\pm$ 117.1 & \textbf{129.5} $\pm$ 45.6 & \textbf{795.1} $\pm$ 206.0 & \textbf{195.6} $\pm$ 78.0 & \textbf{814.1} $\pm$ 344.4\\
Kernel AMM-UCB (Ours) & 88.8 $\pm$ 6.1 & 1206.2 $\pm$ 20.8 & 197.0 $\pm$ 24.4 & 1661.5 $\pm$ 90.1 & 316.1 $\pm$ 51.1 & 1741.2 $\pm$ 351.2\\
AY-GP-UCB	& 136.9 $\pm$ 12.7 & 1518.4 $\pm$ 38.9 & 331.7 $\pm$ 45.2 & 2382.4 $\pm$ 135.4 & 546.0 $\pm$ 70.0 & 2421.3 $\pm$ 568.5\\
IGP-UCB		& 314.1 $\pm$ 110.5 & 1433.0 $\pm$ 122.8 & 553.3 $\pm$ 67.5 & 1853.1 $\pm$ 105.7 & 655.6 $\pm$ 67.4 & 1707.5 $\pm$ 375.5\\
Random	& 4282.4 $\pm$ 1015.4 & 3872.4 $\pm$ 783.7 & 4264.7 $\pm$ 778.0 & 3677.5 $\pm$ 559.2 & 4175.1 $\pm$ 681.0 & 3442.0 $\pm$ 1080.4\\
\bottomrule
\end{tabular}}
\end{table*}
\begin{figure*}
\centering
\includegraphics[width=0.925\textwidth]{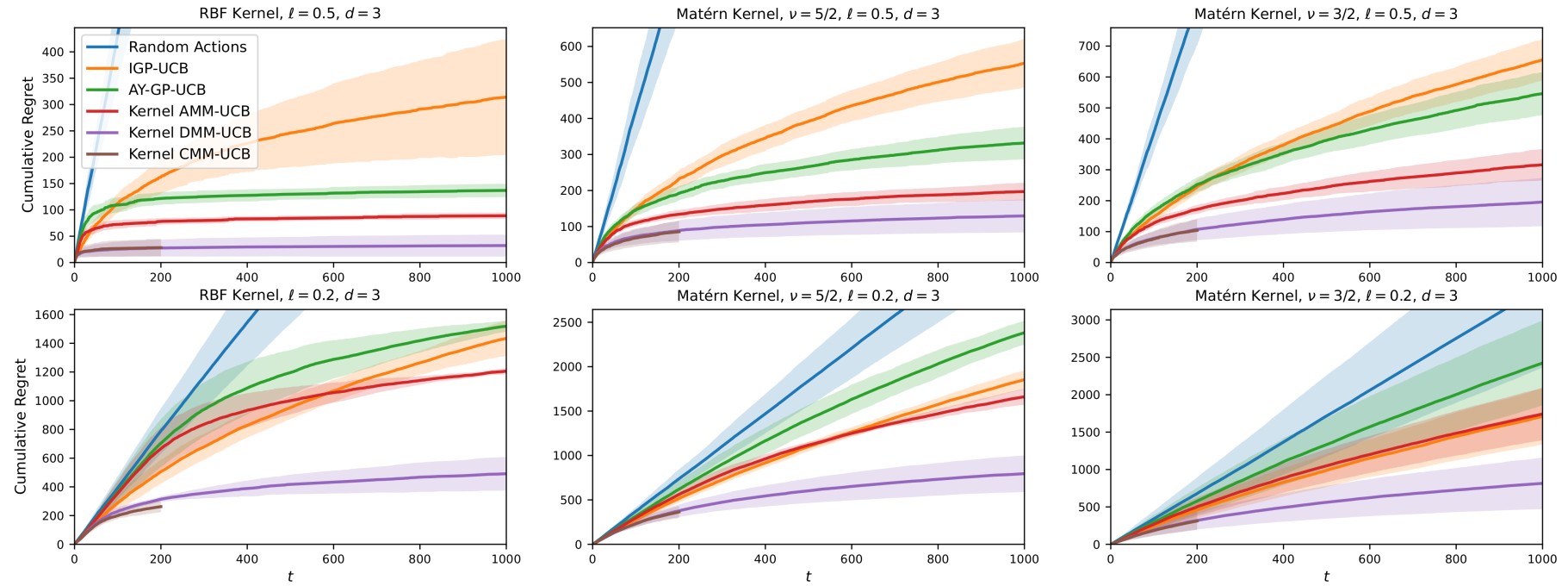}
\caption{Cumulative regret of our KernelUCB-style algorithms as well as AY-GP-UCB, IGP-UCB and a random baseline over $T=1000$ rounds for $d=3$ and various kernels (columns) and length scales (rows). We show the mean $\pm$ standard deviation over 10 repetitions.}
\label{fig:regret}
\end{figure*}

From here, in App. \ref{app:regret_noiseless}, we show that for the RBF kernel, if $\alpha \propto 1/T$, then this bound becomes $r_{1:T} = \mathcal{O}(\ln(T)^{d+1})$. For the Mat\'{e}rn kernel, if $\alpha \propto T^{-\frac{2\nu^2 + 2\nu d}{2\nu^2 + 2\nu d + d^2}}$, then the regret bound becomes $r_{1:T} = \mathcal{O}(T^{\frac{\nu d + d^2}{2\nu^2 + 2\nu d + d^2}}\ln(T))$. Whenever $\nu > d/\sqrt{2}$, this bound grows slower that $\sqrt{T}$.

\section{EXPERIMENTS}
\label{sec:experiments}

We aim to verify that when our confidence bounds replace existing ones, KernelUCB shows better empirical performance at a similar computational cost. We compare \emph{Kernel CMM-UCB}, \emph{Kernel DMM-UCB}, \emph{Kernel AMM-UCB}, \emph{AY-GP-UCB}, \emph{IGP-UCB} and \emph{Random}, which chooses actions uniformly at random.

We run each algorithm in several kernel bandit problems. To generate each reward function, we sample 20 inducing points $z_1, \dots, z_{20}$ uniformly at random from the hypercube $[0,1]^d$ and a 20-dimensional weight vector $\bs{w} \in \mathbb{R}^{20}$ from a standard normal distribution. The test function is $f^{\star}(x) = b\sum_{i=1}^{20}w_ik(x, z_i)$, with $b$ being set such that $\norm{f^{\star}}_{\mathcal{H}} = B$. The noise variables $\epsilon_1, \epsilon_2, \dots$ are drawn independently from a normal distribution with mean 0 and standard deviation $\sigma$. In each round $t$, the action set $\mathcal{X}_t$ consists of 100 $d$-dimensional vectors drawn uniformly at random from the hypercube $[0, 1]^d$. We present results for $d \in \{2, 3, 4\}$ and with the RBF and Mat\'{e}rn kernels with length scale $\ell \in \{0.2, 0.5\}$.

We run each algorithm for $T=1000$ rounds with $\delta = 0.01$ and the true values $B = 10$ and $\sigma = 0.1$. We run IGP-UCB with the recommend value of $\eta = 2/T$. For each of our methods, we set $c=1$ for the RBF kernel and $c = T^{-\frac{d}{2d + 2\nu}}$ for the Mat\'{e}rn kernel. Equivalently, for AY-GP-UCB, we set $\lambda = \sigma^2$ for the RBF kernel and $\lambda = \sigma^2T^{\frac{d}{2d + 2\nu}}$ for the Mat\'{e}rn kernel. We run Kernel AMM-UCB with $\alpha = \sigma^2/c$ and Kernel DMM-UCB with $A = \{0.1\sigma^2/c, 0.3\sigma^2/c, \sigma^2/c, 3\sigma^2/c, 10\sigma^2/c\}$. Due to prohibitive cost, we only run Kernel CMM-UCB for 200 rounds. All experiments were run on a single computer with an Intel i5-1145G7 CPU.

Fig. \ref{fig:regret} shows the cumulative regret of each algorithm with $d=3$ (results with $d \in \{2,4\}$ are shown in App. \ref{app:2d_and_4d}). The cumulative regret at $T=1000$ is displayed in Tab. \ref{tab:regret}. Our Kernel CMM-UCB algorithm consistently achieves the lowest cumulative regret over the first 200 steps, and is closely followed by Kernel DMM-UCB which achieves the lowest cumulative regret over 1000 steps. Fig. \ref{fig:time_per_step} (in App. \ref{app:time_per_step}) shows the wall-clock time per step for each method. IGP-UCB, AY-GP-UCB and Kernel AMM-UCB take approximately the same time per step. Since we used a grid $A$ of size 5, the time per step for Kernel DMM-UCB is a little under 5 times that of Kernel AMM-UCB. The time per step for Kernel CMM-UCB grows at the fastest rate in $t$. At $t=200$, the time per step for Kernel CMM-UCB is more than 15 seconds, whereas the time per step for every other method is below 0.02 seconds.

\section{CONCLUSION}
In this paper, we developed new confidence bounds for sequential kernel regression. In our theoretical analysis, we proved that our new confidence bounds are always tighter than existing confidence bounds, and we showed that our KernelUCB algorithms can have sub-$\sqrt{T}$ cumulative regret in the noiseless setting. Our experimental results suggest that our Kernel DMM-UCB algorithm is the preferred choice. It closely matches the empirical performance of our effective but expensive Kernel CMM-UCB algorithm, yet it has roughly the same cost as our Kernel AMM-UCB method.

Though we applied our confidence bounds only to the kernel bandit problem, and the KernelUCB algorithm in particular, they are a generic tool that could be used anywhere that existing confidence bounds are used. For instance, one could use our confidence bounds in the $\pi$-GP-UCB algorithm \citep{janz2020bandit}, which would give a better rate for the Mat\'{e}rn kernel. In addition, one could apply our confidence bounds to kernelised reinforcement learning \citep{yang2020function, liu2022provably, vakili2023kernelized} or adaptive control \citep{kakade2020information}.

Finally, we comment on some limitations of our work. We assume that the kernel as well as reasonably good upper bounds on $\sigma$ and $B$ are known in advance, which is often not the case in practice. While using a loose upper bound on $\sigma$ appears to have little impact, we found that using a loose bound on $B$ can harm empirical performance (see Fig. \ref{fig:loose_bsig}). In future work, we would like to address this limitation by using model selection methods to estimate $\norm{f^{\star}}_{\mathcal{H}}$ and choose the kernel.



\bibliography{main}
\newpage
\appendix
\onecolumn
 
\section{Tail Bound Derivations}

\subsection{Derivation of Eq. (\ref{eqn:pre_radius_def})}
\label{app:pre_radius_derivation}

For convenience, we first re-state the setting in which the tail bound holds. $(\mathcal{D}_t|t \in \mathbb{N})$ is any filtration such that $x_t$ is $\mathcal{D}_{t-1}$-measurable and $y_t$ is $\mathcal{D}_t$-measurable (e.g. $\mathcal{D}_t = \sigma(x_1, y_1, \dots, x_{t}, y_{t}, x_{t+1})$). $(Z_t(g)| t \in \mathbb{N}, g \in \mathbb{R})$ is a stochastic process, indexed by a time $t$ and a real number $g$, which is adapted to a filtration $(\mathcal{D}_t|t \in \mathbb{N})$. $(g_t|t \in \mathbb{N})$ is a sequence of predictable guesses and $(\lambda_t|t \in \mathbb{N})$ is a sequence of predictable random variables. $(P_t| t \in \mathbb{N})$ is any sequence of mixture distributions that satisfies: (a) $P_t$ is a distribution over $\bs{g}_t \in \mathbb{R}^t$; (b) $P_t$ is $\mathcal{D}_{t-1}$-measurable; (c) $\int P_t(\bs{g}_t)dg_t=P_{t-1}(\bs{g}_{t-1})$ for all $t$. Define
\begin{equation}
M_t(\bs{g}_t, \bs{\lambda}_t) := \mathrm{exp}\left(\sum_{s=1}^{t}\lambda_s Z_s(g_s) - \psi_s(g_s, \lambda_s)\right), \quad \psi_t(g_t, \lambda_t) := \ln\left(\mathbb{E}\left[\mathrm{exp}(\lambda_t Z_t(g_t)) | \mathcal{D}_{t-1}\right]\right).\label{eqn:def_Mt_2}
\end{equation}

Due to Thm. 5.1 of \citep{flynn2023improved}, for any $\delta \in (0,1]$, with probability at least $1 - \delta$,
\begin{equation}
\forall t \geq 1: \quad \mathrm{ln}\left(\mathop{\mathbb{E}}_{\bs{g}_t \sim P_t}\left[M_t(\bs{g}_t, \bs{\lambda}_t)\right]\right) \leq \mathrm{ln}(1/\delta).\label{eqn:adaptive_tail_bound_2}
\end{equation}

We choose $Z_t(g_t) = (g_t - f^{\star}(x_t))\epsilon_t$. Using the $\sigma$-sub-Gaussian property of the noise variables, we have
\begin{equation*}
\psi_t(g_t, \lambda_t) = \ln\left(\mathbb{E}\left[\mathrm{exp}(\lambda_t(g_t - f^{\star}(x_t))\epsilon_t) | \mathcal{D}_{t-1}\right]\right) \leq \frac{\sigma^2\lambda_t^2(g_t - f^{\star}(x_t))^2}{2}.
\end{equation*}

Combining this with (\ref{eqn:adaptive_tail_bound_2}), we have
\begin{align}
\mathrm{ln}&\left(\mathop{\mathbb{E}}_{\bs{g}_t \sim P_t}\left[\mathrm{exp}\left(\sum_{s=1}^{t}\lambda(g_s - f^{\star}(x_s))\epsilon_s - \frac{\sigma^2\lambda_s^2(g_s - f^{\star}(x_s))^2}{2}\right)\right]\right)\label{eqn:supermart}\\
&\leq \mathrm{ln}\left(\mathop{\mathbb{E}}_{\bs{g}_t \sim P_t}\left[\mathrm{exp}\left(\sum_{s=1}^{t}\lambda Z_s(g_s) - \psi_s(g_s, \lambda)\right)\right]\right) = \mathrm{ln}\left(\mathop{\mathbb{E}}_{\bs{g}_t \sim P_t}\left[M_t(\bs{g}_t, \lambda)\right]\right) \leq \mathrm{ln}(1/\delta).\nonumber
\end{align}

Next, we rearrange the integrand on the LHS of (\ref{eqn:supermart}). For every $s \in [t]$, we have
\begin{align*}
(f^{\star}(x_s) - y_s)^2 - (g_s - y_s)^2 &= (\epsilon_s)^2 - (g_s - f^{\star}(x_s) - \epsilon_s)^2\\
&= (\epsilon_s)^2 - (g_s - f^{\star}(x_s))^2 + 2(g_s - f^{\star}(x_s))\epsilon_s - (\epsilon_s)^2\\
&= 2(g_s - f^{\star}(x_s))\epsilon_s - (g_s - f^{\star}(x_s))^2.
\end{align*}

This means that
\begin{align*}
\lambda_s(g_s - f^{\star}(x_s))\epsilon_s - \frac{\sigma^2\lambda_s^2(g_s - f^{\star}(x_s))^2}{2} = \frac{\lambda}{2}(f^{\star}(x_s) - y_s)^2 - \frac{\lambda}{2}(g_s - y_s)^2 + \frac{1}{2}(\lambda_s - \sigma^2\lambda^2)(g_s - f^{\star}(x_s))^2.
\end{align*}

Eq. (\ref{eqn:supermart}) can now be re-written as
\begin{equation}
\mathrm{ln}\left(\mathop{\mathbb{E}}_{\bs{g}_t \sim P_t}\left[\mathrm{exp}\left(\sum_{s=1}^{t}\frac{\lambda}{2}(f^{\star}(x_s) - y_s)^2 - \frac{\lambda_s}{2}(g_s - y_s)^2 + \frac{1}{2}(\lambda_s - \sigma^2\lambda_s^2)(g_s - f^{\star}(x_s))^2\right)\right]\right) \leq \ln(1/\delta).\label{eqn:pre_cancel}
\end{equation}

We set $\lambda_s \equiv 1/\sigma^2$. For this choice, we have $\lambda_s - \sigma^2\lambda_s^2 = 0$, and (\ref{eqn:pre_cancel}) becomes
\begin{equation}
\mathrm{ln}\left(\mathop{\mathbb{E}}_{\bs{g}_t \sim P_t}\left[\mathrm{exp}\left(\sum_{s=1}^{t}\frac{1}{2\sigma^2}(f^{\star}(x_s) - y_s)^2 - \frac{1}{2\sigma^2}(g_s - y_s)^2\right)\right]\right) \leq \ln(1/\delta).\label{eqn:pre_norm_notation}
\end{equation}

By defining $\bs{f}_{t}^{\star} := [f^{\star}(x_1), \dots, f^{\star}(x_t)]^{\top}$ and $\bs{y}_{t} := [y_1, \dots, y_t]^{\top}$, rearranging (\ref{eqn:pre_norm_notation}), and then writing the resulting inequality using squared norms rather than sums of squares, we obtain
\begin{equation*}
\Vert\bs{f}_{t}^{\star}-\bs{y}_{t}\Vert_2^2 \leq -2\sigma^2\ln\left(\mathop{\mathbb{E}}_{\bs{g}_{t} \sim P_t}\left[\mathrm{exp}\left(-\frac{1}{2\sigma^2}\Vert\bs{g}_{t}-\bs{y}_{t}\Vert_2^2\right)\right]\right) + 2\sigma^2\ln(1/\delta).
\end{equation*}

\subsection{Derivation of Eq. (\ref{eqn:radius_def})}
\label{app:radius_derivation}

Eq. (\ref{eqn:radius_def}) is derived from Eq. (\ref{eqn:pre_radius_def}) using the following Lemma, which is proved in App. B.2 of \citep{flynn2023improved}.

\begin{lemma}[\citep{flynn2023improved}]
For any $t$, any $\bs{\mu}_t$ and any positive semi-definite $\bs{T}_t$, we have
\begin{equation*}
\mathop{\mathbb{E}}_{\bs{g}_{t} \sim \mathcal{N}(\bs{\mu}, \bs{T}_t)}\left[\mathrm{exp}\left(-\frac{1}{2\sigma^2}\Vert\bs{g}_{t}-\bs{y}_{t}\Vert_2^2\right)\right] = \sqrt{\frac{1}{\det(\bs{I}+\bs{T}_t/\sigma^2)}}\exp\left(-\frac{1}{2\sigma^2}(\bs{\mu}_t - \bs{y}_{t})^{\top}\left(\bs{I}+\frac{\bs{T}_{t}}{\sigma^2}\right)^{-1}(\bs{\mu}_t - \bs{y}_{t})\right)
\end{equation*}
\label{lem:gaussian_int}
\end{lemma}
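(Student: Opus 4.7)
The identity is a standard Gaussian integral, so the plan is to reduce the expectation to a tractable Gaussian integral, complete the square in the exponent, and then simplify the resulting matrix expression so that it matches the claimed right-hand side. Throughout, I treat $\bs{T}_t$ as positive semi-definite by working with its symmetric square root $\bs{T}_t^{1/2}$; if one prefers to divide through by the density of $\bs{g}_t$, one may replace $\bs{T}_t$ by $\bs{T}_t + \varepsilon\bs{I}$ (strictly positive definite) and take $\varepsilon \downarrow 0$ at the end, since both sides depend continuously on $\bs{T}_t$.

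First, I would substitute $\bs{g}_t = \bs{\mu}_t + \bs{T}_t^{1/2}\bs{Z}$ with $\bs{Z}\sim\mathcal{N}(\bs{0},\bs{I})$ and set $\bs{\nu}:=\bs{\mu}_t-\bs{y}_t$. Expanding $\Vert\bs{g}_t-\bs{y}_t\Vert_2^2 = \Vert\bs{\nu}\Vert_2^2 + 2\bs{\nu}^{\top}\bs{T}_t^{1/2}\bs{Z} + \bs{Z}^{\top}\bs{T}_t\bs{Z}$ and pulling the deterministic factor $\exp(-\Vert\bs{\nu}\Vert_2^2/(2\sigma^2))$ out of the expectation, the problem reduces to a Gaussian integral of the form $\mathbb{E}\bigl[\exp(-\tfrac12\bs{Z}^{\top}\bs{M}\bs{Z} - \bs{b}^{\top}\bs{Z})\bigr]$ with $\bs{M}:=\bs{I}+\bs{T}_t/\sigma^2$ and $\bs{b}:=\bs{T}_t^{1/2}\bs{\nu}/\sigma^2$.

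Second, I would complete the square: $-\tfrac12\bs{Z}^{\top}\bs{M}\bs{Z}-\bs{b}^{\top}\bs{Z} = -\tfrac12(\bs{Z}+\bs{M}^{-1}\bs{b})^{\top}\bs{M}(\bs{Z}+\bs{M}^{-1}\bs{b}) + \tfrac12\bs{b}^{\top}\bs{M}^{-1}\bs{b}$. Integrating the shifted Gaussian against the standard normal density yields the normalisation factor $1/\sqrt{\det(\bs{M})} = 1/\sqrt{\det(\bs{I}+\bs{T}_t/\sigma^2)}$, which gives the prefactor on the right-hand side. The remaining exponent becomes
\begin{equation*}
-\frac{1}{2\sigma^2}\Vert\bs{\nu}\Vert_2^2 \;+\; \frac{1}{2\sigma^4}\,\bs{\nu}^{\top}\bs{T}_t^{1/2}\bigl(\bs{I}+\bs{T}_t/\sigma^2\bigr)^{-1}\bs{T}_t^{1/2}\bs{\nu}.
\end{equation*}

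Third, I would simplify this exponent using two matrix identities: (a) since $\bs{T}_t^{1/2}$ commutes with any polynomial in $\bs{T}_t$, one has $\bs{T}_t^{1/2}(\bs{I}+\bs{T}_t/\sigma^2)^{-1}\bs{T}_t^{1/2} = \bs{T}_t(\bs{I}+\bs{T}_t/\sigma^2)^{-1}$; and (b) $\bs{T}_t(\bs{I}+\bs{T}_t/\sigma^2)^{-1} = \sigma^2\bigl(\bs{I} - (\bs{I}+\bs{T}_t/\sigma^2)^{-1}\bigr)$, which follows from writing $\bs{I} - (\bs{I}+\bs{T}_t/\sigma^2)^{-1} = (\bs{T}_t/\sigma^2)(\bs{I}+\bs{T}_t/\sigma^2)^{-1}$. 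Substituting (b) into the exponent, the bare $-\Vert\bs{\nu}\Vert_2^2/(2\sigma^2)$ term cancels against the $+\bs{\nu}^{\top}\bs{I}\bs{\nu}/(2\sigma^2)$ contribution, leaving precisely $-\tfrac{1}{2\sigma^2}\bs{\nu}^{\top}(\bs{I}+\bs{T}_t/\sigma^2)^{-1}\bs{\nu}$, which is the claimed exponent.

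There is no real obstacle here — this is a direct Gaussian computation. The only delicate point is the PSD (as opposed to PD) case, which I would handle either through the square-root parametrisation above (which is defined for any PSD $\bs{T}_t$ and never requires inverting $\bs{T}_t$ itself) or by the continuity argument with $\bs{T}_t+\varepsilon\bs{I}$; in either case nothing substantive changes.
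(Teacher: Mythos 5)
Your proof is correct. Note that the paper does not prove this lemma itself --- it simply cites App.\ B.2 of \citet{flynn2023improved} --- so there is no in-paper argument to compare against; your computation (reparametrise via $\bs{T}_t^{1/2}\bs{Z}$, complete the square to get the $1/\sqrt{\det(\bs{I}+\bs{T}_t/\sigma^2)}$ normaliser, then use $\bs{A}(\bs{I}+\bs{A})^{-1}=\bs{I}-(\bs{I}+\bs{A})^{-1}$ to collapse the exponent) is the standard derivation of this Gaussian integral and checks out line by line, and your square-root parametrisation cleanly handles the merely positive semi-definite case without any limiting argument.
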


Combining Eq. (\ref{eqn:pre_radius_def}) and Lemma \ref{lem:gaussian_int}, we obtain
\begin{align}
\Vert\bs{f}_{t}^{\star}-\bs{y}_{t}\Vert_2^2 &\leq -2\sigma^2\ln\left(\mathop{\mathbb{E}}_{\bs{g}_{t} \sim \mathcal{N}(\bs{0}, c\bs{K}_t)}\left[\mathrm{exp}\left(-\frac{1}{2\sigma^2}\Vert\bs{g}_{t}-\bs{y}_{t}\Vert_2^2\right)\right]\right) + 2\sigma^2\ln(1/\delta).\label{eqn:general_radius}\\
&= \bs{y}_{t}^{\top}\left(\bs{I}+\frac{c\bs{K}_{t}}{\sigma^2}\right)^{-1}\bs{y}_{t} + \sigma^2\ln\left(\det\left(\bs{I}+\frac{c\bs{K}_{t}}{\sigma^2}\right)\right)+2\sigma^2\ln(1/\delta) = (R_t(\delta))^2.\nonumber
\end{align}

\section{Confidence Sequence and Confidence Bound Derivations}

In this section, we provide the full derivations of our confidence sequences and confidence bounds from Section \ref{sec:confidence_bounds}. Before proving our main results, we state and prove some useful lemmas. 

\subsection{Useful Lemmas}

Lemma \ref{lem:complete_square} allows us to express weighted sums of the quadratic constraints that define $\mathcal{F}_t$ (from Lemma \ref{lem:rkhs_conf_set}) as a single quadratic constraint.

\begin{lemma}
For any $\alpha > 0$ and any $f \in \mathcal{H}$,
\begin{equation*}
\Vert\bs{f}_t - \bs{y}_t\Vert_2^2 + \alpha\norm{f}_{\mathcal{H}}^2 = \norm{\left(\bs{V}_t + \alpha\bs{I}_{\mathcal{H}}\right)^{1/2}(f - \mu_{\alpha,t})}_{\mathcal{H}}^2 + \bs{y}_t^{\top}\bs{y}_t - \bs{y}_t^{\top}\Phi_t(\bs{V}_t + \alpha\bs{I}_{\mathcal{H}})^{-1}\Phi_t^{\top}\bs{y}_t.
\end{equation*}
\label{lem:complete_square}
\end{lemma}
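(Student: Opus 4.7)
The plan is to carry out the standard ``complete the square'' calculation for kernel ridge regression, but performed directly in the RKHS rather than in coordinates. The key observation is that, using the paper's matrix notation, $\bs{f}_t = \Phi_t f$, $\bs{V}_t = \Phi_t^{\top}\Phi_t$, and the kernel ridge estimate can equivalently be written as $\mu_{\alpha,t} = (\bs{V}_t + \alpha\bs{I}_{\mathcal{H}})^{-1}\Phi_t^{\top}\bs{y}_t$. This last form follows from the push-through identity $\Phi_t^{\top}(\bs{K}_t + \alpha\bs{I})^{-1} = (\bs{V}_t + \alpha\bs{I}_{\mathcal{H}})^{-1}\Phi_t^{\top}$, which I would verify at the start by multiplying both sides by $\bs{V}_t + \alpha\bs{I}_{\mathcal{H}}$ on the left and $\bs{K}_t + \alpha\bs{I}$ on the right and using $\Phi_t^{\top}\bs{K}_t = \bs{V}_t\Phi_t^{\top}$.

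Next I would expand the left-hand side using these substitutions:
\begin{equation*}
\Vert\Phi_t f - \bs{y}_t\Vert_2^2 + \alpha\norm{f}_{\mathcal{H}}^2 = f^{\top}(\bs{V}_t + \alpha\bs{I}_{\mathcal{H}})f - 2f^{\top}\Phi_t^{\top}\bs{y}_t + \bs{y}_t^{\top}\bs{y}_t.
\end{equation*}
Then I would expand the squared-norm term on the right-hand side using the self-adjointness of $\bs{V}_t + \alpha\bs{I}_{\mathcal{H}}$:
\begin{equation*}
\norm{(\bs{V}_t + \alpha\bs{I}_{\mathcal{H}})^{1/2}(f - \mu_{\alpha,t})}_{\mathcal{H}}^2 = f^{\top}(\bs{V}_t + \alpha\bs{I}_{\mathcal{H}})f - 2f^{\top}(\bs{V}_t + \alpha\bs{I}_{\mathcal{H}})\mu_{\alpha,t} + \mu_{\alpha,t}^{\top}(\bs{V}_t + \alpha\bs{I}_{\mathcal{H}})\mu_{\alpha,t}.
\end{equation*}
Plugging in $(\bs{V}_t + \alpha\bs{I}_{\mathcal{H}})\mu_{\alpha,t} = \Phi_t^{\top}\bs{y}_t$, the linear-in-$f$ terms match between the two expansions, and the quadratic-in-$f$ terms also match. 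Subtracting, I am left to verify that $\mu_{\alpha,t}^{\top}\Phi_t^{\top}\bs{y}_t = \bs{y}_t^{\top}\Phi_t(\bs{V}_t + \alpha\bs{I}_{\mathcal{H}})^{-1}\Phi_t^{\top}\bs{y}_t$, which is immediate from the alternative form of $\mu_{\alpha,t}$ and the self-adjointness of $(\bs{V}_t + \alpha\bs{I}_{\mathcal{H}})^{-1}$. Rearranging yields the claimed identity.

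The only potentially subtle step is the push-through identity and the fact that algebraic manipulations valid for finite matrices carry over to operators on a possibly infinite-dimensional RKHS. Since $\bs{V}_t$ has finite rank (at most $t$) and $\alpha > 0$, the operator $\bs{V}_t + \alpha\bs{I}_{\mathcal{H}}$ is a bounded, self-adjoint, strictly positive operator whose inverse is well-defined, and the finite-dimensional-looking manipulations above are all rigorous. So I do not expect any real obstacle, just care with the RKHS inner-product/matrix notation.
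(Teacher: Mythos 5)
Your proposal is correct and follows essentially the same route as the paper's proof: both complete the square in the RKHS, identify the centre with $\mu_{\alpha,t}$ via the push-through identity $\Phi_t^{\top}(\bs{K}_t + \alpha\bs{I})^{-1} = (\bs{V}_t + \alpha\bs{I}_{\mathcal{H}})^{-1}\Phi_t^{\top}$, and match the quadratic, linear, and constant terms. Your version is slightly more explicit about justifying the push-through identity and the operator-theoretic validity of the manipulations, but the substance is identical.
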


\begin{proof}
$\Vert\bs{f}_t - \bs{y}_t\Vert_2^2 + \alpha\norm{f}_{\mathcal{H}}^2$ can be rewritten in the form $(f - b)^{\top}\bs{A}(f - b) + c$ by completing the square, where $\bs{A}$ is a self-adjoint (symmetric) linear operator, $b \in \mathcal{H}$ and $c \in \mathbb{R}$. Since $\bs{A}$ is self-adjoint, we have
\begin{equation*}
(f - b)^{\top}\bs{A}(f - b) + c = {f}^{\top}\bs{A}f -2b^{\top}\bs{A}f + b^{\top}\bs{A}b + c.
\end{equation*}

We also have
\begin{align*}
\Vert\bs{f}_t - \bs{y}_t\Vert_2^2 + \alpha\norm{f}_{\mathcal{H}}^2 &= (\Phi_tf - \bs{y}_t)^{\top}(\Phi_tf - \bs{y}_t) + \alpha {f}^{\top}f\\
&= {f}^{\top}(\bs{V}_t + \alpha\bs{I}_{\mathcal{H}})f - 2\bs{y}_t^{\top}\Phi_tf + \bs{y}_t^{\top}\bs{y}_t.
\end{align*}

By equating coefficients, we have
\begin{align*}
\bs{A} &= \bs{V}_t + \alpha\bs{I}_{\mathcal{H}},\\
b &= (\bs{V}_t + \alpha\bs{I}_{\mathcal{H}})^{-1}\Phi_t^{\top}\bs{y}_t = \Phi_t^{\top}(\bs{K}_t + \alpha\bs{I})^{-1}\bs{y}_t = \mu_{\alpha,t},\\
c &= \bs{y}_t^{\top}\bs{y}_t - \bs{y}_t^{\top}\Phi_t(\bs{V}_t + \alpha\bs{I}_{\mathcal{H}})^{-1}\Phi_t^{\top}\bs{y}_t.
\end{align*}
\end{proof}

Lemma \ref{lem:quad_opt} gives the maximum of certain concave quadratic functionals. We use it in the proof of Theorem \ref{thm:dual_ucb} to calculate the dual function associated with $\max_{f \in \mathcal{F}_t}\{f(x)\}$.

\begin{lemma}
For any functions $a, b \in \mathcal{H}$ and any self-adjoint (symmetric) positive-definite linear operator $\bs{A}$, we have
\begin{equation*}
\max_{f \in \mathcal{H}}\left\{a^{\top}f - (f - b)^{\top}\bs{A}(f - b)\right\} = a^{\top}b + \frac{1}{4}a^{\top}\bs{A}^{-1}a.
\end{equation*}
\label{lem:quad_opt}
\end{lemma}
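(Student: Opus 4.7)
The plan is to treat this as a straightforward unconstrained concave quadratic maximisation in the Hilbert space $\mathcal{H}$: since $\bs{A}$ is self-adjoint and positive definite, the quadratic form $-(f-b)^{\top}\bs{A}(f-b)$ is strictly concave in $f$, and the additional linear term $a^{\top}f$ does not spoil strict concavity, so a unique maximiser exists and is characterised by the first-order condition. I would therefore differentiate the functional with respect to $f$ (using the self-adjointness of $\bs{A}$), obtain $a - 2\bs{A}(f-b) = 0$, and solve for $f^{\star} = b + \tfrac{1}{2}\bs{A}^{-1}a$, which is well-defined because $\bs{A}$ is invertible by positive definiteness.

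Substituting $f^{\star}$ back into the objective then gives the claimed value after a short calculation:
\begin{equation*}
a^{\top}f^{\star} - (f^{\star}-b)^{\top}\bs{A}(f^{\star}-b) = a^{\top}b + \tfrac{1}{2}a^{\top}\bs{A}^{-1}a - \tfrac{1}{4}a^{\top}\bs{A}^{-1}a = a^{\top}b + \tfrac{1}{4}a^{\top}\bs{A}^{-1}a.
\end{equation*}
As an alternative that avoids Hilbert-space calculus, I would simply complete the square: writing the objective in the form $-(f-c)^{\top}\bs{A}(f-c) + d$ and equating coefficients forces $c = b + \tfrac{1}{2}\bs{A}^{-1}a$ and $d = a^{\top}b + \tfrac{1}{4}a^{\top}\bs{A}^{-1}a$; the maximum is then achieved at $f = c$ and equals $d$, since the remaining $-(f-c)^{\top}\bs{A}(f-c)$ term is non-positive and vanishes at $f=c$.

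There is essentially no main obstacle here — this is a routine quadratic calculation. The only thing worth being slightly careful about is that we are in a possibly infinite-dimensional RKHS rather than $\mathbb{R}^d$, so I would want to note that $\bs{A}^{-1}$ is assumed to exist as a bounded self-adjoint operator (which is implicit in the positive-definiteness assumption as used in the paper), and that the inner-product notation $a^{\top}\bs{A}^{-1}a$ is to be read as $\langle a, \bs{A}^{-1}a\rangle_{\mathcal{H}}$ per the notational convention introduced earlier. Given this, the completing-the-square argument is the cleanest presentation and I would use it in the final write-up.
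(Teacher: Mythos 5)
Your proposal is correct and follows essentially the same route as the paper's proof, which also characterises the maximiser via the first-order condition $a - 2\bs{A}(f-b) = 0$ (computed there explicitly as a Fr\'{e}chet derivative through directional derivatives) and substitutes back; your completing-the-square variant is a minor repackaging of the same calculation, though it has the small advantage of not needing existence of the maximiser as a separate step. Note that your stationary point $f^{\star} = b + \tfrac{1}{2}\bs{A}^{-1}a$ is the correct one — the paper's displayed $\bar{f} = b + \tfrac{1}{2}\bs{A}a$ is a typo for $b + \tfrac{1}{2}\bs{A}^{-1}a$ — and your caveat about $\bs{A}^{-1}$ being bounded is apt, since in the paper's application $\bs{A} = \bs{V}_t + \alpha\bs{I}_{\mathcal{H}} \succeq \alpha\bs{I}_{\mathcal{H}}$ guarantees this.
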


\begin{proof}
Let $F(f) = a^{\top}f - (f - b)^{\top}\bs{A}(f - b)$. Since $F(f)$ is a concave quadratic functional, $F^{\prime}(\bar{f}) = 0$ is a sufficient condition for $\bar{f}$ to be a maximiser of $F(f)$, where $F^{\prime}(f)$ is the Fr\'{e}chet-derivative of $F$ at $f$. For any direction $g \in \mathcal{H}$ and any $s > 0$, we have
\begin{equation*}
\dd{}{}{s}F(f + sg) = a^{\top}g - 2sg^{\top}\bs{A}g - 2(f-b)^{\top}\bs{A}g.
\end{equation*}

Therefore, the directional derivative of $F$ (in the direction $g$) is
\begin{equation*}
\dd{}{}{s}F(f + sg)\bigg|_{s=0} = a^{\top}g - 2(f-b)^{\top}\bs{A}g.
\end{equation*}

This means that the Fr\'{e}chet-derivative of $F$ at $f$ is
\begin{equation*}
F^{\prime}(f) = a - 2\bs{A}(f-b).
\end{equation*}

There is a unique solution of $F^{\prime}(\bar{f}) = 0$, which is
\begin{equation*}
\bar{f} = b + \frac{1}{2}\bs{A}a.
\end{equation*}

The maximum is
\begin{equation*}
F(\bar{f}) = a^{\top}b + \frac{1}{4}a^{\top}\bs{A}^{-1}a.
\end{equation*}
\end{proof}

Lemma \ref{lem:kv_swap2} provides an alternative expression for $\rho_{\alpha, t}^2(x)$, which is already well-known.

\begin{lemma}
For any $\alpha > 0$,
\begin{equation*}
k(\cdot,x)^{\top}(\bs{V}_t + \alpha\bs{I}_{\mathcal{H}})^{-1}k(\cdot,x) = \frac{1}{\alpha}\left(k(x, x) - \bs{k}_t(x)^{\top}(\bs{K}_t + \alpha\bs{I})^{-1}\bs{k}_t(x)\right).
\end{equation*}
\label{lem:kv_swap2}
\end{lemma}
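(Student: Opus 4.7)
The plan is to apply the standard push-through (matrix inversion) identity to the operator $(\bs{V}_t+\alpha\bs{I}_{\mathcal{H}})^{-1}$, then use the reproducing property and the identity $\Phi_t k(\cdot,x)=\bs{k}_t(x)$ to translate everything into the kernel matrix $\bs{K}_t=\Phi_t\Phi_t^{\top}$.

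Concretely, I would first establish the operator identity
\begin{equation*}
(\bs{V}_t+\alpha\bs{I}_{\mathcal{H}})^{-1} \;=\; \tfrac{1}{\alpha}\bs{I}_{\mathcal{H}} - \tfrac{1}{\alpha}\Phi_t^{\top}(\bs{K}_t+\alpha\bs{I})^{-1}\Phi_t,
\end{equation*}
verifying it by direct multiplication against $\bs{V}_t+\alpha\bs{I}_{\mathcal{H}}=\alpha\bs{I}_{\mathcal{H}}+\Phi_t^{\top}\Phi_t$: the cross term simplifies because $\Phi_t^{\top}\Phi_t\Phi_t^{\top}=\Phi_t^{\top}\bs{K}_t$ and hence $(\bs{I}+\tfrac{1}{\alpha}\Phi_t\Phi_t^{\top})(\bs{K}_t+\alpha\bs{I})^{-1}=\tfrac{1}{\alpha}\bs{I}$ times the appropriate factor, leaving the identity. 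This is essentially the Woodbury identity written in the infinite-dimensional RKHS, where $\Phi_t:\mathcal{H}\to\mathbb{R}^t$ has finite rank, so no subtlety beyond well-definedness of the inverses (which is ensured by $\alpha>0$).

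Next, I would sandwich this identity between $k(\cdot,x)$ on both sides. The first term contributes $\tfrac{1}{\alpha}\inner{k(\cdot,x)}{k(\cdot,x)}_{\mathcal{H}} = \tfrac{1}{\alpha}k(x,x)$ by the reproducing property. For the second term, I use $\Phi_t k(\cdot,x)=(\inner{k(\cdot,x_1)}{k(\cdot,x)},\ldots,\inner{k(\cdot,x_t)}{k(\cdot,x)})^{\top}=\bs{k}_t(x)$ and its adjoint, giving $\tfrac{1}{\alpha}\bs{k}_t(x)^{\top}(\bs{K}_t+\alpha\bs{I})^{-1}\bs{k}_t(x)$. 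Subtracting yields exactly the right-hand side.

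There is no real obstacle here — this is the textbook ``primal-dual'' equivalence for GP predictive variance; the only thing to be mildly careful about is working with the operator $\Phi_t^{\top}$ as a map $\mathbb{R}^t\to\mathcal{H}$ (rather than a matrix) and invoking the push-through identity in that setting, which is justified because $\Phi_t$ has finite rank so all manipulations reduce to finite-dimensional linear algebra on the range of $\Phi_t^{\top}$ plus the identity on its orthogonal complement.
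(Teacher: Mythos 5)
Your proposal is correct and rests on the same ingredients as the paper's proof: the push-through identity $(\bs{V}_t+\alpha\bs{I}_{\mathcal{H}})^{-1}\Phi_t^{\top}=\Phi_t^{\top}(\bs{K}_t+\alpha\bs{I})^{-1}$ together with $\Phi_t k(\cdot,x)=\bs{k}_t(x)$ and the reproducing property; the paper merely performs the equivalent manipulation inline by inserting $\alpha\bs{I}_{\mathcal{H}}=(\bs{V}_t+\alpha\bs{I}_{\mathcal{H}})-\Phi_t^{\top}\Phi_t$ into the quadratic form rather than first writing out the full Woodbury-style resolvent identity. This is a presentational difference only, not a different argument.
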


\begin{proof}
\begin{align*}
k(\cdot,x)^{\top}(\bs{V}_t + \alpha\bs{I}_{\mathcal{H}})^{-1}k(\cdot,x) &= \frac{1}{\alpha}k(\cdot,x)^{\top}(\bs{V}_t + \alpha\bs{I}_{\mathcal{H}})^{-1}(\alpha\bs{I}_{\mathcal{H}})k(\cdot,x)\\
&= \frac{1}{\alpha}k(\cdot,x)^{\top}(\bs{V}_t + \alpha\bs{I}_{\mathcal{H}})^{-1}(\Phi_t^{\top}\Phi_t + \alpha\bs{I}_{\mathcal{H}} - \Phi_t^{\top}\Phi_t)k(\cdot,x)\\
&= \frac{1}{\alpha}\left(k(\cdot, x)^{\top}\bs{I}_{\mathcal{H}}k(\cdot, x) - k(\cdot,x)^{\top}(\bs{V}_t + \alpha\bs{I}_{\mathcal{H}})^{-1}\Phi_t^{\top}\Phi_tk(\cdot,x)\right)\\
&= \frac{1}{\alpha}\left(k(\cdot, x)^{\top}\bs{I}_{\mathcal{H}}k(\cdot, x) - k(\cdot,x)^{\top}\Phi_t^{\top}(\bs{K}_t + \alpha\bs{I})^{-1}\Phi_tk(\cdot,x)\right)\\
&= \frac{1}{\alpha}\left(k(x, x) - \bs{k}_t(x)^{\top}(\bs{K}_t + \alpha\bs{I})^{-1}\bs{k}_t(x)\right).
\end{align*}
\end{proof}

\subsection{Proof of Corollary \ref{cor:rkhs_ridge_conf_set}}
\label{app:ridge_set}

\begin{proof}[Proof of Corollary \ref{cor:rkhs_ridge_conf_set}]
From $\norm{f^{\star}}_\mathcal{H} \leq B$ and Eq. (\ref{eqn:radius_def}), for any $\delta \in (0, 1]$, with probability at least $1 - \delta$, $f^{\star}$ satisfies
\begin{equation}
\forall t \geq 1, \alpha \geq 0: \quad \Vert\bs{f}_t^{\star} - \bs{y}_t\Vert_2^2 + \alpha\norm{f^{\star}}_{\mathcal{H}}^2 \leq R_{t}^2 + \alpha B^2.\label{eqn:sum_constraint}
\end{equation}

Using Lemma \ref{lem:complete_square}, the inequality in (\ref{eqn:sum_constraint}) can be rewritten as
\begin{align*}
\norm{\left(\bs{V}_t + \alpha\bs{I}_{\mathcal{H}}\right)^{1/2}(f^{\star} - \mu_{\alpha,t})}_{\mathcal{H}}^2 \leq R_{t}^2 + \alpha B^2 - \bs{y}_t^{\top}\bs{y}_t + \bs{y}_t^{\top}\Phi_t(\bs{V}_t + \alpha\bs{I}_{\mathcal{H}})^{-1}\Phi_t^{\top}\bs{y}_t.
\end{align*}

This means that $f^{\star} \in \widetilde{\mathcal{F}}_t$ for all $t \geq 1$ and $\alpha > 0$. From the definition of the confidence set $\mathcal{F}_t$ from Lemma \ref{lem:rkhs_conf_set}, every $f \in \mathcal{F}_t$ must satisfy $\Vert\bs{f}_t - \bs{y}_t\Vert_2^2 + \alpha\norm{f}_{\mathcal{H}}^2 \leq R_{t}^2 + \alpha B^2$, which means $f \in \widetilde{\mathcal{F}}_t$. Therefore, $\mathcal{F}_t \subseteq \widetilde{\mathcal{F}}_t$.
\end{proof}

\subsection{Proof of Theorem \ref{thm:ucb_representer}}
\label{app:representer_proof}

For convenience, we first restate some of the definitions from Section \ref{sec:implicit_conf}. $\bs{k}_{t+1}(x) = [k(x, x_1), \dots, k(x, x_t), k(x, x)]^{\top}$. $\bs{K}_{t, t+1}$ is the $t \times t+1$ kernel matrix with $i,j$\textsuperscript{th} element equal to $k(x_i, x_j)$ if $j < t+1$ and $k(x_i, x)$ otherwise. $\bs{L}_{t+1}$ is any matrix satisfying $\bs{L}_{t+1}^{\top}\bs{L}_{t+1} = \bs{K}_{t+1}$.

\begin{proof}[Proof of Theorem \ref{thm:ucb_representer}]
The optimisation problem $\max_{f \in \mathcal{F}_t}\{f(x)\}$ can be written as
\begin{equation}
\max_{f \in \mathcal{H}}\; f(x) \quad \text{s.t.} \quad \Vert\bs{f}_t - \bs{y}_t\Vert_2 \leq R_{t}, \quad \norm{f}_{\mathcal{H}} \leq B.\label{eqn:ucb_inf_cone_prog}
\end{equation}

First, we prove a statement that resembles a representer theorem \cite{kimeldorf1971some, scholkopf2001generalized}. We will show that the solution of (\ref{eqn:ucb_inf_cone_prog}) must be of the form $f(x) = \bs{k}_{t+1}(x)^{\top}\bs{w}$, for some weight vector $\bs{w} \in \mathbb{R}^{t+1}$.

All functions $f \in \mathcal{H}$ can be written in the form $f = f_{\parallel} + f_{\perp}$, where $f_{\parallel} = \bs{k}_{t+1}(\cdot)^{\top}\bs{w}$ is in the subspace of $\mathcal{H}$ spanned by $k(\cdot, x_1), \dots, k(\cdot, x_t), k(\cdot, x)$, and $f_{\perp}$ is orthogonal to this subspace. Since $f_{\perp}$ is orthogonal to the basis $k(\cdot, x_1), \dots, k(\cdot, x_t), k(\cdot, x)$, we have
\begin{equation*}
\norm{f}_{\mathcal{H}}^2 = \inner{f_{\parallel} + f_{\perp}}{f_{\parallel} + f_{\perp}}_{\mathcal{H}} = \inner{f_{\parallel}}{f_{\parallel}}_{\mathcal{H}} + 2\inner{f_{\parallel}}{f_{\perp}}_{\mathcal{H}} + \inner{f_{\perp}}{f_{\perp}}_{\mathcal{H}} = \norm{f_{\parallel}}_{\mathcal{H}}^2 + \norm{f_{\perp}}_{\mathcal{H}}^2,
\end{equation*}

This means that the RKHS norm of $f$ is minimised w.r.t. $f_{\perp}$ by choosing $f{\perp} \equiv 0$. Using the reproducing property of the kernel and the fact that $f_{\perp}$ is orthogonal to $k(\cdot, x_1), \dots, k(\cdot, x_t), k(\cdot, x)$, for any $z \in \{x_1, \dots, x_t, x\}$, we have
\begin{equation*}
f(z) = \inner{f}{k(\cdot, z)}_{\mathcal{H}} = \inner{f_{\parallel} + f_{\perp}}{k(\cdot, z)}_{\mathcal{H}} = \inner{f_{\parallel}}{k(\cdot, z)}_{\mathcal{H}} = f_{\perp}(z).
\end{equation*}

This means that the function values $f(x_1), \dots, f(x_t), f(x)$ are entirely determined by $f_{\parallel}$. The objective function is $f(x)$ and the constraint $\Vert\bs{f}_{t} - \bs{y}_{t}\Vert_2 \leq R_{t}$ depends only on $f(x_1), \dots, f(x_t)$. Therefore, any $f_{\perp} \neq 0$ will have no effect on the objective or this constraint, and will only increase the norm of $f$. This means that we must have $f_{\perp} \equiv 0$ at the solution of $\max_{f \in \mathcal{F}_t}\left\{f(x)\right\}$. We are now free to look for a solution of the form $f(x) = \bs{k}_{t+1}(x)^{\top}\bs{w}$, which means
\begin{align*}
\Vert\bs{f}_t - \bs{y}_t\Vert_2 &= \Vert\bs{K}_{t, t+1}\bs{w}- \bs{y}_t\Vert_2,\\
\norm{f}_{\mathcal{H}} &= \sqrt{\inner{\sum_{s=1}^{t}k(\cdot, x_s)w_s + k(\cdot,x)w_{t+1}}{\sum_{s^{\prime}=1}^{t}k(\cdot, x_{s^{\prime}})w_{s^{\prime}} + k(\cdot,x)w_{t+1}}_{\mathcal{H}}}\\
&= \sqrt{\bs{w}^{\top}\bs{K}_{t+1}\bs{w}} = \Vert\bs{L}_{t+1}\bs{w}\Vert_2.
\end{align*}

Substituting these expressions for the objective and constraints into (\ref{eqn:ucb_inf_cone_prog}) yields (\ref{eqn:ucb_cone_prog}).
\end{proof}

\subsection{Proof of Theorem \ref{thm:dual_ucb}}
\label{app:dual_ucb_proof}

The optimisation problem $\max_{f \in \mathcal{F}_t}\{f(x)\}$ can be written as
\begin{equation}
\max_{f \in \mathcal{H}}\; f(x) \quad \text{s.t.} \quad \Vert\bs{f}_t - \bs{y}_t\Vert_2^2 \leq R_{t}^2, \quad \norm{f}_{\mathcal{H}}^2 \leq B^2.\label{eqn:ucb_sum_square_prog}
\end{equation}

The Lagrangian for this problem is
\begin{equation*}
L(f, \eta_1, \eta_2) = f(x) + \eta_1(R_{t}^2 - \Vert\bs{f}_t - \bs{y}_t\Vert_2^2) + \eta_2(B^2 - \norm{f}_{\mathcal{H}}^2),
\end{equation*}

where $\eta_1, \eta_2$ are the Lagrange multipliers. The dual function is
\begin{equation*}
g(\eta_1, \eta_2) = \max_{f \in \mathcal{H}}\{L(f, \eta_1, \eta_2)\}.
\end{equation*}

By weak duality, the solution of (\ref{eqn:ucb_sum_square_prog}) is upper bounded by the solution of the dual problem, which is
\begin{equation*}
\min_{\eta_1 \geq 0, \eta_2 \geq 0}\{g(\eta_1, \eta_2)\}.
\end{equation*}

Since the Lagrangian $L(f, \eta_1, \eta_2)$ is linear in the Lagrange multipliers for every $f \in \mathcal{H}$, the dual function is the maximum over a set of linear functions, which is a convex function (see Eq. (3.7) in Sec. 3.2.3 of \citep{boyd2004convex}). This means that the dual problem is a convex program. We will now show that, for any $\eta_1 > 0$ and $\alpha >0$, there is a closed-form expression for the dual function $g(\eta_1, \alpha\eta_1)$. Using Lemma \ref{lem:complete_square}, the Lagrangian evaluated at the Lagrange multipliers $\eta_1$ and $\alpha\eta_1$ is
\begin{align*}
L(f, \eta_1, \alpha\eta_1) &= f(x) + \eta_1(R_{t}^2 - \Vert\bs{f}_t - \bs{y}_t\Vert_2^2 + \alpha B^2 - \alpha\norm{f}_{\mathcal{H}}^2)\\
&= f(x) + \eta_1\left(\widetilde{R}_{\alpha,t}^2 - \norm{\left(\bs{V}_t + \alpha\bs{I}_{\mathcal{H}}\right)^{1/2}(f - \mu_{\alpha,t})}_{\mathcal{H}}^2\right).
\end{align*}

Using Lemma \ref{lem:quad_opt} and Lemma \ref{lem:kv_swap2}, the dual function evaluated at $\eta_1$ and $\alpha\eta_1$ is
\begin{align*}
g(\eta_1, \alpha\eta_1) &= \max_{f \in \mathcal{H}}\{L(f, \eta_1, \alpha\eta_1)\}\\
&= \max_{f \in \mathcal{H}}\left\{k(\cdot,x)^{\top}f - \eta_1(f - \mu_{\alpha,t})^{\top}(\bs{V}_t + \alpha\bs{I}_{\mathcal{H}})(f - \mu_{\alpha,t})\right\} + \eta_1\widetilde{R}_{\alpha,t}^2\\
&= k(\cdot,x)^{\top}\mu_{\alpha,t} + \frac{1}{4\eta_1}k(\cdot,x)^{\top}(\bs{V}_t + \alpha\bs{I}_{\mathcal{H}})^{-1}k(\cdot,x) + \eta_1\widetilde{R}_{\alpha,t}^2\\
&= \bs{k}_t(x)^{\top}(\bs{K}_t + \alpha\bs{I})^{-1}\bs{y}_t + \frac{k(x, x) - \bs{k}_t(x)^{\top}(\bs{K}_t + \alpha\bs{I})^{-1}\bs{k}_t(x)}{4\eta_1\alpha} + \eta_1\widetilde{R}_{\alpha,t}^2.
\end{align*}

Therefore, the dual problem is
\begin{equation*}
\min_{\eta_1 > 0, \alpha > 0}\left\{\bs{k}_t(x)^{\top}(\bs{K}_t + \alpha\bs{I})^{-1}\bs{y}_t + \frac{k(x, x) - \bs{k}_t(x)^{\top}(\bs{K}_t + \alpha\bs{I})^{-1}\bs{k}_t(x)}{4\eta_1\alpha} + \eta_1\widetilde{R}_{\alpha,t}^2\right\}.
\end{equation*}

Using the substitution $\eta_2 = \alpha\eta_1$, we arrive at the dual problem stated in (\ref{eqn:dual_ucb}). By minimising this objective function w.r.t. $\eta_1$ for any fixed $\alpha$, we can also express the dual problem as
\begin{equation*}
\min_{\alpha > 0}\left\{\bs{k}_t(x)^{\top}(\bs{K}_t + \alpha\bs{I})^{-1}\bs{y}_t + \frac{\widetilde{R}_{\alpha,t}}{\sqrt{\alpha}}\sqrt{k(x, x) - \bs{k}_t(x)^{\top}(\bs{K}_t + \alpha\bs{I})^{-1}\bs{k}_t(x)}\right\}.
\end{equation*}

%
%

\section{Confidence Bound Tightness}

In this section, we prove Theorem \ref{thm:confidence_tightness}. First, we state and prove some useful lemmas.

\subsection{Useful Lemmas}

Lemma \ref{lem:quad_stuff} gives us alternative expressions for the quadratic terms in $\widetilde{R}_{\alpha,t}$, which we use to prove Lemma \ref{lem:special_rad_bound}.

\begin{lemma}\label{lem:quad_stuff}
For any $\alpha > 0$, we have
\begin{equation*}
\bs{y}_t^{\top}\bs{y}_t - \bs{y}_t^{\top}\Phi_t\left(\bs{V}_t + \alpha\bs{I}_{\mathcal{H}}\right)^{-1}\Phi_t^{\top}\bs{y}_t = \bs{y}_t^{\top}\left(\frac{1}{\alpha}\bs{K}_t + \bs{I}\right)^{-1}\bs{y}_t.
\end{equation*}
\end{lemma}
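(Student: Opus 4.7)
The plan is to reduce both sides to a common finite-dimensional expression by using the standard ``push-through'' identity relating $\bs{V}_t = \Phi_t^{\top}\Phi_t$ (acting on $\mathcal{H}$) and $\bs{K}_t = \Phi_t\Phi_t^{\top}$ (the $t \times t$ Gram matrix). Specifically, I would first establish (or invoke) the identity
\begin{equation*}
\Phi_t\left(\bs{V}_t + \alpha\bs{I}_{\mathcal{H}}\right)^{-1}\Phi_t^{\top} \;=\; \bs{K}_t\left(\bs{K}_t + \alpha\bs{I}\right)^{-1},
\end{equation*}
which follows from $(\Phi_t^{\top}\Phi_t + \alpha\bs{I}_{\mathcal{H}})\Phi_t^{\top} = \Phi_t^{\top}(\Phi_t\Phi_t^{\top} + \alpha\bs{I})$ and inverting both operators; this is the same identity used implicitly when going from the RKHS form of kernel ridge regression to its dual in Eq.~(\ref{eqn:kernel_ridge}).

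Applying this identity to the second term on the left-hand side of the claim converts the infinite-dimensional inverse into a finite one, giving
\begin{equation*}
\bs{y}_t^{\top}\bs{y}_t - \bs{y}_t^{\top}\Phi_t\left(\bs{V}_t + \alpha\bs{I}_{\mathcal{H}}\right)^{-1}\Phi_t^{\top}\bs{y}_t \;=\; \bs{y}_t^{\top}\Big[\bs{I} - \bs{K}_t(\bs{K}_t + \alpha\bs{I})^{-1}\Big]\bs{y}_t.
\end{equation*}
Then I would simplify the bracket by writing $\bs{I} = (\bs{K}_t + \alpha\bs{I})(\bs{K}_t + \alpha\bs{I})^{-1}$, so that $\bs{I} - \bs{K}_t(\bs{K}_t + \alpha\bs{I})^{-1} = \alpha(\bs{K}_t + \alpha\bs{I})^{-1} = (\tfrac{1}{\alpha}\bs{K}_t + \bs{I})^{-1}$, which yields the right-hand side.

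There is no real obstacle here: both steps are one-line algebraic manipulations, and the only care needed is to justify that the push-through identity applies in the RKHS setting where $\bs{V}_t$ is a bounded self-adjoint operator with strictly positive spectrum (guaranteed by $\alpha > 0$), so that $(\bs{V}_t + \alpha\bs{I}_{\mathcal{H}})^{-1}$ exists as a bounded operator. Since $\Phi_t$ has finite rank $\leq t$, the computation can equivalently be carried out inside the finite-dimensional subspace of $\mathcal{H}$ spanned by $k(\cdot,x_1),\dots,k(\cdot,x_t)$, avoiding any functional-analytic subtlety.
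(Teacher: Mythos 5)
Your proposal is correct and follows essentially the same route as the paper: both rest on the push-through identity $\Phi_t(\bs{V}_t + \alpha\bs{I}_{\mathcal{H}})^{-1} = (\bs{K}_t + \alpha\bs{I})^{-1}\Phi_t$ (derived from $\Phi_t(\bs{V}_t + \alpha\bs{I}_{\mathcal{H}}) = (\bs{K}_t + \alpha\bs{I})\Phi_t$) followed by the simplification $\bs{I} - \bs{K}_t(\bs{K}_t+\alpha\bs{I})^{-1} = \alpha(\bs{K}_t+\alpha\bs{I})^{-1} = (\tfrac{1}{\alpha}\bs{K}_t + \bs{I})^{-1}$. Your added remark about finite rank and invertibility of $\bs{V}_t + \alpha\bs{I}_{\mathcal{H}}$ is a harmless extra justification that the paper leaves implicit.
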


\begin{proof}
We start with the identity
\begin{equation*}
\Phi_t\left(\bs{V}_t + \alpha\bs{I}_{\mathcal{H}}\right) = \left(\bs{K}_t + \alpha\bs{I}\right)\Phi_t.
\end{equation*}

By post-multiplying both sides with $(\bs{V}_t + \alpha\bs{I}_{\mathcal{H}})^{-1}$ and pre-multiplying both sides with $(\bs{K}_t + \alpha\bs{I})^{-1}$, we obtain
\begin{equation}\label{eqn:inv_swap}
\left(\bs{K}_t + \alpha\bs{I}\right)^{-1}\Phi_t = \Phi_t\left(\bs{V}_t + \alpha\bs{I}_{\mathcal{H}}\right)^{-1}.
\end{equation}

Now, using (\ref{eqn:inv_swap}), we have
\begin{align*}
\bs{y}_t^{\top}\bs{y}_t - \bs{y}_t^{\top}\Phi_t\left(\bs{V}_t + \alpha\bs{I}_{\mathcal{H}}\right)^{-1}\Phi_t^{\top}\bs{y}_t &= \bs{y}_t^{\top}\bs{y}_t - \bs{y}_t^{\top}\left(\bs{K}_t + \alpha\bs{I}\right)^{-1}\bs{K}_t\bs{y}_t\\
&= \bs{y}_t^{\top}\bs{y}_t - \bs{y}_t^{\top}\left(\bs{K}_t + \alpha\bs{I}\right)^{-1}\left(\bs{K}_t + \alpha\bs{I} - \alpha\bs{I}\right)\bs{y}_t\\
&= \bs{y}_t^{\top}\bs{y}_t - \bs{y}_t^{\top}\bs{y}_t + \alpha\bs{y}_t^{\top}\left(\bs{K}_t + \alpha\bs{I}\right)^{-1}\bs{y}_t\\
&= \bs{y}_t^{\top}\left(\frac{1}{\alpha}\bs{K}_t + \bs{I}\right)^{-1}\bs{y}_t.
\end{align*}
\end{proof}

Lemma \ref{lem:special_rad_bound} gives us a simplified expression for $\widetilde{R}_{\alpha,t}$ when $\alpha = \sigma^2/c$. This allows us to compare $\widetilde{R}_{\alpha,t}$ with the equivalent radius quantities of the confidence bounds in \citep{abbasi2012online} and \citep{chowdhury2017kernelized}.

\begin{lemma}
For any $c > 0$, set $\alpha = \sigma^2/c$. We have
\begin{equation*}
\widetilde{R}_{\alpha,t} = \sqrt{\sigma^2\ln\left(\det\left(\tfrac{c}{\sigma^2}\bs{K}_t + \bs{I}\right)\right) + \tfrac{\sigma^2B^2}{c} + 2\sigma^2\ln(\tfrac{1}{\delta})}.
\end{equation*}
\label{lem:special_rad_bound}
\end{lemma}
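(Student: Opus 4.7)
The plan is to substitute $\alpha = \sigma^2/c$ directly into the definition
\[
\widetilde{R}_{\alpha, t}^2 = R_{t}^2 + \alpha B^2 - \bs{y}_t^{\top}\bs{y}_t + \bs{y}_t^{\top}\Phi_t(\bs{V}_t + \alpha\bs{I}_{\mathcal{H}})^{-1}\Phi_t^{\top}\bs{y}_t
\]
and expand $R_t^2$ via Eq.\ (\ref{eqn:radius_def}). This yields four quadratic-in-$\bs{y}_t$ terms together with the log-determinant and $\ln(1/\delta)$ contributions. The whole lemma then boils down to showing that the quadratic terms telescope to zero.

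To carry this out, I would first observe that with $\alpha = \sigma^2/c$ we have $\bs{I} + c\bs{K}_t/\sigma^2 = (c/\sigma^2)\bs{K}_t + \bs{I}$, so the quadratic term coming from $R_t^2$ is exactly $\bs{y}_t^{\top}((c/\sigma^2)\bs{K}_t + \bs{I})^{-1}\bs{y}_t$. Next I would apply Lemma \ref{lem:quad_stuff} (with its $\alpha$ equal to $\sigma^2/c$), which rewrites
\[
\bs{y}_t^{\top}\bs{y}_t - \bs{y}_t^{\top}\Phi_t(\bs{V}_t + \tfrac{\sigma^2}{c}\bs{I}_{\mathcal{H}})^{-1}\Phi_t^{\top}\bs{y}_t
= \bs{y}_t^{\top}\Big(\tfrac{c}{\sigma^2}\bs{K}_t + \bs{I}\Big)^{-1}\bs{y}_t.
\]
These are exactly the $\bs{y}_t$-dependent pieces of $\widetilde{R}_{\alpha,t}^2$ with opposite signs, so they cancel identically.

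What remains is $\sigma^2\ln\det(\bs{I} + c\bs{K}_t/\sigma^2) + 2\sigma^2\ln(1/\delta) + \sigma^2 B^2/c$, and taking a square root yields the claimed expression. The only nontrivial step is the cancellation, which is handled cleanly by Lemma \ref{lem:quad_stuff}; everything else is bookkeeping. There is no real obstacle here, since Lemma \ref{lem:quad_stuff} was designed precisely to convert the infinite-dimensional operator inverse $(\bs{V}_t + \alpha\bs{I}_{\mathcal{H}})^{-1}$ into a finite-dimensional kernel-matrix inverse of the same form that appears inside $R_t^2$.
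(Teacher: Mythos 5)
Your proposal is correct and follows essentially the same route as the paper's proof: substitute $\alpha = \sigma^2/c$ into the definition of $\widetilde{R}_{\alpha,t}^2$, expand $R_t^2$, and invoke Lemma \ref{lem:quad_stuff} so that the quadratic-in-$\bs{y}_t$ terms cancel exactly, leaving only the log-determinant, $\sigma^2 B^2/c$ and $2\sigma^2\ln(1/\delta)$ terms. (Minor quibble: there are three quadratic terms rather than four, but this does not affect the argument.)
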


\begin{proof}
From Lemma \ref{lem:quad_stuff}, with this $c$ and $\alpha$, the squared radius is
\begin{align}
\widetilde{R}_{\alpha,t}^2 &= \bs{y}_t^{\top}\left(\frac{c}{\sigma^2}\bs{K}_t + \bs{I}\right)^{-1}\bs{y}_t - \bs{y}_t^{\top}\bs{y}_t + \bs{y}_t^{\top}\Phi_t\left(\bs{V}_t + \frac{\sigma^2}{c}\bs{I}_{\mathcal{H}}\right)^{-1}\Phi_t^{\top}\bs{y}_t\label{eqn:special_radius}\\
&+ \sigma^2\ln\left(\det\left(\frac{c}{\sigma^2}\bs{K}_t + \bs{I}\right)\right) + 2\sigma^2\ln(1/\delta) + \frac{\sigma^2B^2}{c}\nonumber\\
&= \sigma^2\ln\left(\det\left(\frac{c}{\sigma^2}\bs{K}_t + \bs{I}\right)\right) + 2\sigma^2\ln(1/\delta) + \frac{\sigma^2B^2}{c}.\nonumber
\end{align}
\end{proof}

\subsection{Proof of Theorem \ref{thm:confidence_tightness}}
\label{app:confidence_tightness}

\begin{proof}[Proof of Theorem \ref{thm:confidence_tightness}]
For any $\delta \in (0, 1]$ and any $\lambda > 0$, the UCB from Thm. 3.11 of \citep{abbasi2012online} states that, with probability at least $1 - \delta$,
\begin{align*}
\forall t \geq 1: \quad f^{\star}(x) \leq \mu_{\lambda,t}(x) + \frac{\widetilde{R}_{\lambda,t}^{\mathrm{AY}}}{\sqrt{\lambda}}\rho_{\lambda,t}(x),
\end{align*}

where
\begin{equation*}
\widetilde{R}_{\lambda,t}^{\mathrm{AY}} =  \sigma\sqrt{\ln(\det(\tfrac{1}{\lambda}\bs{K}_t + \bs{I})) + 2\ln(1/\delta)} + \sqrt{\lambda}B.
\end{equation*}

We will now show that when we set the covariance scale to $c = \sigma^2/\lambda$ and set $\alpha = \sigma^2/c = \lambda$, the RHS of (\ref{eqn:analytic_ucb}) is strictly less than this UCB. Starting from the RHS of (\ref{eqn:analytic_ucb}) with $c = \sigma^2/\lambda$ and $\alpha = \sigma^2/c = \lambda$, and then using Lemma \ref{lem:special_rad_bound} and the inequality $\sqrt{a + b} < \sqrt{a} + \sqrt{b}$ (for $a, b > 0$), we have
\begin{align*}
\mu_{\lambda,t}(x) + \frac{\widetilde{R}_{\lambda,t}}{\sqrt{\lambda}}\rho_{\lambda,t}(x) &= \mu_{\lambda,t}(x) + \frac{\sqrt{\sigma^2\ln\left(\det\left(\frac{1}{\lambda}\bs{K}_t + \bs{I}\right)\right) + 2\sigma^2\ln(1/\delta) + \lambda B^2}}{\sqrt{\lambda}}\rho_{\lambda,t}(x)\\
&< \mu_{\lambda,t}(x) + \frac{\sigma\sqrt{\ln\left(\det\left(\frac{1}{\lambda}\bs{K}_t + \bs{I}\right)\right) + 2\ln(1/\delta)} + \sqrt{\lambda}B}{\sqrt{\lambda}}\rho_{\lambda,t}(x)\\
&= \mu_{\lambda,t}(x) + \frac{\widetilde{R}_{\lambda,t}^{\mathrm{AY}}}{\sqrt{\lambda}}\rho_{\lambda,t}(x).
\end{align*}

For any $\delta \in (0, 1]$ and any $\eta > 0$, the UCB from Thm. 2 of \citep{chowdhury2017kernelized} states that, with probability at least $1 - \delta$,
\begin{align*}
\forall t \geq 1: \quad f^{\star}(x) \leq \mu_{1+\eta,t}(x) + \widetilde{R}_{\eta,t}^{\mathrm{IGP}}\rho_{1+\eta,t}(x),
\end{align*}

where
\begin{equation*}
\widetilde{R}_{\eta,t}^{\mathrm{IGP}} = \sigma\sqrt{\ln(\det(\tfrac{1}{1+\eta}\bs{K}_t + \bs{I})) + t\eta + 2\ln(1/\delta)} + B.
\end{equation*}

If the number of rounds $T$ is known in advance, then one can set $\eta$ to the recommended value of $2/T$, and then the $t\eta$ term in $\widetilde{R}_{\eta,t}^{\mathrm{IGP}}$ is bounded by 2 for all $t \in [T]$. We compare confidence bounds for a general value of $\eta > 0$. Starting from the RHS of (\ref{eqn:analytic_ucb}) with $c = \sigma^2/(1 + \eta)$ and $\alpha = \sigma^2/c = (1 + \eta)$, we have
\begin{align*}
\mu_{1+\eta,t}(x) + \frac{\widetilde{R}_{1+\eta,t}}{\sqrt{1 + \eta}}\rho_{1+\eta,t}(x) &= \mu_{1+\eta,t}(x) + \frac{\sqrt{\sigma^2\ln\left(\det\left(\frac{1}{1 + \eta}\bs{K}_t + \bs{I}\right)\right) + 2\sigma^2\ln(1/\delta) + (1 + \eta) B^2}}{\sqrt{1 + \eta}}\rho_{1+\eta,t}(x)\\
&< \mu_{1+\eta,t}(x) + \left(\sigma\sqrt{\ln\left(\det\left(\frac{1}{1 + \eta}\bs{K}_t + \bs{I}\right)\right) + 2\ln(1/\delta)} + B\right)\rho_{1+\eta,t}(x)\\
&< \mu_{1+\eta,t}(x) + \widetilde{R}_{\eta,t}^{\mathrm{IGP}}\rho_{1+\eta,t}(x).
\end{align*}
\end{proof}

\subsection{Why Are Our Confidence Bounds Tighter?}

Here, we explain why our analytic confidence bounds in Cor. \ref{cor:analytic_ucb} are tighter than the confidence bounds in Thm. 3.11 of \citep{abbasi2012online}. The same reasoning explains why our analytic confidence bounds are tighter than those from Thm. 2 of \citep{chowdhury2017kernelized}. Both our analytic confidence bounds and the confidence bounds in \citep{abbasi2012online} are derived from a constraint of the form
\begin{equation*}
\norm{\left(\bs{V}_t + \alpha\bs{I}_{\mathcal{H}}\right)^{1/2}(f^{\star} - \mu_{\alpha, t})}_{\mathcal{H}} \leq R.
\end{equation*}

The difference between the confidence bounds is in the radius quantity $R$. The confidence bounds in \citep{abbasi2012online} use a self-normalised concentration inequality (Thm. 3.4 in \citealp{abbasi2012online}), which states that, with high probability,
\begin{equation*}
\norm{\left(\bs{V}_t + \alpha\bs{I}_{\mathcal{H}}\right)^{-1/2}\Phi_t\bs{\epsilon}_t}_{\mathcal{H}} \leq \sigma\sqrt{\ln\left(\det\left(\frac{1}{\alpha}\bs{K}_t + \bs{I}\right)\right) + 2\ln(1/\delta)}.
\end{equation*}

Using the definition of $\mu_{\alpha, t}$ and then the triangle inequality, it is shown in \citep{abbasi2012online} that
\begin{align*}
\norm{\left(\bs{V}_t + \alpha\bs{I}_{\mathcal{H}}\right)^{1/2}(f^{\star} - \mu_{\alpha, t})}_{\mathcal{H}} &= \norm{\left(\bs{V}_t + \alpha\bs{I}_{\mathcal{H}}\right)^{-1/2}(\Phi_t\bs{\epsilon}_t - \alpha f^{\star})}_{\mathcal{H}}\\
&\leq \norm{\left(\bs{V}_t + \alpha\bs{I}_{\mathcal{H}}\right)^{-1/2}\Phi_t\bs{\epsilon}_t}_{\mathcal{H}} + \alpha\norm{\left(\bs{V}_t + \alpha\bs{I}_{\mathcal{H}}\right)^{-1/2}f^{\star}}_{\mathcal{H}}\\
&\leq \sigma\sqrt{\ln\left(\det\left(\frac{1}{\alpha}\bs{K}_t + \bs{I}\right)\right) + 2\ln(1/\delta)} + \sqrt{\alpha}B.
\end{align*}

This gives us the expression for the radius in \citep{abbasi2012online}. The triangle inequality step causes the log determinant and the RKHS norm to appear under separate square roots. Instead of the self-normalised concentration inequality, we use a bound on $\|\bs{f}_t^{\star} - \bs{y}_t\|_2^2$ (Eq. (\ref{eqn:radius_def})), which can be combined with the constraint $\|f^{\star}\|_{\mathcal{H}} \leq B$ more directly via completing the square (Cor. \ref{cor:rkhs_ridge_conf_set}) or Lagrangian duality.

\section{Regret Analysis}

In this section, we prove the cumulative regret bound in Theorem \ref{thm:regret_bound}. In addition, we prove cumulative regret bounds with explicit dependence on $T$ for the noisy (Sec. \ref{app:eig_regret}) and noiseless (Sec. \ref{app:regret_noiseless}) settings. We use upper bounds on $\gamma_T(\sigma^2/c)$ from \citep{vakili2021information} and derive our own upper bounds on $\tau_T(\sigma^2/c)$. The bounds on both of these quantities apply when the kernel $k$ is a Mercer kernel, and their rates depend on how quickly the eigenvalues of the kernel decay to zero. We therefore begin by describing Mercer kernels, as well as the polynomial and exponential eigendecay conditions.

\subsection{Mercer Kernels and Eigendecay Conditions}

A positive-definite kernel $k$ on a set $\mathcal{X}$ is called a Mercer kernel if: a) $\mathcal{X}$ is a compact metric space (e.g. a compact subset of $\mathbb{R}^d$); b) the function $k: \mathcal{X} \times \mathcal{X} \to \mathbb{R}$ is continuous. Mercer's theorem provides a useful representation for Mercer kernels. Let $\rho$ be a non-degenerate measure on $\mathcal{X}$ and let $L^2(\mathcal{X}, \rho)$ denote the set of square integrable functions on $\mathcal{X}$, i.e.
\begin{equation*}
L^2(\mathcal{X}, \rho) := \left\{f:\mathcal{X} \to \mathbb{R} : \int_{\mathcal{X}}(f(x))^2\mathrm{d}\rho(x) < \infty\right\}.
\end{equation*} 

Define the linear operator $L_k : L^2(\mathcal{X}, \rho) \to L^2(\mathcal{X}, \rho)$ as
\begin{equation*}
L_k(f)(x) := \int_{\mathcal{X}}k(x, y)f(y)\mathrm{d}\rho(y).
\end{equation*}

\begin{theorem}[Mercer's Theorem]
If $k: \mathcal{X} \times \mathcal{X} \to \mathbb{R}$ is a Mercer kernel, then there exist non-negative eigenvalues $\xi_1 \geq \xi_2 \geq \cdots \geq 0$ and corresponding eigenfunctions $\phi_1, \phi_2, \dots$, such that
\begin{equation}
L_k(\phi_m) = \xi_m\phi_m, \quad \text{for all } m = 1, 2, \dots.
\end{equation}

In addition, the kernel function has the eigendecomposition
\begin{equation}
k(x, y) = \sum_{m=1}^{\infty}\xi_m\phi_m(x)\phi_m(y),
\end{equation}

where the convergence of the infinite series is absolute for each $x, y \in \mathcal{X}$ and uniform on $\mathcal{X} \times \mathcal{X}$.
\end{theorem}

The scaled eigenfunctions $\sqrt{\xi_1}\phi_1, \sqrt{\xi_2}\phi_2, \dots$ form a (countable) orthonormal basis of the RKHS $\mathcal{H}$. Following \citet{vakili2021information}, we make the following additional assumption about the kernel $k$: $\forall m \in \mathbb{N}, \forall x \in \mathcal{X}, |\phi_m(x)| \leq \psi$, for some $\psi > 0$. The rate at which the eigenvalues $\xi_1, \xi_2, \dots$ of the kernel decay to zero determines both the complexity of the corresponding RKHS and the hardness of regret minmisation in kernel bandits. The two most commonly studied eigendecay conditions are defined as follows.

\begin{definition}[Eigenvalue decay]
Consider the eigenvalues $\xi_1, \xi_2, \dots$ of a Mercer kernel $k(x,x^{\prime}) = \sum_{i=1}^{\infty}\xi_i\phi_i(x)\phi_i(x^{\prime})$. The polynomial eigenvalue decay condition is satisfied if there exist $C_p > 0$ and $\beta_p > 1$ such that
\begin{equation*}
\xi_i \leq C_pi^{-\beta_p}.
\end{equation*}

The exponential eigenvalue decay condition is satisfied if there exist $C_{e_1}, C_{e_2} > 0$ and $\beta_{e} \in (0, 1]$ such that
\begin{equation*}
\xi_i \leq C_{e_1} \exp(-C_{e_2}i^{\beta_e}).
\end{equation*}
\end{definition}

If a kernel is known to satisfy one of these eigendecay conditions, then the maximum information gain can be bounded. In particular, \citet{vakili2021information} proved the following bounds.

\begin{lemma}[Corollary 1 in \citep{vakili2021information}]
If the kernel $k$ satisfies the polynomial eigenvalue decay condition, then
\begin{equation}
\sup_{x_1, \dots, x_T}\left\{\frac{1}{2}\ln\det\left(\frac{1}{\alpha}\bs{K}_T + \bs{I}\right)\right\} \leq \ln(1 + \kappa T/\alpha) + \left(\frac{C_p\psi^2T}{\alpha}\right)^{1/\beta_p}\ln(1 + \kappa T/\alpha)^{1 - 1/\beta_p}.\label{eqn:poly_info_gain}
\end{equation}

If the kernel $k$ satisfies the exponential eigenvalue decay condition, then
\begin{equation}
\sup_{x_1, \dots, x_T}\left\{\frac{1}{2}\ln\det\left(\frac{1}{\alpha}\bs{K}_T + \bs{I}\right)\right\} \leq \left(\left(\frac{2}{C_{e_2}}(\ln(T) + C_{\beta})\right)^{\frac{1}{\beta_e}} + 1\right)\ln(1 + \kappa T/\alpha),\label{eqn:exp_info_gain}
\end{equation}

where $C_{\beta} = \ln(\frac{C_{e_1}\psi^2}{\alpha C_{e_2}})$ if $\beta_e = 1$, and $C_{\beta} = \ln(\frac{2C_{e_1}\psi^2}{\alpha \beta_eC_{e_2}}) + (1/\beta_e - 1)(\ln(\frac{2}{C_{e_2}}(1/\beta_e - 1)) - 1)$ if $\beta_e < 1$.
\label{lem:info_gain}
\end{lemma}

\subsection{Elliptical Potential Lemmas}

In this subsection, we recall the well-known Elliptical Potential Lemma (see e.g. Lemma 11.11 in \citealt{cesa2006prediction}, Lemma 5.3 in \citealt{srinivas2010gaussian} or Lemma 11 in \citealt{abbasi2011improved}), which states that the sum of the elliptical potentials $\sum_{t=1}^{T}\frac{1}{\alpha}\rho_{\alpha,t-1}^2(x_t)$ can be upper bounded by the maximum information gain. Then, for each eigendecay condition, we state and prove our bounds on the elliptical potential count $\tau_T(\alpha)$.

The version of the Elliptical Potential Lemma stated in Lemma \ref{lem:elliptical} is essentially the same as the version in Lemma 5 in \citep{whitehouse2023sublinear}, so we omit the proof. The only real difference is that we start with $\sum_{t=1}^{T}\min\left(1, \frac{1}{\alpha}\rho_{\alpha,t-1}^2(x_t)\right)$ instead of $\sum_{t=1}^{T}\frac{1}{\alpha}\rho_{\alpha,t-1}^2(x_t)$. This change means that Lemma \ref{lem:elliptical} holds for all $\alpha > 0$, rather than for $\alpha \geq \max(1, \kappa)$, where $\kappa$ is an upper bound on $\sup_{x, x^{\prime}}|k(x, x^{\prime})|$.

\begin{lemma}[Elliptical Potential Lemma]
For any $T \geq 1$ and any $\alpha > 0$,
\begin{equation*}
\sum_{t=1}^{T}\min\left(1, \frac{1}{\alpha}\rho_{\alpha,t-1}^2(x_t)\right) \leq \frac{3}{2}\ln\left(\det\left(\frac{1}{\alpha}\bs{K}_t + \bs{I}\right)\right).
\end{equation*}
\label{lem:elliptical}
\end{lemma}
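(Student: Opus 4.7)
The plan is to combine two ingredients: a pointwise scalar inequality applied to each summand, followed by a telescoping identity that turns the resulting sum of logarithms into a single log-determinant. Both steps are standard in the elliptical potential literature (cf.\ the discussion cited from \cite{whitehouse2023sublinear}), so I will emphasise the structure rather than novel technique.

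First I would apply the scalar bound $\min(1,x) \leq \tfrac{3}{2}\ln(1+x)$, valid for all $x \geq 0$, with the choice $x = \tfrac{1}{\alpha}\rho_{\alpha,t-1}^2(x_t) \geq 0$ (nonnegativity of the GP posterior variance). Summing over $t \in [T]$ reduces the task to showing
\begin{equation*}
\sum_{t=1}^{T}\ln\!\left(1 + \tfrac{1}{\alpha}\rho_{\alpha,t-1}^2(x_t)\right) ~=~ \ln\det\!\left(\tfrac{1}{\alpha}\bs{K}_T + \bs{I}\right).
\end{equation*}

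The key step is therefore the telescoping identity for the log-determinant. I would use the block decomposition
\begin{equation*}
\bs{K}_t + \alpha\bs{I} ~=~ \begin{pmatrix} \bs{K}_{t-1}+\alpha\bs{I} & \bs{k}_{t-1}(x_t) \\ \bs{k}_{t-1}(x_t)^{\top} & k(x_t,x_t)+\alpha \end{pmatrix}
\end{equation*}
together with the Schur complement determinant formula to obtain
\begin{equation*}
\det(\bs{K}_t + \alpha\bs{I}) = \det(\bs{K}_{t-1}+\alpha\bs{I}) \cdot \bigl(\alpha + \rho_{\alpha,t-1}^2(x_t)\bigr),
\end{equation*}
where the Schur complement equals $\alpha + \rho_{\alpha,t-1}^2(x_t)$ directly by the definition of $\rho_{\alpha,t-1}^2(\cdot)$ in Eq.\ (\ref{eqn:kernel_ridge_var}). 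Scaling by $\alpha^{-t}$ on the left and factoring out $\alpha$ on the right yields
\begin{equation*}
\det\!\left(\tfrac{1}{\alpha}\bs{K}_t + \bs{I}\right) ~=~ \det\!\left(\tfrac{1}{\alpha}\bs{K}_{t-1}+\bs{I}\right)\cdot\bigl(1 + \tfrac{1}{\alpha}\rho_{\alpha,t-1}^2(x_t)\bigr).
\end{equation*}
Taking logarithms, using the base case $\det(\bs{I}) = 1$ at $t=0$, and telescoping from $t=1$ up to $t=T$ gives the claimed identity, after which combining with the scalar bound above finishes the proof.

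I do not anticipate any genuinely hard step: the scalar inequality $\min(1,x)\leq \tfrac{3}{2}\ln(1+x)$ is quoted from Lemma D.8 of \cite{flynn2023improved}, and the Schur-complement telescoping is mechanical. The only place where one should be careful is the edge case $t=1$, where $\bs{K}_0$ and $\bs{k}_0(\cdot)$ are interpreted as empty, so that $\rho_{\alpha,0}^2(x_1) = k(x_1,x_1)$ and the identity still holds with $\det(\tfrac{1}{\alpha}\bs{K}_0 + \bs{I}) = 1$. One should also note that the bound is valid for arbitrary $\alpha>0$ (no need for $\alpha\geq\max(1,\kappa)$ as in some other formulations), precisely because we have used the sharper inequality $\min(1,x)\leq \tfrac{3}{2}\ln(1+x)$ in place of $x\leq 2\ln(1+x)$, which only holds for $x\in[0,1]$.
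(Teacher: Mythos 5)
Your proof is correct and follows exactly the route the paper indicates: the paper omits the proof, noting only that it is the argument of Lemma 5 in \citep{whitehouse2023sublinear} with $\min(1,x)\leq\tfrac{3}{2}\ln(1+x)$ (Lemma D.8 of \citep{flynn2023improved}) substituted for $x\leq 2\ln(1+x)$, which is precisely the scalar bound plus Schur-complement telescoping you carry out. Your explicit treatment of the $t=1$ base case and the remark on why $\alpha>0$ suffices are both accurate and consistent with the paper's discussion.
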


Now, we turn our attention to the elliptical potential count $\tau_T(\alpha)$. In the linear/finite-dimensional setting, there exist bounds on $\tau_T(\alpha)$. \citet{lattimore2020bandit} (Exercise 19.3) and \citet{gales2022norm} (Lemma 7) show that if the kernel is $k(x, x^{\prime}) = x^{\top}x^{\prime}$, then $\tau_T(\alpha) = \mathcal{O}(d\ln(1/\alpha))$. We follow (as closely as we can) the proof used in \citep{gales2022norm}, which uses the following lemma.

\begin{lemma}[Lemma 8 from \citep{gales2022norm}]
Let $X, A, B \geq 0$. If $X \leq A\ln(1 + BX)$, then
\begin{equation*}
X \leq \inf_{\eta \in (0, 1)}\left\{\frac{A}{1 - \eta}\ln\left(\frac{A}{e\eta}\left(\frac{1}{X} + B\right)\right)\right\}.
\end{equation*}
\label{lem:gales}
\end{lemma}

By choosing $\eta = 1/e$, we get the following corollary, which comes from the proof of Lemma 7 in \citep{gales2022norm}.

\begin{corollary}
Let $X, A, B \geq 0$. If $X \leq A\ln(1 + BX)$, then
\begin{equation*}
X \leq \frac{eA}{e-1}\ln(1 + AB).
\end{equation*}
\label{cor:gales}
\end{corollary}

\begin{proof}
From Lemma \ref{lem:gales} with $\eta = 1/e$, we have
\begin{equation*}
X \leq \frac{eA}{e-1}\ln\left(A\left(\frac{1}{X} + B\right)\right).
\end{equation*}

On the one hand, if $1/X \leq 1/A$, then
\begin{equation*}
X \leq \frac{eA}{e-1}\ln(1 + AB).
\end{equation*}

On the other hand, if $1/X \geq 1/A$, then the inequality $X \leq A\ln(1 + BX)$ tells us that
\begin{equation*}
X \leq A\ln(1 + AB).
\end{equation*}
\end{proof}

We also use the following variant of Lemma \ref{lem:gales}. This inequality is stated just after Lemma 8 \citep{gales2022norm}. For completeness, we give a proof.

\begin{lemma}
Let $X, A, B \geq 0$. If $X \leq A\ln(BX)$, then
\begin{equation*}
X \leq \inf_{\eta \in (0, 1)}\left\{\frac{1}{1 - \eta}\left(A\ln\left(\frac{AB}{e\eta}\right)\right)\right\}.
\end{equation*}
\label{lem:gales_3}
\end{lemma}

\begin{proof}
\begin{align*}
X &\leq A\ln(BX)\\
&= A\ln\left(\frac{\eta X}{A}\right) + A\ln\left(\frac{AB}{\eta}\right)\\
&\leq \eta X - A\ln(e) + A\ln\left(\frac{AB}{\eta}\right)\\
&= \eta X + A\ln\left(\frac{AB}{e\eta}\right).
\end{align*}
\end{proof}

Finally, we require the following lemma, which gives an expression for $\det\left(\frac{1}{\alpha}\bs{K}_T + \bs{I}\right)$ whenever the RKHS $\mathcal{H}$ is separable.

\begin{lemma}[Lemma E.1 in \citep{abbasi2012online}]
If $\mathcal{H}$ is separable, then
\begin{equation*}
\det\left(\frac{1}{\alpha}\bs{K}_T + \bs{I}\right) = \prod_{t=1}^{T}(1 + \|k(\cdot, x_t)\|_{(\bs{V}_{t-1} + \alpha\bs{I}_{\mathcal{H}})^{-1}}^2).
\end{equation*}
\label{lem:e1_logdet}
\end{lemma}

Since a Hilbert space is separable if and only if it has a countable orthonormal basis, the RKHS of any Mercer kernel is separable. The next lemma gives an intermediate result that can be specialised to either of the eigendecay conditions.

\begin{lemma}[Elliptical potentials: You cannot have more than $\mathcal{O}(\gamma_T)$ big intervals]
Let $b > 0$, let $x_1, \dots, x_T$ be any sequence of elements in $\mathcal{X}$ and let $k(x,x^{\prime})$ be any positive-definite kernel. Let $\mathcal{T}_T(\alpha) = \{t \in [T]: \|k(\cdot, x_t)\|_{(\bs{V}_{t-1} + \alpha\bs{I}_{\mathcal{H}})^{-1}} \geq b\}$. For every $T \geq 1$,
\begin{equation}
|\mathcal{T}_T(\alpha)| \leq \frac{2\gamma_{|\mathcal{T}_T(\alpha)|}(\alpha)}{\ln(1 + b^2)} \leq \frac{2\gamma_{T}(\alpha)}{\ln(1 + b^2)}.\label{eqn:generic_epc}
\end{equation}
\label{lem:ep_info_gain}
\end{lemma}

Using Lemma \ref{lem:kv_swap2}, we have $\|k(\cdot, x_t)\|_{(\bs{V}_{t-1} + \alpha\bs{I}_{\mathcal{H}})^{-1}} = \frac{1}{\sqrt{\alpha}}\rho_{\alpha, t-1}(x_t)$, which means the set $\mathcal{T}_T(\alpha)$ defined here is the same as the one in Eq. (\ref{eqn:tau_set}), except here we allow $b$ to be any positive number, not just 1.

\begin{proof}
Define $\bs{G}_T = \sum_{t=1}^{T}\mathbb{I}\{t \in \mathcal{T}_T(\alpha)\}k(\cdot,x_t)k(\cdot,x_t)^{\top}$ and let $\bs{C}_T$ be the $|\mathcal{T}_T(\alpha)| \times |\mathcal{T}_T(\alpha)|$ kernel matrix constructed from the points $\{x_t\}_{t \in \mathcal{T}_T(\alpha)}$. Using Lemma \ref{lem:e1_logdet}, we have
\begin{align*}
\det\left(\frac{1}{\alpha}\bs{C}_T + \bs{I}\right) &= \prod_{t \in \mathcal{T}_T(\alpha)}(1 + \|k(\cdot, x_t)\|_{(\bs{G}_{t-1} + \alpha\bs{I}_{\mathcal{H}})^{-1}}^2)\\
&\geq \prod_{t \in \mathcal{T}_T(\alpha)}(1 + \|k(\cdot, x_t)\|_{(\bs{V}_{t-1} + \alpha\bs{I}_{\mathcal{H}})^{-1}}^2)\\
&\geq (1 + b^2)^{|\mathcal{T}_T(\alpha)|}.
\end{align*}

Taking logarithms of both sides finishes the proof.
\end{proof}

Relying on the second inequality in (\ref{eqn:generic_epc}) can give a rather loose dependence on $T$. From the first inequality in (\ref{eqn:generic_epc}), we can derive tighter bounds on $|\mathcal{T}_T(\alpha)|$ when the kernel satisfies either the polynomial or exponential eigendecay conditions. First, we consider polynomial eigendecay.

\begin{lemma}[Elliptical potentials: polynomial eigendecay]
Let $b > 0$ and let $x_1, \dots, x_T$ be any sequence of elements in $\mathcal{X}$. Let $k$ be a Mercer kernel that satisfies the polynomial eigendecay condition and let $\mathcal{T}_T(\alpha)$ be the set defined in Lemma \ref{lem:ep_info_gain}. For all $T \geq 1$ and all $\alpha > 0$,
\begin{align*}
|\mathcal{T}_T(\alpha)| \leq \max\bigg(&\frac{4e}{(e-1)\ln(1 + b^2)}\ln\left(1 + \frac{4\kappa}{\alpha\ln(1 + b^2)}\right),\\
&\left(\frac{4e}{(e-1)\ln(1 + b^2)}\right)^{\frac{\beta_p}{\beta_p - 1}}\left(\frac{C_p\psi^2}{\alpha}\right)^{\frac{1}{\beta_p - 1}}\ln\left(1 + \frac{\kappa
}{\alpha}\left(\frac{4}{\ln(1 + b^2)}\right)^{\frac{\beta_p}{\beta_p - 1}}\left(\frac{C_p\psi^2}{\alpha}\right)^{\frac{1}{\beta_p - 1}}\right)\bigg).
\end{align*}
\label{lem:ep_poly}
\end{lemma}

\begin{proof}
From Lemma \ref{lem:ep_info_gain}, we have $|\mathcal{T}_T(\alpha)| \leq \frac{2}{\ln(1 + b^2)}\gamma_{|\mathcal{T}_T(\alpha)|}(\alpha)$. The bound in Lemma \ref{lem:info_gain} tells us that
\begin{equation}
|\mathcal{T}_T(\alpha)| \leq  \frac{2}{\ln(1 + b^2)}\left(\left(\frac{C_p\psi^2|\mathcal{T}_T(\alpha)|}{\alpha}\right)^{1/\beta_p}\ln(1 + \kappa |\mathcal{T}_T(\alpha)|/\alpha)^{-1/\beta_p} + 1\right)\ln(1 + \kappa |\mathcal{T}_T(\alpha)|/\alpha).\label{eqn:poly_ep_pt_1}
\end{equation}

We consider two cases. If $\frac{C_p\psi^2|\mathcal{T}_T(\alpha)|}{\alpha \ln(1 + \kappa |\mathcal{T}_T(\alpha)|/\alpha)} \leq 1$, then
\begin{equation}
|\mathcal{T}_T(\alpha)| \leq \frac{4}{\ln(1 + b^2)}\ln(1 + \kappa |\mathcal{T}_T(\alpha)|/\alpha).\label{eqn:poly_ep_case_1_pt_1}
\end{equation}

Using Corollary \ref{cor:gales}, with $A = \frac{4}{\ln(1 + b^2)}$, $B = \kappa/\alpha$ and $X = |\mathcal{T}_T(\alpha)|$, we obtain
\begin{equation*}
|\mathcal{T}_T(\alpha)| \leq \frac{4e}{(e-1)\ln(1 + b^2)}\ln\left(1 + \frac{4\kappa}{\alpha\ln(1 + b^2)}\right).
\end{equation*}





If $\frac{C_p\psi^2|\mathcal{T}_T(\alpha)|}{\alpha \ln(1 + \kappa |\mathcal{T}_T(\alpha)|/\alpha)} \geq 1$, then from (\ref{eqn:poly_ep_pt_1}), we have
\begin{equation*}
|\mathcal{T}_T(\alpha)| \leq \frac{4}{\ln(1 + b^2)}\left(\frac{C_p\psi^2|\mathcal{T}_T(\alpha)|}{\alpha}\right)^{1/\beta_p}\ln(1 + \kappa |\mathcal{T}_T(\alpha)|/\alpha)^{1-1/\beta_p}.
\end{equation*}

After some algebra, we obtain
\begin{equation*}
|\mathcal{T}_T(\alpha)| \leq \left(\frac{4}{\ln(1 + b^2)}\right)^{\frac{\beta_p}{\beta_p - 1}}\left(\frac{C_p\psi^2}{\alpha}\right)^{\frac{1}{\beta_p - 1}}\ln(1 + \kappa |\mathcal{T}_T(\alpha)|/\alpha).
\end{equation*}

Using Corollary \ref{cor:gales}, with $A = \left(\frac{4}{\ln(1 + b^2)}\right)^{\beta_p/(\beta_p - 1)}\left(\frac{C_p\psi^2}{\alpha}\right)^{1/(\beta_p - 1)}$, $B = \kappa/\alpha$ and $X = |\mathcal{T}_T(\alpha)|$, we obtain
\begin{equation*}
|\mathcal{T}_T(\alpha)| \leq \left(\frac{4e}{(e-1)\ln(1 + b^2)}\right)^{\frac{\beta_p}{\beta_p - 1}}\left(\frac{C_p\psi^2}{\alpha}\right)^{\frac{1}{\beta_p - 1}}\ln\left(1 + \frac{\kappa
}{\alpha}\left(\frac{4}{\ln(1 + b^2)}\right)^{\frac{\beta_p}{\beta_p - 1}}\left(\frac{C_p\psi^2}{\alpha}\right)^{\frac{1}{\beta_p - 1}}\right).
\end{equation*}

Taking the maximum of the two inequalities gives an upper bound $|\mathcal{T}_T(\alpha)|$ that holds for all $T \geq 1$ and $\alpha > 0$.
\end{proof}

Next we consider exponential eigendecay.

\begin{lemma}[Elliptical potentials: exponential eigendecay]
Let $b > 0$ and let $x_1, \dots, x_T$ be any sequence of elements in $\mathcal{X}$. Let $k$ be a Mercer kernel that satisfies the exponential eigendecay condition and let $\mathcal{T}_T(\alpha)$ be the set defined in Lemma \ref{lem:ep_info_gain}. If $\beta_e = 1$, then for all $T \geq 1$,
\begin{align*}
|\mathcal{T}_T(\alpha)| &\leq \max(A, B, C, D, E),\\
\text{where} ~~A &= \frac{\alpha C_{e_2}}{C_{e_1}\psi^2},\\
B &= \frac{4e}{(e-1)\ln(1 + b^2)}\ln\left(1 + \frac{4\kappa}{\alpha\ln(1 + b^2)}\right),\\
C &= \frac{4}{\ln(1 + b^2)}\left(\frac{2}{C_{e_2}}\right)^{\frac{1}{\beta_e}}\ln(2)^{1 + \frac{1}{\beta_e}},\\
D &= \frac{4}{\ln(1 + b^2)}\left(\frac{2}{C_{e_2}}\right)^{\frac{1}{\beta_e}}\left(\frac{e(\beta_e+1)}{(e-1)\beta_e}\right)^{\frac{\beta_e+1}{\beta_e}}\ln\left(\left(\frac{8\kappa}{\alpha\ln(1 + b^2)}\right)^{\beta_e/(\beta_e+1)}\left(\frac{2}{C_{e_2}}\right)^{\frac{1}{\beta_e+1}}\frac{\beta_e+1}{\beta_e}\right)^{\frac{\beta_e+1}{\beta_e}},\\
E &= \frac{4}{\ln(1 + b^2)}\left(\frac{2}{C_{e_2}}\right)^{\frac{1}{\beta_e}}\left(\frac{e(\beta_e + 1)}{(e-1)\beta_e}\right)^{\frac{\beta_e + 1}{\beta_e}}\ln\left(\left(\frac{4C_{e_1}\psi^2}{\alpha C_{e_2}\ln(1 + b^2)}\right)^{\frac{\beta_e}{\beta_e + 1}}\left(\frac{2}{C_{e_2}}\right)^{\frac{1}{\beta_e + 1}}\frac{\beta_e + 1}{\beta_e}\right)^{\frac{\beta_e + 1}{\beta_e}}.
\end{align*}

If $\beta_e < 1$, then
\begin{align*}
|\mathcal{T}_T(\alpha)| &\leq \max(A, B, C, D, E, F),\\
\text{where} ~~A &= \frac{\alpha\beta_eC_{e_2}\exp(-\tilde{C}_{\beta})}{2C_{e_1}\psi^2},\\
B &= \frac{4e}{(e-1)\ln(1 + b^2)}\ln\left(1 + \frac{4\kappa}{\alpha\ln(1 + b^2)}\right),\\
C &= \frac{4}{\ln(1 + b^2)}\left(\frac{4}{C_{e_2}}\right)^{\frac{1}{\beta_e}}\ln(2)^{1 + \frac{1}{\beta_e}},\\
D &= \frac{4}{\ln(1 + b^2)}\left(\frac{4}{C_{e_2}}\right)^{\frac{1}{\beta_e}}\left(\frac{e(\beta_e+1)}{(e-1)\beta_e}\right)^{\frac{\beta_e+1}{\beta_e}}\ln\left(\left(\frac{8\kappa}{\alpha\ln(1 + b^2)}\right)^{\beta_e/(\beta_e+1)}\left(\frac{4}{C_{e_2}}\right)^{\frac{1}{\beta_e+1}}\frac{\beta_e+1}{\beta_e}\right)^{\frac{\beta_e+1}{\beta_e}},\\
E &= \frac{4}{\ln(1 + b^2)}\left(\frac{4}{C_{e_2}}\right)^{\frac{1}{\beta_e}}\left(\frac{e(\beta_e + 1)}{(e-1)\beta_e}\right)^{\frac{\beta_e + 1}{\beta_e}}\ln\left(\left(\frac{8C_{e_1}\psi^2}{\alpha \beta_eC_{e_2}\ln(1 + b^2)}\right)^{\frac{\beta_e}{\beta_e+1}}\left(\frac{4}{C_{e_2}}\right)^{\frac{1}{\beta_e+1}}\frac{\beta_e + 1}{\beta_e}\right)^{\frac{\beta_e + 1}{\beta_e}},\\
F &= \frac{2e}{(e-1)\ln(1 + b^2)}\left(\left(\frac{4\tilde{C}_{\beta}}{C_{e_2}}\right)^{\frac{1}{\beta_e}} + 1\right)\ln\left(1 + \frac{2\kappa}{\alpha\ln(1 + b^2)}\left(\left(\frac{4\tilde{C}_{\beta}}{C_{e_2}}\right)^{\frac{1}{\beta_e}} + 1\right)\right).
\end{align*}
\label{lem:ep_exp}
\end{lemma}

Component $A$ of both bounds can be very loose when $\alpha$ is large. For every $t$ and $x_t$, $\|k(\cdot, x_t)\|_{(\bs{V}_{t-1} + \alpha\bs{I}_{\mathcal{H}})^{-1}}^2 \leq \frac{1}{\alpha}k(x_t, x_t) \leq \kappa/\alpha$. Therefore, whenever $\alpha \geq \kappa/b^2$, we have $|\mathcal{T}_T(\alpha)| = 0$. However, since we consider situations in which $\alpha$ is constant or decreasing in $T$, this bound is adequate for our regret analysis.

\begin{proof}
From Lemma \ref{lem:ep_info_gain}, we have $|\mathcal{T}_T(\alpha)| \leq \frac{2}{\ln(1 + b^2)}\gamma_{|\mathcal{T}_T(\alpha)|}(\alpha)$. The bound in Lemma \ref{lem:info_gain} tells us that
\begin{equation}
|\mathcal{T}_T(\alpha)| \leq \frac{2}{\ln(1 + b^2)}\left(\left(\frac{2}{C_{e_2}}(\ln(|\mathcal{T}_T(\alpha)|) + C_{\beta})\right)^{\frac{1}{\beta_e}} + 1\right)\ln(1 + \kappa |\mathcal{T}_T(\alpha)|/\alpha).\label{eqn:exp_ep_pt_1}
\end{equation}

First, we consider the case when $\beta_e = 1$, which means $C_{\beta} = \ln(\frac{C_{e_1}\psi^2}{\alpha C_{e_2}})$, and (\ref{eqn:exp_ep_pt_1}) becomes
\begin{equation}
|\mathcal{T}_T(\alpha)| \leq \frac{2}{\ln(1 + b^2)}\left(\left(\frac{2}{C_{e_2}}\ln\left(\frac{C_{e_1}\psi^2|\mathcal{T}_T(\alpha)|}{\alpha C_{e_2}}\right)\right)^{\frac{1}{\beta_e}} + 1\right)\ln(1 + \kappa |\mathcal{T}_T(\alpha)|/\alpha).\label{eqn:exp_ep_pt_2}
\end{equation}

Suppose that $\ln\left(\frac{C_{e_1}\psi^2|\mathcal{T}_T(\alpha)|}{\alpha C_{e_2}}\right) \leq 0$. This implies the bound
\begin{equation}
|\mathcal{T}_T(\alpha)| \leq \frac{\alpha C_{e_2}}{C_{e_1}\psi^2}.\label{eqn:exp_beta_e1_pt1}
\end{equation}

For the rest of the proof for $\beta_e = 1$, suppose that $\ln\left(\frac{C_{e_1}\psi^2|\mathcal{T}_T(\alpha)|}{\alpha C_{e_2}}\right) \geq 0$. If $\frac{2}{C_{e_2}}\ln\left(\frac{C_{e_1}\psi^2|\mathcal{T}_T(\alpha)|}{\alpha C_{e_2}}\right) \leq 1$, then we obtain
\begin{equation*}
|\mathcal{T}_T(\alpha)| \leq \frac{4}{\ln(1 + b^2)}\ln(1 + \kappa |\mathcal{T}_T(\alpha)|/\alpha).
\end{equation*}

We know from the proof of Lemma \ref{lem:ep_poly} that this inequality implies 
\begin{equation}
|\mathcal{T}_T(\alpha)| \leq \frac{4e}{(e-1)\ln(1 + b^2)}\ln\left(1 + \frac{4\kappa}{\alpha\ln(1 + b^2)}\right).\label{eqn:exp_beta_e1_pt2}
\end{equation}

If $\frac{2}{C_{e_2}}\ln\left(\frac{C_{e_1}\psi^2|\mathcal{T}_T(\alpha)|}{\alpha C_{e_2}}\right) \geq 1$, then from (\ref{eqn:exp_ep_pt_2}), we obtain
\begin{equation}
|\mathcal{T}_T(\alpha)| \leq \frac{4}{\ln(1 + b^2)}\left(\frac{2}{C_{e_2}}\right)^{\frac{1}{\beta_e}}\ln\left(\frac{C_{e_1}\psi^2|\mathcal{T}_T(\alpha)|}{\alpha C_{e_2}}\right)^{\frac{1}{\beta_e}}\ln(1 + \kappa |\mathcal{T}_T(\alpha)|/\alpha).\label{eqn:exp_beta_e1_subcase}
\end{equation}

From here, if $\ln(\frac{C_{e_1}\psi^2|\mathcal{T}_T(\alpha)|}{\alpha C_{e_2}}) \leq \ln(1 + \kappa |\mathcal{T}_T(\alpha)|/\alpha)$, then we must have $\ln(\frac{C_{e_1}\psi^2|\mathcal{T}_T(\alpha)|}{\alpha C_{e_2}})^{1/\beta_e} \leq \ln(1 + \kappa |\mathcal{T}_T(\alpha)|/\alpha)^{1/\beta_e}$, since $1/\beta_e > 0$ and we are in the case where $\ln(\frac{C_{e_1}\psi^2|\mathcal{T}_T(\alpha)|}{\alpha C_{e_2}}) \geq 0$. In this case, we have
\begin{equation}
|\mathcal{T}_T(\alpha)| \leq \frac{4}{\ln(1 + b^2)}\left(\frac{2}{C_{e_2}}\right)^{\frac{1}{\beta_e}}\ln(1 + \kappa |\mathcal{T}_T(\alpha)|/\alpha)^{1 + \frac{1}{\beta_e}}.\label{eqn:exp_ep_repeat_1}
\end{equation}

We must have either $\kappa|\mathcal{T}_T(\alpha)|/\alpha \leq 1$ or $\kappa|\mathcal{T}_T(\alpha)|/\alpha \geq 1$. If $\kappa|\mathcal{T}_T(\alpha)|/\alpha \leq 1$, then we get
\begin{equation}
|\mathcal{T}_T(\alpha)| \leq \frac{4}{\ln(1 + b^2)}\left(\frac{2}{C_{e_2}}\right)^{\frac{1}{\beta_e}}\ln(2)^{1 + \frac{1}{\beta_e}}.\label{eqn:exp_beta_e1_pt3}
\end{equation}

Otherwise, if $\kappa|\mathcal{T}_T(\alpha)|/\alpha \geq 1$, then
\begin{equation*}
|\mathcal{T}_T(\alpha)| \leq \frac{4}{\ln(1 + b^2)}\left(\frac{2}{C_{e_2}}\right)^{\frac{1}{\beta_e}}\ln(2\kappa |\mathcal{T}_T(\alpha)|/\alpha)^{1 + \frac{1}{\beta_e}}.
\end{equation*}

After some algebra, we obtain
\begin{equation*}
|\mathcal{T}_T(\alpha)|^{\beta_e/(\beta_e+1)} \leq \left(\frac{4}{\ln(1 + b^2)}\right)^{\beta_e/(\beta_e+1)}\left(\frac{2}{C_{e_2}}\right)^{\frac{1}{\beta_e+1}}\frac{\beta_e+1}{\beta_e}\ln\left(\left(\frac{2\kappa}{\alpha}\right)^{\beta_e/(\beta_e+1)}|\mathcal{T}_T(\alpha)|^{\beta_e/(\beta_e+1)}\right).
\end{equation*}

Now, using Lemma \ref{lem:gales_3} with $A = (\frac{4}{\ln(1 + b^2)})^{\beta_e/(\beta_e+1)}(\frac{2}{C_{e_2}})^{\frac{1}{\beta_e+1}}\frac{\beta_e+1}{\beta_e}$, $B = (\frac{2\kappa}{\alpha})^{\beta_e/(\beta_e+1)}$, $X = |\mathcal{T}_T(\alpha)|^{\beta_e/(\beta_e+1)}$ and $\eta = 1/e$, we get
\begin{equation*}
|\mathcal{T}_T(\alpha)|^{\beta_e/(\beta_e+1)} \leq \left(\frac{4}{\ln(1 + b^2)}\right)^{\beta_e/(\beta_e+1)}\left(\frac{2}{C_{e_2}}\right)^{\frac{1}{\beta_e+1}}\frac{e(\beta_e+1)}{(e-1)\beta_e}\ln\left(\left(\frac{8\kappa}{\alpha\ln(1 + b^2)}\right)^{\beta_e/(\beta_e+1)}\left(\frac{2}{C_{e_2}}\right)^{\frac{1}{\beta_e+1}}\frac{\beta_e+1}{\beta_e}\right),
\end{equation*}

which means
\begin{equation}
|\mathcal{T}_T(\alpha)| \leq \frac{4}{\ln(1 + b^2)}\left(\frac{2}{C_{e_2}}\right)^{\frac{1}{\beta_e}}\left(\frac{e(\beta_e+1)}{(e-1)\beta_e}\right)^{\frac{\beta_e+1}{\beta_e}}\ln\left(\left(\frac{8\kappa}{\alpha\ln(1 + b^2)}\right)^{\beta_e/(\beta_e+1)}\left(\frac{2}{C_{e_2}}\right)^{\frac{1}{\beta_e+1}}\frac{\beta_e+1}{\beta_e}\right)^{\frac{\beta_e+1}{\beta_e}}.\label{eqn:exp_beta_e1_pt4}
\end{equation}

Now, we return to (\ref{eqn:exp_beta_e1_subcase}). We have considered the case when $\ln(\frac{C_{e_1}\psi^2|\mathcal{T}_T(\alpha)|}{\alpha C_{e_2}}) \leq \ln(1 + \kappa |\mathcal{T}_T(\alpha)|/\alpha)$. If instead, $\ln(\frac{C_{e_1}\psi^2|\mathcal{T}_T(\alpha)|}{\alpha C_{e_2}}) \geq \ln(1 + \kappa |\mathcal{T}_T(\alpha)|/\alpha)$, then from (\ref{eqn:exp_beta_e1_subcase}), we have
\begin{equation*}
|\mathcal{T}_T(\alpha)| \leq \frac{4}{\ln(1 + b^2)}\left(\frac{2}{C_{e_2}}\right)^{\frac{1}{\beta_e}}\ln\left(\frac{C_{e_1}\psi^2|\mathcal{T}_T(\alpha)|}{\alpha C_{e_2}}\right)^{1 + \frac{1}{\beta_e}}.
\end{equation*}

After some algebra, this becomes
\begin{equation*}
|\mathcal{T}_T(\alpha)|^{\frac{\beta_e}{\beta_e + 1}} \leq \left(\frac{4}{\ln(1 + b^2)}\right)^{\frac{\beta_e}{\beta_e + 1}}\left(\frac{2}{C_{e_2}}\right)^{\frac{1}{\beta_e + 1}}\frac{\beta_e + 1}{\beta_e}\ln\left(\left(\frac{C_{e_1}\psi^2}{\alpha C_{e_2}}\right)^{\frac{\beta_e}{\beta_e + 1}}|\mathcal{T}_T(\alpha)|^{\frac{\beta_e}{\beta_e + 1}}\right).
\end{equation*}

Using Lemma \ref{lem:gales_3} with $A = (\frac{4}{\ln(1 + b^2)})^{\beta_e/(\beta_e+1)}(\frac{2}{C_{e_2}})^{\frac{1}{\beta_e+1}}\frac{\beta_e+1}{\beta_e}$, $B = (\frac{C_{e_1}\psi^2}{\alpha C_{e_2}})^{\beta_e/(\beta_e+1)}$, $X = |\mathcal{T}_T(\alpha)|^{\beta_e/(\beta_e+1)}$ and $\eta = 1/e$, we get
\begin{equation*}
|\mathcal{T}_T(\alpha)|^{\frac{\beta_e}{\beta_e + 1}} \leq \left(\frac{4}{\ln(1 + b^2)}\right)^{\frac{\beta_e}{\beta_e + 1}}\left(\frac{2}{C_{e_2}}\right)^{\frac{1}{\beta_e + 1}}\frac{e(\beta_e + 1)}{(e-1)\beta_e}\ln\left(\left(\frac{4C_{e_1}\psi^2}{\alpha C_{e_2}\ln(1 + b^2)}\right)^{\frac{\beta_e}{\beta_e + 1}}\left(\frac{2}{C_{e_2}}\right)^{\frac{1}{\beta_e + 1}}\frac{\beta_e + 1}{\beta_e}\right),
\end{equation*}

which gives
\begin{equation}
|\mathcal{T}_T(\alpha)| \leq \frac{4}{\ln(1 + b^2)}\left(\frac{2}{C_{e_2}}\right)^{\frac{1}{\beta_e}}\left(\frac{e(\beta_e + 1)}{(e-1)\beta_e}\right)^{\frac{\beta_e + 1}{\beta_e}}\ln\left(\left(\frac{4C_{e_1}\psi^2}{\alpha C_{e_2}\ln(1 + b^2)}\right)^{\frac{\beta_e}{\beta_e + 1}}\left(\frac{2}{C_{e_2}}\right)^{\frac{1}{\beta_e + 1}}\frac{\beta_e + 1}{\beta_e}\right)^{\frac{\beta_e + 1}{\beta_e}}.\label{eqn:exp_beta_e1_pt5}
\end{equation}

By taking the maximum of the bounds in (\ref{eqn:exp_beta_e1_pt1}), (\ref{eqn:exp_beta_e1_pt2}), (\ref{eqn:exp_beta_e1_pt3}), (\ref{eqn:exp_beta_e1_pt4}) and (\ref{eqn:exp_beta_e1_pt5}), we obtain the claimed result for $\beta_e = 1$. Now, we consider the case where $\beta_e < 1$, which means $C_{\beta} = \ln(\frac{2C_{e_1}\psi^2}{\alpha \beta_eC_{e_2}}) + \tilde{C}_{\beta}$, where $\tilde{C}_{\beta} = (1/\beta_e - 1)(\ln(\frac{2}{C_{e_2}}(1/\beta_e - 1)) - 1)$, and (\ref{eqn:exp_ep_pt_1}) becomes
\begin{equation}
|\mathcal{T}_T(\alpha)| \leq \frac{2}{\ln(1 + b^2)}\left(\left(\frac{2}{C_{e_2}}\left(\ln\left(\frac{2C_{e_1}\psi^2|\mathcal{T}_T(\alpha)|}{\alpha \beta_eC_{e_2}}\right) + \tilde{C}_{\beta}\right)\right)^{\frac{1}{\beta_e}} + 1\right)\ln(1 + \kappa |\mathcal{T}_T(\alpha)|/\alpha).\label{eqn:exp_ep_pt_3}
\end{equation}

Note that if $\ln(\frac{2C_{e_1}\psi^2|\mathcal{T}_T(\alpha)|}{\alpha \beta_eC_{e_2}}) + \tilde{C}_{\beta} \leq 0$, then we have the bound
\begin{equation}
|\mathcal{T}_T(\alpha)| \leq \frac{\alpha\beta_eC_{e_2}\exp(-\tilde{C}_{\beta})}{2C_{e_1}\psi^2}.\label{eqn:exp_beta_l1_pt1}
\end{equation}

For the rest of the proof, we consider the case when $\ln(\frac{2C_{e_1}\psi^2|\mathcal{T}_T(\alpha)|}{\alpha \beta_eC_{e_2}}) + \tilde{C}_{\beta} \geq 0$, which means $(\ln(\frac{2C_{e_1}\psi^2|\mathcal{T}_T(\alpha)|}{\alpha \beta_eC_{e_2}}) + \tilde{C}_{\beta})^{\frac{1}{\beta_e}}$ is a nonnegative real. If $\tilde{C}_{\beta_e} \leq \ln(\frac{2C_{e_1}\psi^2|\mathcal{T}_T(\alpha)|}{\alpha \beta_eC_{e_2}})$, then since $\ln(\frac{2C_{e_1}\psi^2|\mathcal{T}_T(\alpha)|}{\alpha \beta_eC_{e_2}}) + \tilde{C}_{\beta} \geq 0$, $\ln(\frac{2C_{e_1}\psi^2|\mathcal{T}_T(\alpha)|}{\alpha \beta_eC_{e_2}})$ must be nonnegative and we must have $(\ln(\frac{2C_{e_1}\psi^2|\mathcal{T}_T(\alpha)|}{\alpha \beta_eC_{e_2}}) + \tilde{C}_{\beta})^{1/\beta_e} \leq (2\ln(\frac{2C_{e_1}\psi^2|\mathcal{T}_T(\alpha)|}{\alpha \beta_eC_{e_2}}))^{1/\beta_e}$. From (\ref{eqn:exp_ep_pt_3}), we then have
\begin{equation}
|\mathcal{T}_T(\alpha)| \leq \frac{2}{\ln(1 + b^2)}\left(\left(\frac{4}{C_{e_2}}\ln\left(\frac{2C_{e_1}\psi^2|\mathcal{T}_T(\alpha)|}{\alpha \beta_eC_{e_2}}\right)\right)^{\frac{1}{\beta_e}} + 1\right)\ln(1 + \kappa |\mathcal{T}_T(\alpha)|/\alpha).\label{eqn:exp_ep_pt_4}
\end{equation}

If $\frac{4}{C_{e_2}}\ln(\frac{2C_{e_1}\psi^2|\mathcal{T}_T(\alpha)|}{\alpha \beta_eC_{e_2}}) \leq 1$, then we have
\begin{equation*}
|\mathcal{T}_T(\alpha)| \leq \frac{4}{\ln(1 + b^2)}\ln(1 + \kappa |\mathcal{T}_T(\alpha)|/\alpha).
\end{equation*}

Corollary \ref{cor:gales} with $A = \frac{4}{\ln(1 + b^2)}$, $B = \kappa/\alpha$, $X = |\mathcal{T}_T(\alpha)|$ then tells us that
\begin{equation}
|\mathcal{T}_T(\alpha)| \leq \frac{4e}{(e-1)\ln(1 + b^2)}\ln\left(1 + \frac{4\kappa}{\alpha\ln(1 + b^2)}\right).\label{eqn:exp_beta_l1_pt2}
\end{equation}

Returning to (\ref{eqn:exp_ep_pt_4}), if $\frac{4}{C_{e_2}}\ln(\frac{2C_{e_1}\psi^2|\mathcal{T}_T(\alpha)|}{\alpha \beta_eC_{e_2}}) \geq 1$, then we have
\begin{equation}
|\mathcal{T}_T(\alpha)| \leq \frac{4}{\ln(1 + b^2)}\left(\frac{4}{C_{e_2}}\ln\left(\frac{2C_{e_1}\psi^2|\mathcal{T}_T(\alpha)|}{\alpha \beta_eC_{e_2}}\right)\right)^{\frac{1}{\beta_e}}\ln(1 + \kappa |\mathcal{T}_T(\alpha)|/\alpha).\label{eqn:exp_ep_pt_5}
\end{equation}

From here, if $\ln(\frac{2C_{e_1}\psi^2|\mathcal{T}_T(\alpha)|}{\alpha \beta_eC_{e_2}}) \leq \ln(1 + \kappa |\mathcal{T}_T(\alpha)|/\alpha)$, then we must have $\ln(\frac{2C_{e_1}\psi^2|\mathcal{T}_T(\alpha)|}{\alpha \beta_eC_{e_2}})^{1/\beta_e} \leq \ln(1 + \kappa |\mathcal{T}_T(\alpha)|/\alpha)^{1/\beta_e}$, since $1/\beta_e > 0$ and $\ln(\frac{2C_{e_1}\psi^2|\mathcal{T}_T(\alpha)|}{\alpha \beta_eC_{e_2}})$ is nonnegative. In this case, we have
\begin{equation*}
|\mathcal{T}_T(\alpha)| \leq \frac{4}{\ln(1 + b^2)}\left(\frac{4}{C_{e_2}}\right)^{\frac{1}{\beta_e}}\ln(1 + \kappa |\mathcal{T}_T(\alpha)|/\alpha)^{1 + \frac{1}{\beta_e}}.
\end{equation*}

Note that this inequality is the same as (\ref{eqn:exp_ep_repeat_1}), except a 2 is replaced by a 4. As before, this inequality implies that 
\begin{align}
|\mathcal{T}_T(\alpha)| \leq \max\bigg(&\frac{4}{\ln(1 + b^2)}\left(\frac{4}{C_{e_2}}\right)^{\frac{1}{\beta_e}}\ln(2)^{1 + \frac{1}{\beta_e}},\label{eqn:exp_beta_l1_pt3}\\
&\frac{4}{\ln(1 + b^2)}\left(\frac{4}{C_{e_2}}\right)^{\frac{1}{\beta_e}}\left(\frac{e(\beta_e+1)}{(e-1)\beta_e}\right)^{\frac{\beta_e+1}{\beta_e}}\ln\left(\left(\frac{8\kappa}{\alpha\ln(1 + b^2)}\right)^{\beta_e/(\beta_e+1)}\left(\frac{4}{C_{e_2}}\right)^{\frac{1}{\beta_e+1}}\frac{\beta_e+1}{\beta_e}\right)^{\frac{\beta_e+1}{\beta_e}}\bigg).\nonumber
\end{align}

Returning to (\ref{eqn:exp_ep_pt_5}), if $\ln(\frac{2C_{e_1}\psi^2|\mathcal{T}_T(\alpha)|}{\alpha \beta_eC_{e_2}}) \geq \ln(1 + \kappa |\mathcal{T}_T(\alpha)|/\alpha)$, then
\begin{equation}
|\mathcal{T}_T(\alpha)| \leq \frac{4}{\ln(1 + b^2)}\left(\frac{4}{C_{e_2}}\right)^{\frac{1}{\beta_e}}\ln\left(\frac{2C_{e_1}\psi^2|\mathcal{T}_T(\alpha)|}{\alpha \beta_eC_{e_2}}\right)^{1 + \frac{1}{\beta_e}}.\label{eqn:exp_ep_repeat_2}
\end{equation}

After some algebra, we obtain
\begin{equation*}
|\mathcal{T}_T(\alpha)|^{\frac{\beta_e}{\beta_e+1}} \leq \left(\frac{4}{\ln(1 + b^2)}\right)^{\frac{\beta_e}{\beta_e+1}}\left(\frac{4}{C_{e_2}}\right)^{\frac{1}{\beta_e+1}}\frac{\beta_e + 1}{\beta_e}\ln\left(\left(\frac{2C_{e_1}\psi^2}{\alpha \beta_eC_{e_2}}\right)^{\frac{\beta_e}{\beta_e+1}}|\mathcal{T}_T(\alpha)|^{\frac{\beta_e}{\beta_e+1}}\right).
\end{equation*}

Using Lemma \ref{lem:gales_3} with $A = (\frac{4}{\ln(1 + b^2)})^{\frac{\beta_e}{\beta_e+1}}(\frac{4}{C_{e_2}})^{\frac{1}{\beta_e+1}}\frac{\beta_e + 1}{\beta_e}$, $B = (\frac{2C_{e_1}\psi^2}{\alpha \beta_eC_{e_2}})^{\frac{\beta_e}{\beta_e+1}}$, $X = |\mathcal{T}_T(\alpha)|^{\beta_e/(\beta_e+1)}$ and $\eta = 1/e$, we get
\begin{equation*}
|\mathcal{T}_T(\alpha)|^{\frac{\beta_e}{\beta_e + 1}} \leq \left(\frac{4}{\ln(1 + b^2)}\right)^{\frac{\beta_e}{\beta_e+1}}\left(\frac{4}{C_{e_2}}\right)^{\frac{1}{\beta_e+1}}\frac{e(\beta_e + 1)}{(e-1)\beta_e}\ln\left(\left(\frac{8C_{e_1}\psi^2}{\alpha \beta_eC_{e_2}\ln(1 + b^2)}\right)^{\frac{\beta_e}{\beta_e+1}}\left(\frac{4}{C_{e_2}}\right)^{\frac{1}{\beta_e+1}}\frac{\beta_e + 1}{\beta_e}\right),
\end{equation*}

which gives
\begin{equation}
|\mathcal{T}_T(\alpha)| \leq \frac{4}{\ln(1 + b^2)}\left(\frac{4}{C_{e_2}}\right)^{\frac{1}{\beta_e}}\left(\frac{e(\beta_e + 1)}{(e-1)\beta_e}\right)^{\frac{\beta_e + 1}{\beta_e}}\ln\left(\left(\frac{8C_{e_1}\psi^2}{\alpha \beta_eC_{e_2}\ln(1 + b^2)}\right)^{\frac{\beta_e}{\beta_e+1}}\left(\frac{4}{C_{e_2}}\right)^{\frac{1}{\beta_e+1}}\frac{\beta_e + 1}{\beta_e}\right)^{\frac{\beta_e + 1}{\beta_e}}.\label{eqn:exp_beta_l1_pt4}
\end{equation}

Returning to (\ref{eqn:exp_ep_pt_3}), if $\ln(\frac{2C_{e_1}\psi^2|\mathcal{T}_T(\alpha)|}{\alpha \beta_eC_{e_2}}) \leq \tilde{C}_{\beta}$, then since $\ln(\frac{2C_{e_1}\psi^2|\mathcal{T}_T(\alpha)|}{\alpha \beta_eC_{e_2}}) + \tilde{C}_{\beta} \geq 0$, $\tilde{C}_{\beta}$ must be nonnegative. From (\ref{eqn:exp_ep_pt_3}), we have
\begin{equation*}
|\mathcal{T}_T(\alpha)| \leq \frac{2}{\ln(1 + b^2)}\left(\left(\frac{4\tilde{C}_{\beta}}{C_{e_2}}\right)^{\frac{1}{\beta_e}} + 1\right)\ln(1 + \kappa |\mathcal{T}_T(\alpha)|/\alpha)
\end{equation*}

Using Corollary \ref{cor:gales}, with $A = \frac{2}{\ln(1 + b^2)}((\frac{4\tilde{C}_{\beta}}{C_{e_2}})^{\frac{1}{\beta_e}} + 1)$, $B = \kappa/\alpha$, $X = |\mathcal{T}_T(\alpha)|$, we obtain
\begin{equation}
|\mathcal{T}_T(\alpha)| \leq \frac{2e}{(e-1)\ln(1 + b^2)}\left(\left(\frac{4\tilde{C}_{\beta}}{C_{e_2}}\right)^{\frac{1}{\beta_e}} + 1\right)\ln\left(1 + \frac{2\kappa}{\alpha\ln(1 + b^2)}\left(\left(\frac{4\tilde{C}_{\beta}}{C_{e_2}}\right)^{\frac{1}{\beta_e}} + 1\right)\right).\label{eqn:exp_beta_l1_pt5}
\end{equation}

Finally, by taking the maximum of the bounds in (\ref{eqn:exp_beta_l1_pt1}), (\ref{eqn:exp_beta_l1_pt2}), (\ref{eqn:exp_beta_l1_pt3}), (\ref{eqn:exp_beta_l1_pt4}) and (\ref{eqn:exp_beta_l1_pt5}), we obtain the claimed result for $\beta_e < 1$.
\end{proof}

\subsection{Generic Regret Bound}
\label{app:generic_regret}

Before proving the generic regret bound in Thm. \ref{thm:regret_bound}, we state a useful bound on the regret in a given round $t$. The proof is standard in the analysis of optimistic bandit algorithms (e.g. Prop. 1 in \citealp{russo2013eluder}), but we include it here for completeness.

\begin{lemma}[Per-round regret bound]\label{lem:per_regret}
For any $c > 0$ and any $\delta \in (0, 1]$, let $x_1, x_2, \dots$ be the actions chosen by Kernel CMM-UCB, Kernel DMM-UCB (with any grid $A$ that contains $\alpha$) or Kernel AMM-UCB (with any fixed $\alpha$). With probability at least $1 - \delta$,
\begin{equation*}
\forall t \geq 1, \qquad r_t \leq \frac{2\widetilde{R}_{\alpha,t-1}}{\sqrt{\alpha}}\rho_{\alpha,t-1}(x_t).
\end{equation*}
\end{lemma}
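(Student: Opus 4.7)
The plan is to follow the classical UCB regret decomposition and then bound the width of each algorithm's confidence interval at the played action by the analytic quantity $2\widetilde{R}_{\alpha,t-1}\rho_{\alpha,t-1}(x_t)/\sqrt{\alpha}$. First I would apply Lemma \ref{lem:rkhs_conf_set} (and hence Corollary \ref{cor:rkhs_ridge_conf_set}) at level $\delta$ to obtain the event $\mathcal{E}$ of probability at least $1-\delta$ on which $f^* \in \mathcal{F}_{t-1} \subseteq \widetilde{\mathcal{F}}_{t-1}$ for every $t \geq 1$ simultaneously. Let $u_{t-1}$ denote the UCB supplied to Algorithm \ref{alg:kernel_ucb} by the variant in question, and let $\ell_{t-1}$ be the analogous LCB obtained by swapping $\max$ with $\min$ in the same construction (for Kernel DMM-UCB this means swapping the $\min$ over $\alpha \in A$ with a $\max$ and the $+$ with a $-$).

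On $\mathcal{E}$, for each of the three variants $f^*$ lies in the relevant confidence set, so one checks that $\ell_{t-1}(x) \leq f^*(x) \leq u_{t-1}(x)$ for all $x \in \mathcal{X}$. Combined with the greedy rule $x_t \in \arg\max_{x \in \mathcal{X}_t} u_{t-1}(x)$, this yields the standard chain
\begin{equation*}
r_t = f^*(x_t^*) - f^*(x_t) \leq u_{t-1}(x_t^*) - \ell_{t-1}(x_t) \leq u_{t-1}(x_t) - \ell_{t-1}(x_t),
\end{equation*}
reducing the task to bounding the width $u_{t-1}(x_t) - \ell_{t-1}(x_t)$ by $2\widetilde{R}_{\alpha,t-1}\rho_{\alpha,t-1}(x_t)/\sqrt{\alpha}$.

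For Kernel AMM-UCB the bound is immediate from the construction: the width is exactly $2\widetilde{R}_{\alpha,t-1}\rho_{\alpha,t-1}(x_t)/\sqrt{\alpha}$. For Kernel CMM-UCB the UCB is $\max_{f \in \mathcal{F}_{t-1}} f(x)$ and the LCB is $\min_{f \in \mathcal{F}_{t-1}} f(x)$; using the inclusion $\mathcal{F}_{t-1} \subseteq \widetilde{\mathcal{F}}_{t-1}$ and then Corollary \ref{cor:analytic_ucb} (applied to both UCB and, by symmetry, LCB), each side is dominated by the analytic bound, and the width is again at most $2\widetilde{R}_{\alpha,t-1}\rho_{\alpha,t-1}(x_t)/\sqrt{\alpha}$. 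For Kernel DMM-UCB, assuming $\alpha \in A$, dropping all grid points other than $\alpha$ can only loosen the $\min$ in $u_{t-1}$ and tighten the $\max$ in $\ell_{t-1}$, so the width is again bounded by the same quantity.

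The argument is essentially routine once the confidence sequence is in place; there is no substantive obstacle. The only care required is in matching the correct confidence container ($\mathcal{F}_{t-1}$ vs.\ $\widetilde{\mathcal{F}}_{t-1}$) and the correct analytic expression to each variant, and in explicitly fixing a symmetric LCB convention for Kernel DMM-UCB so that restricting its min/max to the single grid point $\alpha$ yields the stated width. No probabilistic step is needed beyond the single application of Lemma \ref{lem:rkhs_conf_set}.
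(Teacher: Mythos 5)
Your proposal is correct and follows essentially the same route as the paper: the standard UCB regret decomposition on the confidence-sequence event from Lemma \ref{lem:rkhs_conf_set}, followed by bounding the confidence width at $x_t$ via the analytic bound of Corollary \ref{cor:analytic_ucb} (equivalently Eq.\ (\ref{eqn:dual_ucb_bound})) and its LCB analogue. The paper writes out the chain only for Kernel CMM-UCB and asserts the other two variants are analogous, whereas you spell out the per-variant width bounds explicitly, but the argument is the same.
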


\begin{proof}
We prove the result for Kernel CMM-UCB. The same argument applies for Kernel DMM-UCB and Kernel AMM-UCB. Using Lemma \ref{lem:rkhs_conf_set}, for any $c > 0$, with probability at least $1 - \delta$, we have
\begin{equation*}
\forall t \geq 1, \forall x \in \mathcal{X}, \qquad \mathrm{LCB}_{\mathcal{F}_t}(x) \leq f^{\star}(x) \leq \mathrm{UCB}_{\mathcal{F}_t}(x).
\end{equation*}

Since $x_1, x_2, \dots$ are the actions chosen by Kernel CMM-UCB, we have $x_t = \argmax_{x \in \mathcal{X}_t}\{\mathrm{UCB}_{\mathcal{F}_{t-1}}(x)\}$. Therefore,
\begin{align*}
r_t &= f^{\star}(x_t^{\star}) - f^{\star}(x_t)\\
&\leq \mathrm{UCB}_{\mathcal{F}_{t-1}}(x_t^{\star}) - \mathrm{LCB}_{\mathcal{F}_{t-1}}(x_t)\\
&\leq \mathrm{UCB}_{\mathcal{F}_{t-1}}(x_t) - \mathrm{LCB}_{\mathcal{F}_{t-1}}(x_t)\\
&\leq \frac{2\widetilde{R}_{\alpha,t-1}}{\sqrt{\alpha}}\rho_{\alpha, t-1}(x_t).
\end{align*}

The final inequality uses (\ref{eqn:dual_ucb_bound}) and the analogous lower bound for $\mathrm{LCB}_{\mathcal{F}_{t}}(x)$.
\end{proof}

We can now prove the generic regret bound in Thm. \ref{thm:regret_bound}.

\begin{proof}[Proof of Thm. \ref{thm:regret_bound}]
Using Lemma \ref{lem:per_regret} with $\alpha = \sigma^2/c$, with probability at least $1 - \delta$, we have
\begin{equation*}
\forall t \geq 1, \qquad r_t \leq \frac{2\widetilde{R}_{\sigma^2/c,t-1}}{\sqrt{\sigma^2/c}}\rho_{\sigma^2/c,t-1}(x_t).
\end{equation*}

From our boundedness assumptions, we also have $r_t \leq 2C$. We define the set $\mathcal{T}_T(\alpha)$ as
\begin{equation*}
\mathcal{T}_T(\alpha) = \left\{t \in [T]: \frac{1}{\alpha}\rho_{\alpha,t-1}^2(x_t) \geq 1\right\}.
\end{equation*}

Recall that $\tau_{T}(\alpha) = |\mathcal{T}_T(\alpha)|$. From Lemma \ref{lem:kv_swap2}, we have $\frac{1}{\alpha}\rho_{\alpha, t-1}^2(x_t) = \|k(\cdot, x_t)\|_{(\bs{V}_{t-1} + \alpha\bs{I}_{\mathcal{H}})^{-1}}^2$, which means $\mathcal{T}_T(\alpha)$ is the same as the set defined in Lemma \ref{lem:ep_info_gain}, with $b = 1$. By decomposing the cumulative regret, and then using Cauchy-Schwarz, we obtain
\begin{align*}
\sum_{t=1}^{T}r_t &= \sum_{t \in \mathcal{T}_T}r_t + \sum_{t \notin \mathcal{T}_T}r_t\\
&\leq 2C\tau_T(\tfrac{\sigma^2}{c}) + \sqrt{T{\textstyle\sum_{t \notin \mathcal{T}_T}}r_t^2}\\
&\leq 2C\tau_T(\tfrac{\sigma^2}{c}) + \sqrt{T{\textstyle\sum_{t \notin \mathcal{T}_T}}4\widetilde{R}_{\sigma^2/c,t-1}^2\frac{c}{\sigma^2}\rho_{\sigma^2/c,t-1}^2(x_t)}\\
&\leq 2C\tau_T(\tfrac{\sigma^2}{c}) + \sqrt{4\max_{s \in \{0, 1, \dots, T\}}\widetilde{R}_{\sigma^2/c,s}^2T\sum_{t=1}^{T}\min\left(1, \frac{c}{\sigma^2}\rho_{\sigma^2/c,t-1}^2(x_t)\right)}.
\end{align*}

Using the bound on the squared radius and Lemma \ref{lem:elliptical}, we have
\begin{equation}
\sum_{t=1}^{T}r_t \leq 2C\tau_T\left(\frac{\sigma^2}{c}\right) + \sigma\sqrt{24T\gamma_T\left(\frac{\sigma^2}{c}\right)\left(\gamma_T\left(\frac{\sigma^2}{c}\right) + \frac{B^2}{2c} + \ln\left(\frac{1}{\delta}\right)\right)}\label{eqn:regret_bound}
\end{equation}
\end{proof}

\subsection{Regret Bounds By Eigendecay Rate}
\label{app:eig_regret}

Here, we determine regret bounds for kernels satisfying the polynomial or exponential eigendecay conditions. We focus on the dependence of the regret on $T$ (and $c$, since we choose this based on $T$). First, suppose that the kernel $k$ satisfies the polynomial eigenvalue decay condition.

\begin{proposition}[Regret Bound for Polynomial Decay]
Suppose that $k$ satisfies the polynomial eigendecay condition, and let $c \propto T^{-\frac{1}{\beta_p + 1}}$. For any $T \geq 1$ and any $\delta \in (0, 1]$, with probability at least $1 - \delta$, the cumulative regret of Kernel CMM-UCB, Kernel DMM-UCB (using a grid $A$ containing $\alpha = \sigma^2/c$) and Kernel AMM-UCB (using $\alpha = \sigma^2/c$) satisfies
\begin{equation*}
r_{1:T} = \mathcal{O}\left(T^{\frac{\beta_p + 3}{2\beta_p + 2}}\ln\left(T\right)^{1-\frac{1}{\beta_p}}\right).
\end{equation*}
\label{pro:poly_regret}
\end{proposition}

\begin{proof}
From the bounds in Lemma \ref{lem:info_gain} and Lemma \ref{lem:ep_poly}, we have
\begin{equation*}
\gamma_{T}\left(\tfrac{\sigma^2}{c}\right) = \mathcal{O}\left(\left(cT\right)^{\frac{1}{\beta_p}}\ln\left(cT\right)^{1 - \frac{1}{\beta_p}}\right), ~~\text{and} ~~\tau_{T}\left(\tfrac{\sigma^2}{c}\right) = \mathcal{O}\left(c^{\frac{1}{\beta_p - 1}}\ln\left(c^{\frac{\beta_p}{\beta_p - 1}}\right)\right).
\end{equation*}

If we plug these into the regret bound in Eq. (\ref{eqn:regret_bound}), we get
\begin{equation*}
\sum_{t=1}^{T}r_t = \mathcal{O}\left(c^{\frac{1}{\beta_p - 1}}\ln\left(c^{\frac{\beta_p}{\beta_p - 1}}\right) + T^{\frac{1}{2}+\frac{1}{\beta_p}}c^{\frac{1}{\beta_p}}\ln(cT)^{1-\frac{1}{\beta_p}} + T^{\frac{1}{2} + \frac{1}{2\beta_p}}c^{\frac{1}{2\beta_p}-\frac{1}{2}}\ln(cT)^{\frac{1}{2}-\frac{1}{2\beta_p}}\right).
\end{equation*}

If we set $c \propto T^q$, for some $q$ to be chosen later, this becomes
\begin{equation*}
\sum_{t=1}^{T}r_t = \mathcal{O}\left(T^{\frac{q}{\beta_p - 1}}\ln\left(T^{\frac{q\beta_p}{\beta_p - 1}}\right) + T^{\frac{1}{2}+\frac{1}{\beta_p}+\frac{q}{\beta_p}}\ln\left(T^{1+q}\right)^{1-\frac{1}{\beta_p}} + T^{\frac{1}{2} + \frac{1}{2\beta_p}+\frac{q}{2\beta_p}-\frac{q}{2}}\ln\left(T^{1+q}\right)^{\frac{1}{2}-\frac{1}{2\beta_p}}\right).
\end{equation*}

If we choose $q = -\frac{1}{\beta_p + 1}$, then
\begin{equation*}
\frac{1}{2}+\frac{1}{\beta_p}+\frac{q}{\beta_p} = \frac{1}{2} + \frac{1}{2\beta_p}+\frac{q}{2\beta_p}-\frac{q}{2} = \frac{\beta_p + 3}{2\beta_p + 2}.
\end{equation*}

Therefore, with this choice of $q$, the regret bound is
\begin{equation*}
\sum_{t=1}^{T}r_t = \mathcal{O}\left(T^{\frac{\beta_p + 3}{2\beta_p + 2}}\ln\left(T\right)^{1-\frac{1}{\beta_p}}\right).
\end{equation*}
\end{proof}

This matches the regret bound in Thm. 2 in \citep{whitehouse2023sublinear}, which we believe to be the best known regret bound for (vanilla) KernelUCB/GP-UCB under polynomial eigendecay. It is known that the Mat\'{e}rn kernel, with smoothness parameter $\nu > 1/2$, satisfies the polynomial eigendecay condition with $\beta_p = \frac{2\nu + d}{d}$ \citep{santin2016approximation}. Therefore, if we choose $c \propto T^{-\frac{d}{2\nu + 2d}}$, the the regret bound is
\begin{equation*}
\sum_{t=1}^{T}r_t = \mathcal{O}\left(T^{\frac{\nu + 2d}{2\nu + 2d}}\ln\left(T\right)^{\frac{2\nu}{2\nu + d}}\right).
\end{equation*}

Now, suppose that the kernel $k$ satisfies the exponential eigendecay condition.

\begin{proposition}[Regret Bound for Exponential Eigendecay]
Suppose that $k$ satisfies the exponential eigendecay condition, and let $c \propto 1$. For any $T \geq 1$ and any $\delta \in (0, 1]$, with probability at least $1 - \delta$, the cumulative regret of Kernel CMM-UCB, Kernel DMM-UCB (using a grid $A$ containing $\alpha = \sigma^2/c$) and Kernel AMM-UCB (using $\alpha = \sigma^2/c$) satisfies
\begin{equation*}
r_{1:T} = \mathcal{O}\left(\sqrt{T}\ln\left(T\right)^{1 + \frac{1}{\beta_e}}\right).
\end{equation*}
\label{pro:exp_regret}
\end{proposition}

\begin{proof}
From the bounds in Lemma \ref{lem:info_gain} and Lemma \ref{lem:ep_exp}, we have
\begin{equation*}
\gamma_{T}\left(\tfrac{\sigma^2}{c}\right) = \mathcal{O}\left(\ln\left(cT\right)^{1 + \frac{1}{\beta_e}}\right), ~~\text{and} ~~\tau_{T}\left(\tfrac{\sigma^2}{c}\right) = \mathcal{O}\left(\max\left(\tfrac{1}{c}, \ln\left(c\right)^{1 + \frac{1}{\beta_e}}\right)\right).
\end{equation*}

If we plug these into Eq. (\ref{eqn:regret_bound}), we get
\begin{equation*}
\sum_{t=1}^{T}r_t = \mathcal{O}\left(\max\left(\tfrac{1}{c}, \ln\left(c\right)^{1 + \frac{1}{\beta_e}}\right) + \sqrt{T}\ln\left(cT\right)^{1 + \frac{1}{\beta_e}} + \sqrt{\tfrac{T}{c}\ln\left(cT\right)^{1 + \frac{1}{\beta_e}}}\right).
\end{equation*}

If we set $c \propto 1$, this becomes
\begin{equation*}
\sum_{t=1}^{T}r_t = \mathcal{O}\left(\sqrt{T}\ln\left(T\right)^{1 + \frac{1}{\beta_e}}\right).
\end{equation*}
\end{proof}

It is known that the RBF kernel satisfies the exponential eigendecay condition with $\beta_e = 1/d$ \citep{belkin2018approximation}. This means that, when $c \propto 1$, the regret bound is
\begin{equation*}
\sum_{t=1}^{T}r_t = \mathcal{O}\left(\sqrt{T}\ln\left(T\right)^{d+1}\right).
\end{equation*}

Note that one could also use the regret bound in (\ref{eqn:regret_bound}) to determine how $c$ should depend on $B$. For exponential eigendecay, one can show that the choice $c \propto B^2$ leads to a regret bound which is polylogarithmic in $B$ in the dominant $\sqrt{T}$ term. One can also consider the dependence of the regret on $\sigma$. For exponential eigendecay, when $\sigma \propto 1/\sqrt{T}$, the regret bound becomes polylogarithmic in $T$. Therefore, KernelUCB with our confidence bounds can still enjoy $\mathrm{polylog}(T)$ regret even when the problem is only approximately noiseless.

\subsection{Regret Bounds for the Noiseless Setting}
\label{app:regret_noiseless}

In Sec. \ref{sec:noiseless}, we showed that for all $\alpha > 0$ and $T \geq 1$,
\begin{equation}
r_{1:T} \leq 2C\tau_T(\alpha) + \sqrt{12\alpha B^2T\gamma_T(\alpha)}.\label{eqn:noiseless_regret}
\end{equation}

First, suppose that the kernel $k$ satisfies the polynomial eigendecay condition. In the noiseless setting, we can prove the following regret bound.

\begin{proposition}[Regret Bound for Polynomial Decay in the Noiseless Setting]
Suppose that $k$ satisfies the polynomial eigendecay condition, and let $\alpha \propto T^{\frac{1 - \beta_p^2}{\beta_p^2 + 1}}$. For any $T \geq 1$, the cumulative regret of Kernel CMM-UCB, Kernel DMM-UCB (using a grid $A$ containing this $\alpha$) and Kernel AMM-UCB (using this $\alpha$) satisfies
\begin{equation*}
r_{1:T} = \mathcal{O}\left(T^{\frac{\beta_p + 1}{\beta_p^2 + 1}}\ln\left(T\right)\right).
\end{equation*}
\label{pro:poly_regret_noiseless}
\end{proposition}

This is always sublinear in $T$, since $\beta_p > 1$, and has a growth rate slower than $\sqrt{T}$ whenever $\beta_p > 1 + \sqrt{2}$.

\begin{proof}
We're interested in the case where $\alpha \propto T^{q}$ for some $q < 0$. In this regime, the bounds in Lemma \ref{lem:info_gain} and Lemma \ref{lem:ep_poly} tell us that
\begin{equation*}
\tau_T(\alpha) = \mathcal{O}\left(\alpha^{-\frac{1}{\beta_p - 1}}\ln\left(\alpha^{-\frac{\beta_p}{\beta_p - 1}}\right)\right), \quad \gamma_T(\alpha) = \mathcal{O}\left(\left(\tfrac{T}{\alpha}\right)^{\frac{1}{\beta_p}}\ln\left(\tfrac{T}{\alpha}\right)^{1 - \frac{1}{\beta_p}}\right).
\end{equation*}

This means that Eq. (\ref{eqn:noiseless_regret}) becomes
\begin{equation*}
r_{1:T} = \mathcal{O}\left(\alpha^{-\frac{1}{\beta_p - 1}}\ln\left(\alpha^{-\frac{\beta_p}{\beta_p - 1}}\right) + T^{\frac{1}{2}+\frac{1}{2\beta_p}}\alpha^{\frac{1}{2}-\frac{1}{2\beta_p}}\ln\left(\tfrac{T}{\alpha}\right)^{\frac{1}{2} - \frac{1}{2\beta_p}}\right).
\end{equation*}

If $\alpha \propto T^q$, for some $q < 0$ to be chosen later, then
\begin{equation*}
r_{1:T} = \mathcal{O}\left(T^{-\frac{q}{\beta_p - 1}}\ln\left(T^{-\frac{\beta_pq}{\beta_p - 1}}\right) + T^{\frac{1}{2}+\frac{1}{2\beta_p}+\frac{q}{2}-\frac{q}{2\beta_p}}\ln\left(T^{1-q}\right)^{\frac{1}{2} - \frac{1}{2\beta_p}}\right).
\end{equation*}

We want to choose $q$ such that the maximum of the powers of $T$ is minimised. We notice that, since $\beta_p > 1$, $-\frac{q}{\beta_p - 1}$ is monotonically decreasing in $q$ and $\frac{1}{2}+\frac{1}{2\beta_p}+\frac{q}{2}-\frac{q}{2\beta_p}$ is monotonically increasing in $q$. If we choose $q = \frac{1 - \beta_p^2}{\beta_p^2 + 1}$, then
\begin{equation*}
-\frac{q}{\beta_p - 1} = \frac{1}{2}+\frac{1}{2\beta_p}+\frac{q}{2}-\frac{q}{2\beta_p} = \frac{\beta_p + 1}{\beta_p^2 + 1}.
\end{equation*}

Therefore, with this choice of $q$, we have
\begin{equation*}
r_{1:T} = \mathcal{O}\left(T^{\frac{\beta_p + 1}{\beta_p^2 + 1}}\ln\left(T\right)\right).
\end{equation*}
\end{proof}

Consider the the Mat\'{e}rn kernel, where $\beta_p = \frac{2\nu + d}{d}$. If we choose $\alpha \propto T^{-\frac{2\nu^2 + 2\nu d}{2\nu^2 + 2\nu d + d^2}}$, then the regret bound for the noiseless case is
\begin{equation*}
r_{1:T} = \mathcal{O}\left(T^{\frac{\nu d + d^2}{2\nu^2 + 2\nu d + d^2}}\ln\left(T\right)\right).
\end{equation*}

Whenever $\nu > d/\sqrt{2}$, the growth rate is slower than $\sqrt{T}$. Now, suppose that the kernel satisfies the exponential eigendecay condition. In the noiseless setting, we can prove the following regret bound.

\begin{proposition}[Regret Bound for Exponential Decay in the Noiseless Setting]
Suppose that $k$ satisfies the exponential eigendecay condition, and let $\alpha \propto 1$. For any $T \geq 1$, the cumulative regret of Kernel CMM-UCB, Kernel DMM-UCB (using a grid $A$ containing this $\alpha$) and Kernel AMM-UCB (using this $\alpha$) satisfies
\begin{equation*}
r_{1:T} = \mathcal{O}\left(\ln\left(T\right)^{1 + \frac{1}{\beta_e}}\right).
\end{equation*}
\label{pro:exp_regret_noiseless}
\end{proposition}

\begin{proof}
We're interested in the case where $\alpha \propto T^{q}$ for some $-1 \leq q \leq 0$. In this regime, the bounds in Lemma \ref{lem:info_gain} and Lemma \ref{lem:ep_exp} tell us that
\begin{equation*}
\tau_T(\alpha) = \mathcal{O}\left(\ln\left(\tfrac{1}{\alpha}\right)^{1 + \frac{1}{\beta_e}}\right), \quad \gamma_T(\alpha) = \mathcal{O}\left(\ln\left(\tfrac{T}{\alpha}\right)^{1 + \frac{1}{\beta_e}}\right).
\end{equation*}

This means that Eq. (\ref{eqn:noiseless_regret}) becomes
\begin{equation*}
r_{1:T} = \mathcal{O}\left(\ln\left(\tfrac{1}{\alpha}\right)^{1 + \frac{1}{\beta_e}} + T^{\frac{1}{2}}\alpha^{\frac{1}{2}}\ln\left(\tfrac{T}{\alpha}\right)^{\frac{1}{2} + \frac{1}{2\beta_e}}\right).
\end{equation*}

If we choose $\alpha = 1/T$, then
\begin{equation*}
r_{1:T} = \mathcal{O}\left(\ln\left(T\right)^{1 + \frac{1}{\beta_e}}\right).
\end{equation*}
\end{proof}

For the RBF kernel, where $\beta_e = 1/d$, we get
\begin{equation*}
r_{1:T} = \mathcal{O}\left(\ln\left(T\right)^{d + 1}\right).
\end{equation*}

\section{Additional Practical Details}

In this section, we comment on some practical details to aid with reproducibility.

\subsection{Sample CVXPY Code For Solving (\ref{eqn:ucb_cone_prog})}
\label{app:cvxpy_code}

In Fig. \ref{fig:cvxpy}, we provide some sample CVXPY code for computing the solution of the second order cone program in (\ref{eqn:ucb_cone_prog}). \texttt{\_w} is the optimisation variable $\bs{w} \in \mathbb{R}^{t+1}$; \texttt{k\_t1} is $\bs{k}_{t+1}(x)$; \texttt{k\_tt1} is the $t \times (t+1)$ kernel matrix $\bs{K}_{t,t+1}$; \texttt{y\_t} is the reward vector $\bs{y}_t$; \texttt{R\_t} is the radius $R_t$; \texttt{B} is the norm bound $B$; \texttt{l\_t1} is the right Cholesky factor $\bs{L}_{t+1}$ of the $(t+1) \times (t+1)$ kernel matrix $\bs{K}_{t+1}$ (so \texttt{l\_t1} is an upper triangular matrix).

\begin{figure}[H]
\begin{python}
import cvxpy as cp

_w = cp.Variable(t+1)
obj = k_t1.T @ _w
cons = [cp.norm(k_tt1 @ _w - y_t) <= R_t,
       cp.norm(l_t1 @ _w) <= B]
prob = cp.Problem(cp.Maximize(obj), cons)
ucb = prob.solve()
\end{python}
\caption{Python code for solving (\ref{eqn:ucb_cone_prog}) with CVXPY.}
\label{fig:cvxpy}
\end{figure}

The variable \texttt{ucb} will now be equal to the numerical solution of $\max_{f \in \mathcal{F}_t}\{f(x)\}$. In practice, we find that it is helpful to add a small multiple of the identity matrix to the kernel matrix $\bs{K}_{t+1}$ before computing the Cholesky decomposition. The reason is that, due to numerical error, small positive eigenvalues of $\bs{K}_{t+1}$ can appear to be negative. One could avoid the necessity of computing the Cholesky decomposition of $\bs{K}_{t+1}$ by expressing the second constraint as $\bs{w}^{\top}\bs{K}_{t+1}\bs{w} \leq B^2$ (which is still convex in $\bs{w}$). However, the conic form of (\ref{eqn:ucb_cone_prog}) (with norm constraints) is favourable for the conic solvers used by CVXPY.

\subsection{Recursive Updates}

For a more efficient implementation of our Kernel CMM-UCB, Kernel DMM-UCB and Kernel AMM-UCB algorithms, the inverse $(\bs{K}_t + \alpha\bs{I})^{-1}$, the log determinant $\ln(\det(\frac{1}{\alpha}\bs{K}_t + \bs{I}))$ and the right Cholesky factor $\bs{L}_t$ can be updated using the following recursive formulas. These update rules can be derived using properties of the Schur complement \citep{zhang2006schur}, and have been used before in other kernel bandit algorithms (see e.g. App. F of \citep{chowdhury2017kernelized}).

For the inverse, we have
\begin{align*}
K_{22} &= \frac{1}{k(x_{t+1}, x_{t+1}) + \alpha - \bs{k}_t(x_{t+1})^{\top}(\bs{K}_{t} + \alpha\bs{I})^{-1}\bs{k}_t(x_{t+1})},\\
K_{11} &= (\bs{K}_t + \alpha\bs{I})^{-1} + K_{22}\bs{k}_t(x_{t+1})^{\top}(\bs{K}_{t} + \alpha\bs{I})^{-1}\bs{k}_t(x_{t+1}),\\
K_{12} &= -K_{22}(\bs{K}_{t} + \alpha\bs{I})^{-1}\bs{k}_t(x_{t+1}),\\
K_{21} &= -K_{22}\bs{k}_t(x_{t+1})^{\top}(\bs{K}_{t} + \alpha\bs{I})^{-1},\\
(\bs{K}_{t+1} + \alpha\bs{I})^{-1} &= \left[\begin{array}{c|c} K_{11} & K_{12}\\ \hline K_{21} & K_{22} \end{array}\right].
\end{align*}

For the log determinant, we have
\begin{equation*}
\ln\left(\det\left(\frac{1}{\alpha}\bs{K}_{t+1} + \bs{I}\right)\right) = \ln\left(\det\left(\frac{1}{\alpha}\bs{K}_t + \bs{I}\right)\right) + \ln\left(1 + \frac{k(x_{t+1}, x_{t+1}) - \bs{k}_t(x_{t+1})^{\top}(\bs{K}_{t} + \alpha\bs{I})^{-1}\bs{k}_t(x_{t+1})}{\alpha}\right).
\end{equation*}

For the right Cholesky factor, we have
\begin{equation*}
\bs{L}_{t+1} = \left[\begin{array}{c|c} \bs{L}_t & \bs{k}_{t}(x_{t+1})^{\top}\bs{L}_t^{-1}\\ \hline \bs{0} & \sqrt{k(x_{t+1}, x_{t+1}) - \Vert\bs{k}_{t}(x_{t+1})^{\top}\bs{L}_t^{-1}\Vert_2^2} \end{array}\right].
\end{equation*}

Note that the vector $\bs{k}_{t}(x_{t+1})^{\top}\bs{L}_t^{-1}$ is the transpose of $\bs{v}$, where $\bs{v}$ is the solution of $\bs{L}_t^{\top}\bs{v} = \bs{k}_t(x_{t+1})$. $\bs{v}$ can be computed using numerical methods for solving lower triangular systems (see e.g. \texttt{linalg.triangular\_solve} in the SciPy library), so it is not necessary to invert $\bs{L}_t$.

\section{Additional Experimental Results}

In this section, we present some additional experimental results.

\subsection{Time Per Step}
\label{app:time_per_step}

We run each method in a synthetic kernel bandit problem, as described in Sec.\ \ref{sec:experiments}. We use the Mat\'{e}rn kernel with smoothness $\nu = 5/2$ and length-scale $\ell = 0.5$.

\begin{figure}[H]
\centering
\includegraphics[width=0.8\textwidth]{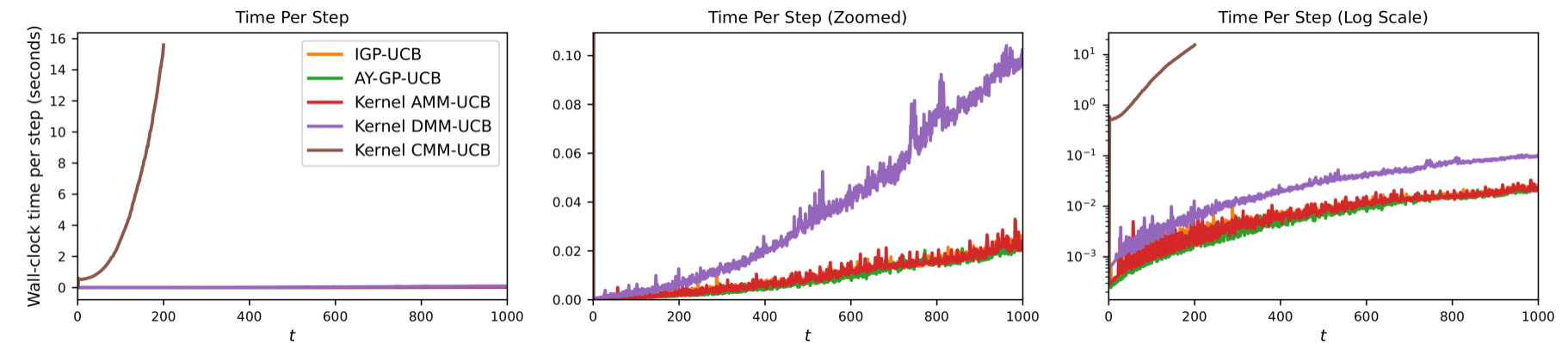}
\caption{Wall-clock time per step for our kernel UCB-style algorithms as well as AY-GP-UCB, IGP-UCB over $T=1000$ rounds with: (left) original $y$-axis; (middle) zoomed-in $y$-axis; (right) log scale $y$-axis. We show the mean over 10 repetitions.}
\label{fig:time_per_step}
\end{figure}

We find that Kernel DMM-UCB and Kernel AMM-UCB have a similar computational cost to AY-GP-UCB and IGP-UCB, whereas Kernel CMM-UCB has a much greater computational cost.

\subsection{2-Dimensional and 4-Dimensional Actions}
\label{app:2d_and_4d}

We display the cumulative regret of each method in the synthetic kernel bandit problems described in Sec.\ \ref{sec:experiments}. Here, the action set $\mathcal{X}_t$ in each round $t$ consists of 100 random vectors in the 2-dimensional hypercube $[0,1]^2$ (Fig. \ref{fig:regret_2d}) or the 4-dimensional hypercube $[0,1]^4$ (Fig. \ref{fig:regret_4d}).

\begin{figure}[H]
\centering
\includegraphics[width=0.8\textwidth]{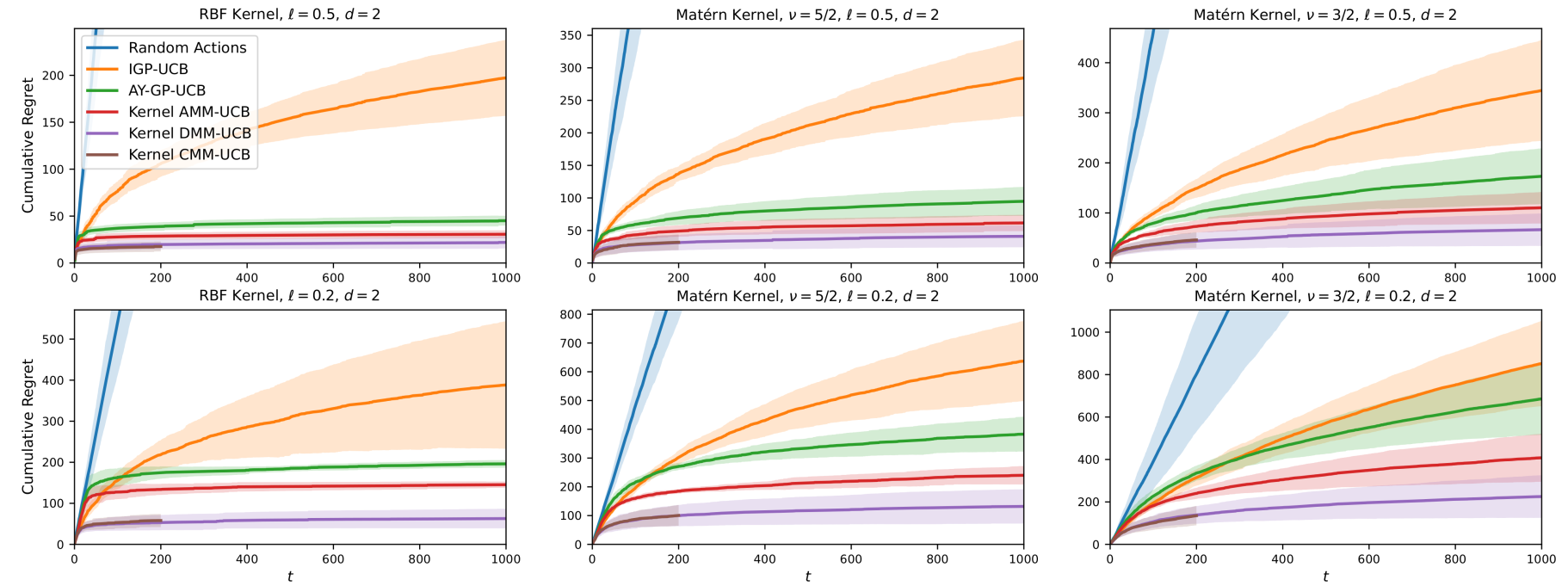}
\caption{Cumulative regret of our KernelUCB-style algorithms as well as AY-GP-UCB, IGP-UCB and a random baseline over $T=1000$ rounds for $d=2$ and various kernels (columns) and length scales (rows). We show the mean $\pm$ standard deviation over 10 repetitions.}
\label{fig:regret_2d}
\end{figure}

In Fig. \ref{fig:regret_2d}, we observe that our Kernel CMM-UCB and Kernel DMM-UCB algorithms achieve the lowest cumulative regret, followed by Kernel AMM-UCB and the AY-GP-UCB and IGP-UCB. All methods performed noticeably better with 2-dimensional action sets than with 3-dimensional action sets (see Fig. \ref{fig:regret}).

\begin{figure}[H]
\centering
\includegraphics[width=0.8\textwidth]{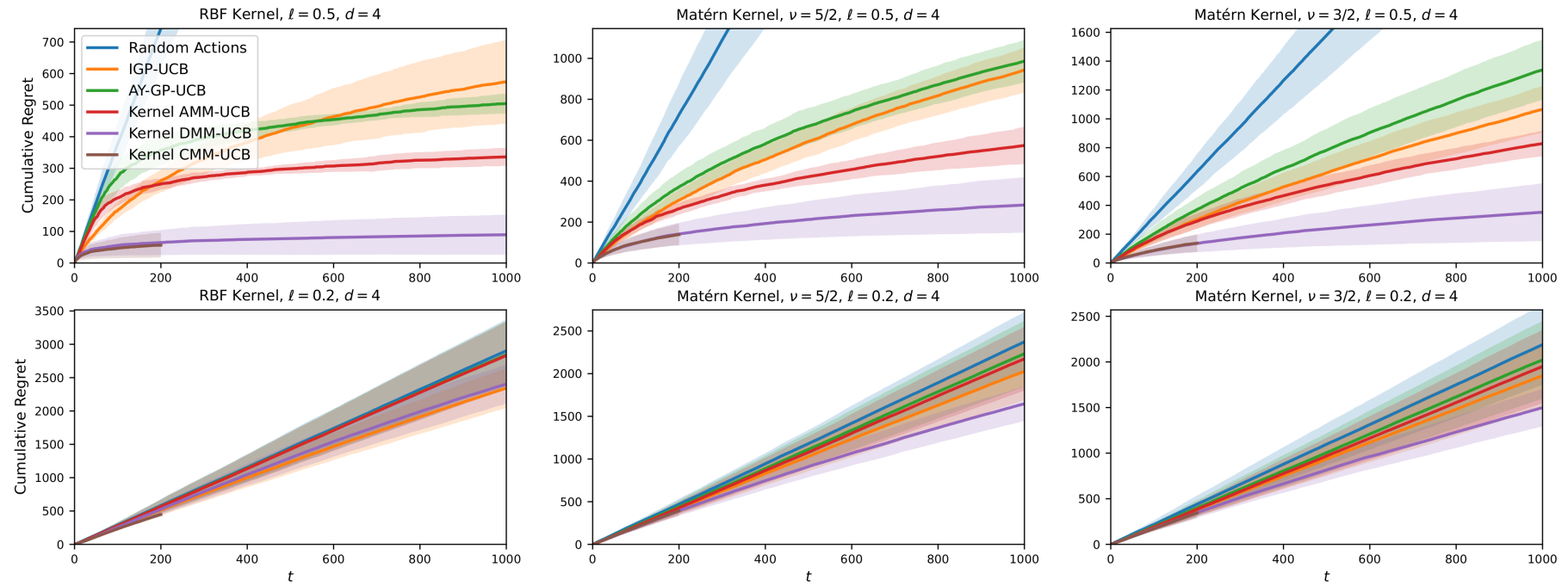}
\caption{Cumulative regret of our KernelUCB-style algorithms as well as AY-GP-UCB, IGP-UCB and a random baseline over $T=1000$ rounds for $d=4$ and various kernels (columns) and length scales (rows). We show the mean $\pm$ standard deviation over 10 repetitions.}
\label{fig:regret_4d}
\end{figure}

In Fig. \ref{fig:regret_4d}, we observe a similar drop in the performance of all methods when moving up to 4-dimensional action sets. In the top row of Fig. \ref{fig:regret_4d}, in which the kernel length-scale is $\ell = 0.5$, we observe that each method still achieves much lower cumulative regret than the random baseline, and Kernel CMM-UCB and Kernel DMM-UCB algorithms still achieve the lower cumulative regret. However, in bottom row of Fig. \ref{fig:regret_4d}, we see that no method performs much better than the random baseline.

\subsection{Loose Upper Bounds on $B$ and $\sigma$}

\begin{figure}[H]
\centering
\includegraphics[width=0.8\textwidth]{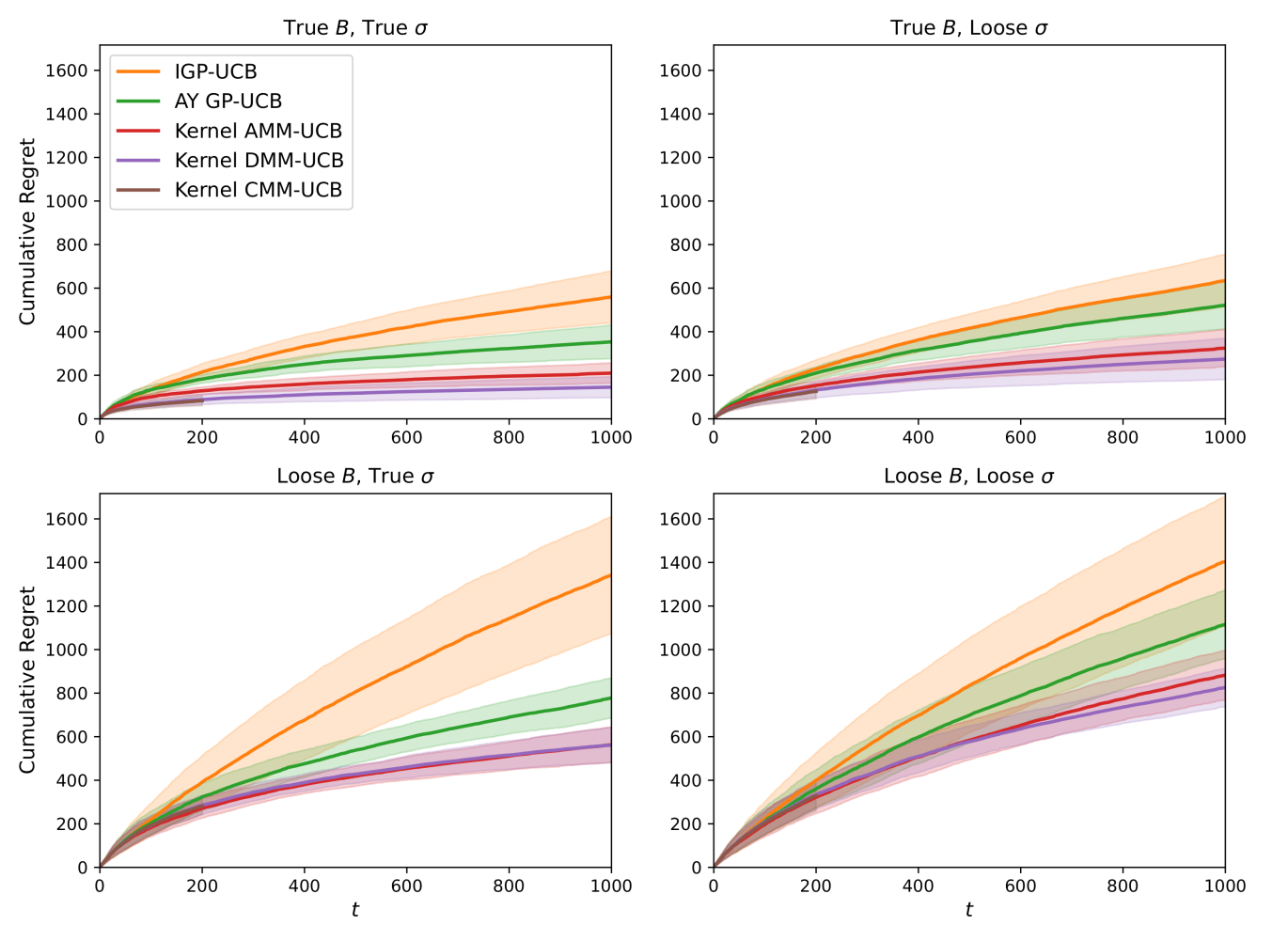}
\caption{Cumulative regret of our kernel UCB-style algorithms as well as AY-GP-UCB, IGP-UCB and a random baseline after $T=1000$ rounds for various kernels (columns) and length scales (rows). The action sets are subsets of $[0,1]^4$. We show the mean $\pm$ standard deviation over 10 repetitions.}
\label{fig:loose_bsig}
\end{figure}

To run each kernel bandit algorithm, we only need to know upper bounds on the sub-Gaussian parameter $\sigma$ and the bound $B$ on the RKHS norm of the reward function $f^{\star}$. We now investigate what happens when the known bounds on one or both of these quantities are loose.

We run each algorithm in the same synthetic kernel bandit problem as described in Sec.\ \ref{sec:experiments}, where the kernel is a Mat\'{e}rn kernel with smoothness $\nu = 5/2$ and length-scale $\ell = 0.5$, and the actions are 3-dimensional vectors in $[0,1]^3$. As before, the noise variable are $\sigma$-sub-Gaussian with $\sigma = 0.1$ and $\norm{f^{\star}} = 10$. We run each algorithm with all four pairs of values of $(\sigma, B)$, where $\sigma \in \{0.1, 0.2\}$ and $B \in \{10, 20\}$.

Fig. \ref{fig:loose_bsig} shows the cumulative regret of each method and with each combination of correct or loose $\sigma$ and $B$. We observe that using a loose upper bound on $\sigma$ has a relatively small effect on each method, whereas a loose bound on $B$ can cause the cumulative regret of each method to grow considerably.
\end{document}